\newtheorem{theorem}{Theorem}[section]
\newtheorem{proposition}{Proposition}[theorem]
\newtheorem{lemma}{Lemma}[theorem]
\newtheorem{definition}{Definition}[theorem]
\newtheorem{remark}{Remark}
\newcommand{\Xext}{\mathbb{X}_{\mathrm{ext}}}
\begin{document}
%
\title{Control Lyapunov Functions for Compliant Hybrid Zero Dynamic Walking}

%
%

\author{Jenna Reher and 
        Aaron D. Ames
\thanks{Jenna Reher is with the Department of Mechanical and Civil Engineering, California Institute of Technology, Pasadena, CA 91125, {\tt\small jreher@caltech.edu}.}
\thanks{Aaron D. Ames is with the Department of Mechanical and Civil Engineering, California Institute of Technology, Pasadena, CA 91125, {\tt\small ames@caltech.edu}.}
}

%
%

\markboth{IEEE TRANSACTIONS ON ROBOTICS, VOL.~XX, NO.~XX, MONTH~2021
}%
{Reher \MakeLowercase{\textit{et al.}}: Control Lyapunov Functions for Compliant Hybrid Zero Dynamic Walking}
%



\maketitle

\begin{abstract}
    The ability to realize nonlinear controllers with formal guarantees on dynamic robotic systems has the potential to enable more complex robotic behaviors---yet, realizing these controllers is often practically challenging.
    %
    To address this challenge, this paper presents the end-to-end realization of dynamic bipedal locomotion on an underactuated bipedal robot via hybrid zero dynamics and control Lyapunov functions. 
    A compliant model of Cassie is represented as a hybrid system to set the stage for a trajectory optimization framework. 
    With the goal of 
    achieving 
    a variety of walking speeds in all directions, a library of compliant walking motions is compiled and then parameterized for efficient use within real-time controllers. 
    Control Lyapunov functions, which have strong theoretic guarantees, are synthesized to leverage the gait library and coupled with inverse dynamics to obtain optimization-based controllers framed as quadratic programs. 
    %
    It is proven that this controller provably achieves stable locomotion; this is coupled with a theoretic analysis demonstrating useful properties of the controller for tuning and implementation. 
    The proposed theoretic framework is practically demonstrated on the Cassie robot, wherein 3D walking is achieved through the use of optimization-based torque control.
    %
    The experiments highlight robotic walking at different speeds and terrains, illustrating the end-to-end realization of theoretically justified nonlinear controllers on dynamic underactuated robotic systems. 
\end{abstract}

\begin{IEEEkeywords}
hybrid systems, zero dynamics, control Lyapunov functions, inverse dynamics
\end{IEEEkeywords}

\IEEEpeerreviewmaketitle

\section{Introduction} \label{sec:introduction}

Robotic bipedal locomotion has 
has made impressive strides in recent years 
as humans increasingly look to augment their natural environments with intelligent machines.
In order for bipedal robots to navigate the often unstructured environments of the world and perform tasks, they must first have the capability to dynamically, reliably, and efficiently locomote. 
However, due to the inherently hybrid and underactuated nature of dynamic bipedal walking, achieving experimental success is a delicate balance between developing accurate locomotion models, trajectory planners, and feedback control while maintaining computationally tractable implementations. 
The objective of this work is then to 
develop experimentally realizable optimization-based controllers
that can leverage the full-body dynamics of a robotic platform, including compliance and underactuation, to stabilize dynamic locomotion.

\begin{figure}[!t]
    \centering
    \includegraphics[width=1\columnwidth]{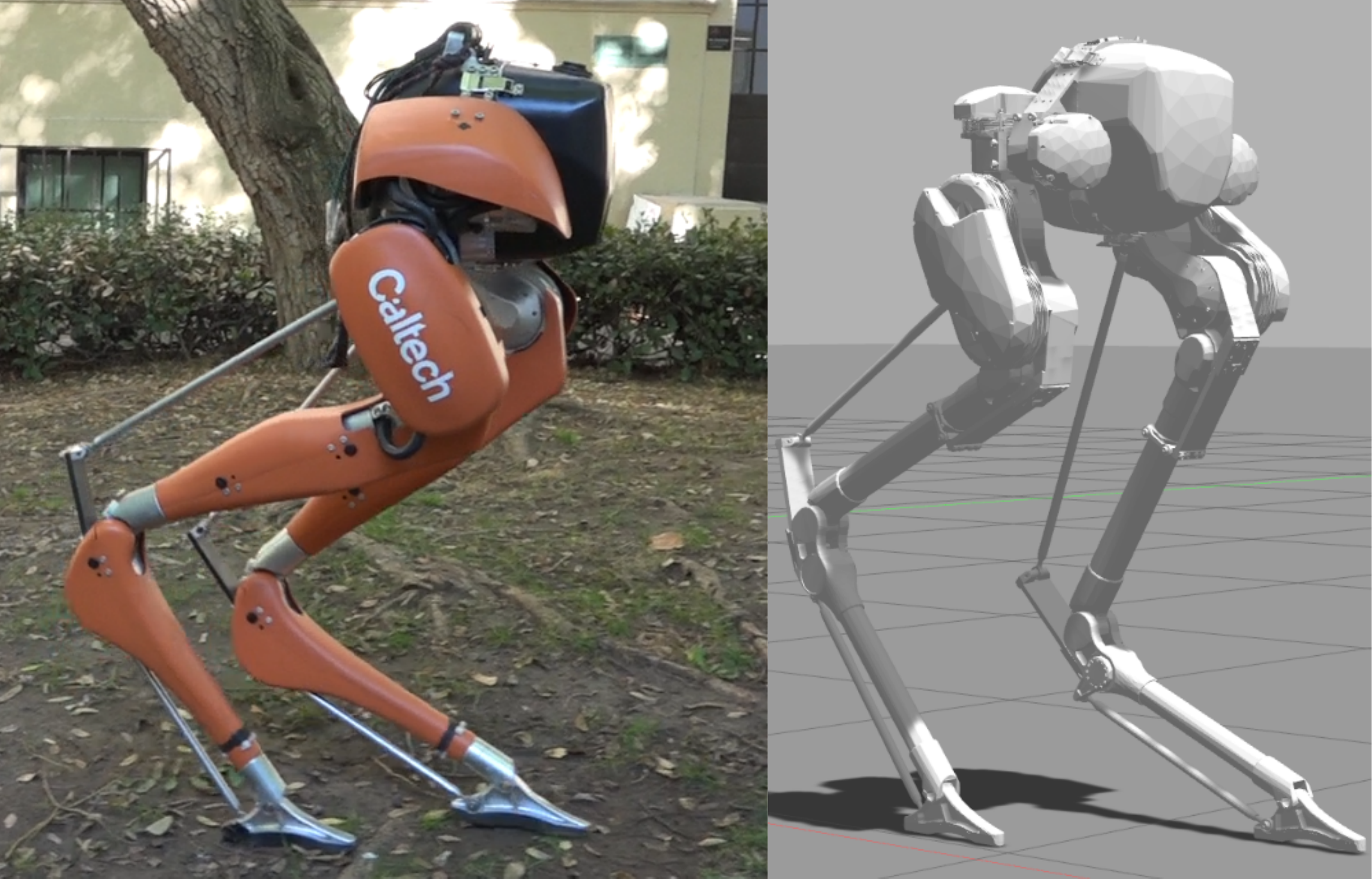}
    \caption{The Caltech Cassie biped walking outdoors and in a Gazebo simulation while using a version of the optimization-based \eqref{eq:id-clf-qp-plus} controller.}
    \vspace{-5mm}
    \label{fig:cover}
\end{figure}

To address the difficult nature of full-body locomotion planning, 
a significant subset of the bipedal robotics literature mitigates the complexity of humanoids and bipeds by viewing walking as a problem wherein the real world dynamics are assumed to be governed by the evolution of a simpler system, such as a LIP models (Linear Inverted Pendulum \cite{kajita2003biped, gong2020angular}), SLIP models (Spring Loaded Inverted Pendulum \cite{rezazadeh2015spring}), and the ZMP (Zero Moment Point \cite{vukobratovic2004zero}). 
These methods can reduce computational complexity for fast planning and experimental success. 
Despite its viability in practical implementation, this local representation of the system can limit the agility of behaviors and compromise energy efficiency, and may require additional optimization to ensure viable walking \cite{Dalibardwhole}.  
In order to realize these reduced-order behaviors on actual robots, the motion must be transcribed into the full-order dynamics typically through inverse kinematics, or inverse dynamics  
to compute control inputs at each instant \cite{kajita2003biped}. 

Model-based torque control methods 
can help enable dynamic and compliant motion of robots while achieving remarkable control performance. However, implementing such techniques on floating base robots is non-trivial due to model inaccuracy, phases of underactuation, dynamically changing contact constraints, and possibly conflicting objectives for the robot \cite{ames2013towards}. Unlike their classical counterparts, optimization-based approaches of handling these control problems allow for the inclusion of physical constraints that the system is subject to \cite{posa2014direct, betts2002practical}. Partially as a consequence of this feature, quadratic programming (QP) based controllers have been increasingly used to stabilize real-world systems on complex robotic platforms without the need to algebraically produce a control law or enforce convergence guarantees \cite{koolen2016design, herzog2016momentum, feng2015optimization}.

Inverse dynamics is a widely used method to approach model-based controller design for achieving a variety of motions and force interactions, typically in the form of task-space objectives.  Given a target behavior, the dynamics of the robotic system are inverted to obtain the desired torques. In most formulations, the system dynamics are mapped onto a support-consistent manifold using methods such as the dynamically consistent support null-space \cite{sentis2007synthesis}, linear projection \cite{aghili2005unified}, and orthogonal projection \cite{mistry2010inverse}. When prescribing behaviors in terms of purely task space objectives, this is commonly referred to as task- or operational-space control (OSC) \cite{khatib1987unified}.  In recent work, variations of these approaches have been shown to allow for high-level tasks to be encoded with intuitive constraints and costs in optimization-based controllers \cite{apgar2018fast, kuindersma2016optimization, feng2015optimization, koolen2016design, herzog2016momentum}.
If a plan wasn't designed for the full-order system, solving such inverse problems does not imply feasibility of future inverse problems in the trajectory \cite{zucker2015general}. 

One area of model-based planning and control which is particularly difficult to directly address is passive compliance in locomotion. 
Some of the earliest inclusions of compliant hardware on bipedal robots was with spring flamingo and spring turkey \cite{hunter1991comparative}, with more recent examples being MABEL \cite{park2011identification}, DURUS \cite{reher2016durusmulticontact}, and ATRIAS \cite{rezazadeh2015spring}. 
One of the latest robots available to researchers exhibiting compliant leg structures is the Cassie biped (shown in \figref{fig:cover}), which is the experimental platform considered in this work. 
From a mathematical standpoint, compliance can increase numerical stiffness and model uncertainty, and can make finding walking behaviors that satisfy stability constraints more difficult. 

The Hybrid Zero Dynamics (HZD) framework \cite{westervelt2018feedback} has demonstrated success in developing controllers for highly underactuated walking behaviors while considering underactuation on the full-order robotic system. 
The basis of the HZD approach is the restriction of the full-order dynamics of the robot to a lower-dimensional attractive and invariant subset of its state space, the \emph{zero dynamics surface}, via outputs that characterize this surface.
If these outputs are driven to zero, then the closed-loop dynamics of the robot are described by a lower-dimensional dynamical system that can be ``shaped'' to obtain stability. 
In the context of robotic implementations, HZD has enabled a wide variety of dynamic behaviors such as multicontact humanoid walking \cite{reher2016durusmulticontact}, compliant running \cite{sreenath2013embedding}, 3D bipedal walking with point-feet \cite{ramezani2014performance}, and locomotion on a variety of planar walking robots \cite{chevallereau2003rabbit,grizzle2009mabel,ramezani2014performance}.
%
To date, the design of controllers to render a stable zero dynamics manifold have most often been tied to feedback linearization of the transverse dynamics. 
It was shown in \cite{ames2014rapidly} that through the use of a class of control Lyapunov functions (CLF)s a wide class of controllers can be designed to create rapidly exponentially convergent hybrid periodic orbits for bipeds \cite{westervelt2018feedback, grizzle2014models}. 
It was also shown that CLFs can be posed as a QP, where convergence is enforced as an inequality \cite{ames2013towards,ames2014rapidly}. 

To enjoy the theoretic guarantees enjoyed by CLFs used in the context of HZD, sufficiently fast convergence is needed.  Model-based torque controllers often can't produce sufficient convergence, especially in the unmodeled dynamics---which is especially acute for compliant systems.  
One approach to address this conflict is to relax convergence guarantees, which allows (local) drift in the control objectives to accommodate feasibility. Utilizing this heuristic, CLFs have since been used to achieve dynamic locomotion on robotic systems both in simulation \cite{nguyen2015optimal, hereid2014embedding, xiong2018coupling} and only in very limited cases on hardware for planar robots \cite{galloway2015torque}. 
While high level task-space controllers based on inverse dynamics approaches pose similar problems as CLF-QPs, they have traditionally not been formulated in the same way. In implementations of CLF-QPs the vector fields associated with robotic systems are typically utilized, which can involve costly computations. Alternatively, in task based controllers, the dynamics are an equality constraint which are affine with respect to the system accelerations, inputs, and constraint forces. For these task-based QP controllers, objectives are driven towards their targets through design of the cost function such as PD feedback on the output dynamics \cite{feng2015optimization} or an LQR control cost \cite{kuindersma2014efficiently, kuindersma2016optimization}. 
\vspace{-5mm}

\begin{figure*}[t!]
	\centering
	\includegraphics[width= 1\textwidth]{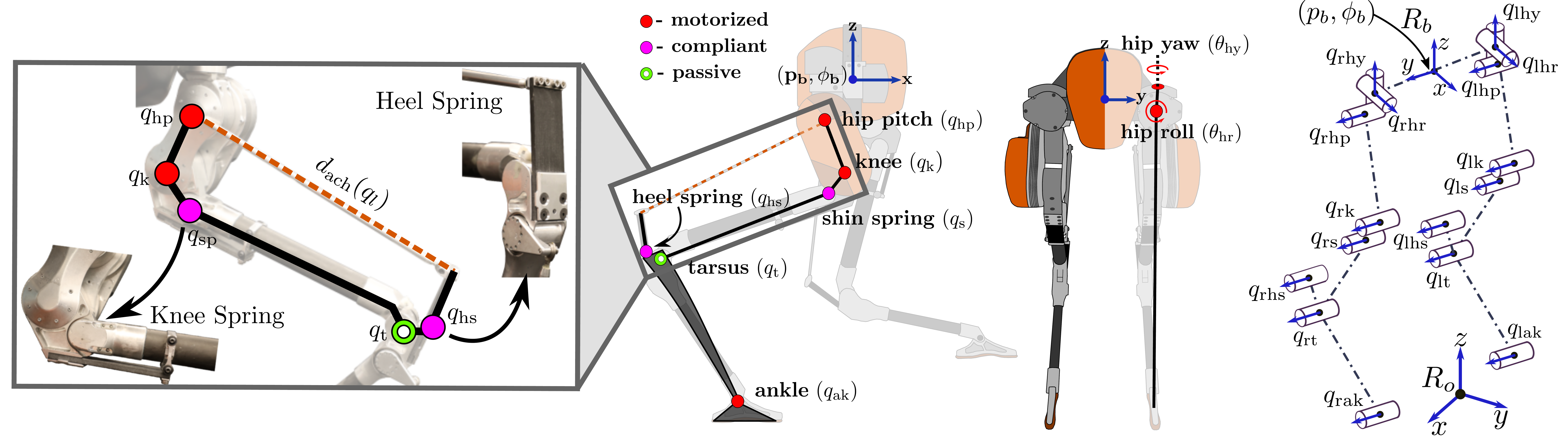}
	\caption{The configuration coordinates of the Cassie robot, showing the compliant leg mechanism and associated springs, motors, and passive joints. 
    }
	\label{fig:cassie_configuration}
\end{figure*}

\subsection*{Contributions}
%
This paper develops a novel instantiation of model-based CLF-QP controller, theoretically establishes stability property and practically demonstrates the methodology experimentally to achieve dynamic 3D underactuated walking. 
This will be realized for HZD walking over a range of walking speeds through the use of a motion library built from a series of HZD trajectory optimizations that can accurately capture passive compliance. 
To achieve these results, this work takes inspiration from, and unifies with, exiting formulations in the area of bipedal locomotion that have proven successful in practice. 
The key results developed in this work are summarized below:
\begin{itemize}
    \item A compliant locomotion model for 
    underactuated 3D bipedal robots 
    is developed and then shown to accurately capture the passive compliant dynamics experimentally on Cassie. 
    The resulting hybrid system model is used to develop a motion library of HZD walking behaviors that leverage a robots full-body dynamics including its compliance. 
    While motion libraries for sagittal motions under the assumption of sufficient rigidity have been realized on Cassie \cite{gong2019feedback} and other robots \cite{da2017supervised}, this work and the preliminary experiments in \cite{reher2020inverse} are the first to consider a \emph{compliant motion library} for the robot.
    %
    %
    \item 
    A novel nonlinear optimization-based controller is presented, the ID-CLF-QP, posing CLFs in a inverse dynamics formulation.  We establish that this class of controller provably yields stable walking.   We demonstrate the benefits of this approach, which introduces additional decision variables to the traditional \eqref{eq:clf-qp} in order to pose an inverse dynamics problem with a CLF convergence constraint. 
    %
    \item 
    The formal controller formulated in this paper can relaxed to enable its robust implementation on hardware. This is implemented on Cassie via torque control, and utilizing the compliant motion library.  
    The resulting experiments demonstrate the \emph{first successful experimental realization of a CLF controller on a 3D biped} in the literature. 
\end{itemize}

The contributions of this paper result in a 
model-based controller 
that is theoretically and practically able to \textit{leverage the compliance} of Cassie \textit{for all motion primitives} in the library. This is demonstrated experimentally on Cassie, with all source code made available for the optimization, controller, and simulation on Github \cite{papergithub}. Locomotion on Cassie is demonstrated over a wide-range of speeds on different terrains, from flat ground to up and down slopes, to over grass and roots.  

\section{Robotic Model} \label{sec:robotmodel}
We begin by introducing hybrid system models of bipedal robots. This includes developing the unpinned continuous dynamics and the discrete dynamics that occur at foot strike. While these concepts are general, we will illustrate them constructively with Cassie to root them practical application. 

The Cassie biped is an approximately one meter tall walking robot designed and manufactured by Agility Robotics. 
The design of the robot encompasses the physical attributes of the spring loaded inverted pendulum (SLIP) model dynamics, with the primary characteristic being a pair of light-weight legs with a heavy torso so that the system is approximated by a point-mass with virtual springy legs. 
On Cassie, a \textit{compliant multi-link mechanism} is used to transfer power from higher to lower limbs without allocating the actuators' weight onto the lower limbs, and effectively acts as a pair of springy legs through which the knee motors effectively drive the leg length. 
The sensing on the robot is entirely proprioceptive and includes an IMU, torque sensing, and absolute encoders. 
There are $10$ brushless DC motors controlling the joints through $8$ low-friction cycloidal gearboxes and $2$ harmonic gearboxes. 


\newsec{Generalized Coordinates.}
Assuming $\WorldFrame$ be a fixed world frame and $\BaseFrame$ be a body frame attached to the pelvis of the robot located at the center of the hip, then the Cartesian position $\BasePos = (\BasePos^x, \BasePos^y, \BasePos^z) \in \R^3$ and the orientation $\BaseRot = (\BaseRot^x, \BaseRot^y, \BaseRot^z) \in SO(3)$ of $\BaseFrame$ with respect to $\WorldFrame$, composes the floating base coordinates. 

The configuration of Cassie, as illustrated in \figref{fig:cassie_configuration}, consists of two kinematic chains: left leg joints, $q_{lleg}= [q_{lhr}, q_{lhy}, q_{lhp}, q_{lk}, q_{ls}, q_{lt}, q_{lhs}, q_{lak}]^T$, and right leg joints, $q_{rleg}= [q_{rhr}, q_{rhy}, q_{rhp}, q_{rk}, q_-{rs}, q_{rt}, q_{rhs}, q_{rak}]^T$, given as the hip roll, hip yaw, hip pitch, knee pitch, shin spring, tarsus pitch, heel spring, and ankle pitch joints, respectively. 
The actuated joints are symmetric for both legs, shown in \figref{fig:cassie_configuration}, and correspond to $q_{hr}, q_{hy}, q_{hp}, q_{k}$, and $q_{ak}$. As will be shown in the robot dynamics, there are also four passive compliant springs at the $q_{s}$ and $q_{hs}$ joints on each leg.
The configuration space $\ConfigSpace$ is given in the generalized coordinates:
\begin{equation*}
  q = (\BasePos, \BaseRot, q_l) \in \ConfigSpace = \R^3 \times
  SO(3) \times \BodyConfigSpace,
\end{equation*}
where $q_l$ is the coordinates of body configuration space $\BodyConfigSpace$ determined by $q_l = (\q_{lleg}, \q_{rleg}) \in \mathcal{Q}_l$.

\subsection{Continuous Dynamics:}
We model legged robots as a tree structure composed of rigid links . As legged locomotion inherently involves intermittent sequences of rigid contacts with the environment, it is common practice to construct a floating-base Euler-Lagrange model of the robot dynamics:
\begin{align}
    \label{eq:eom}
    D(q) \ddot{q} + H(q,\dot{q}) &= B u + J_c(q)^T \lambda_c, 
\end{align}
where $B$ is the actuation matrix with gear reductions as its entries, $u \in U \subset \R^m$ is the control input, the Jacobian matrix of the holonomic constraint is $J_c(q)=\partial \eta_v/\partial q$ with its corresponding constraint wrenches $\lambda_c\in\R^{m_{\eta}}$. The mass-inertia matrix, $D(q) := D^R(q) + D_m$, includes the nominal inertia matrix for the rigid linkages, $D^R(q)$, and the reflected motor inertia matrix, $D_m$. Finally, the vector $H(q,\dot{q}) := C(q,\dot{q}) \dot{q} + G(q) - \kappa(q,\dot{q})$ contains the Coriolis matrix, $C(q,\dot{q})$,  gravity vector, $g(q)$, and spring forces $\kappa(q,\dot{q})$.  
%

In the order of the coordinates defined previously for Cassie in the robot configuration, the actuated joint reflected inertia are $\mathcal{I}_{m,l} = [1.435046, 1.435046, 1.435046, 1.44662, 1.44662]$ for each leg. 
The springs are modeled with a linear torsional stiffness, $k_{s} = 2,300$ and $k_{hs} = 2,000$ N/m, and damping, $b_{s} = 4.4$ and $b_{hs} = 4$ N/m/s, forming a vector of torsional generalized forces at the spring pivots:
\begin{align*}
    \kappa(q,\dot{q}) = [\mathbf{0}_{1\times 10}, \ &
                    k_s q_\mathrm{ls} + b_s \dot{q}_\mathrm{ls}, \ 0, \ k_\mathrm{hs} q_\mathrm{lhs} + b_\mathrm{hs} \dot{q}_\mathrm{lhs}, \\
                    &\hspace{-7mm} \mathbf{0}_{1\times 5}, \ k_s q_\mathrm{rs} + b_s \dot{q}_\mathrm{rs}, \ 0,\ k_\mathrm{hs} q_\mathrm{rhs} + b_\mathrm{hs} \dot{q}_\mathrm{rhs}, \ 0]^T,
\end{align*}
where $\mathbf{0}_{(\cdot) \times (\cdot)}$ is matrix of all zeros.

\subsection{Holonomic Constraints} 
Two types of holonomic constraints are commonly considered for legged robotic systems, external contact constraints depending on the current configuration of the robot and it's interactions with the world, and internal kinematic constraints resulting from the robot geometry. Both types of constraints are enforced in the same manner, by prescribing closure constraint on the kinematics, $\eta(q) = \mathrm{constant}$. Differentiating $\eta(q)$ once, we obtain a kinematic constraint on velocity:
\begin{align*}
    0 = \underbrace{\frac{\partial \eta_c(q)}{\partial q}}_{J_c(q)} \dot{q}.
\end{align*}
Differentiating once more yields an acceleration constraint:
\begin{align}
    0 = J_c(q) \ddot{q} + \underbrace{\frac{\partial}{\partial q} \left( \frac{\partial J_c(q)}{\partial q} \dot{q} \right)}_{\dot{J}_c(q,\dot{q})} \dot{q}. \label{eq:hol_accel}
\end{align}
The enforcement of this equality constraint gives rise to the corresponding force terms, $\lambda_c$, in the equations of motion \eqref{eq:eom}.

\begin{figure}[t!]
\centering
	\includegraphics[width= 1\columnwidth]{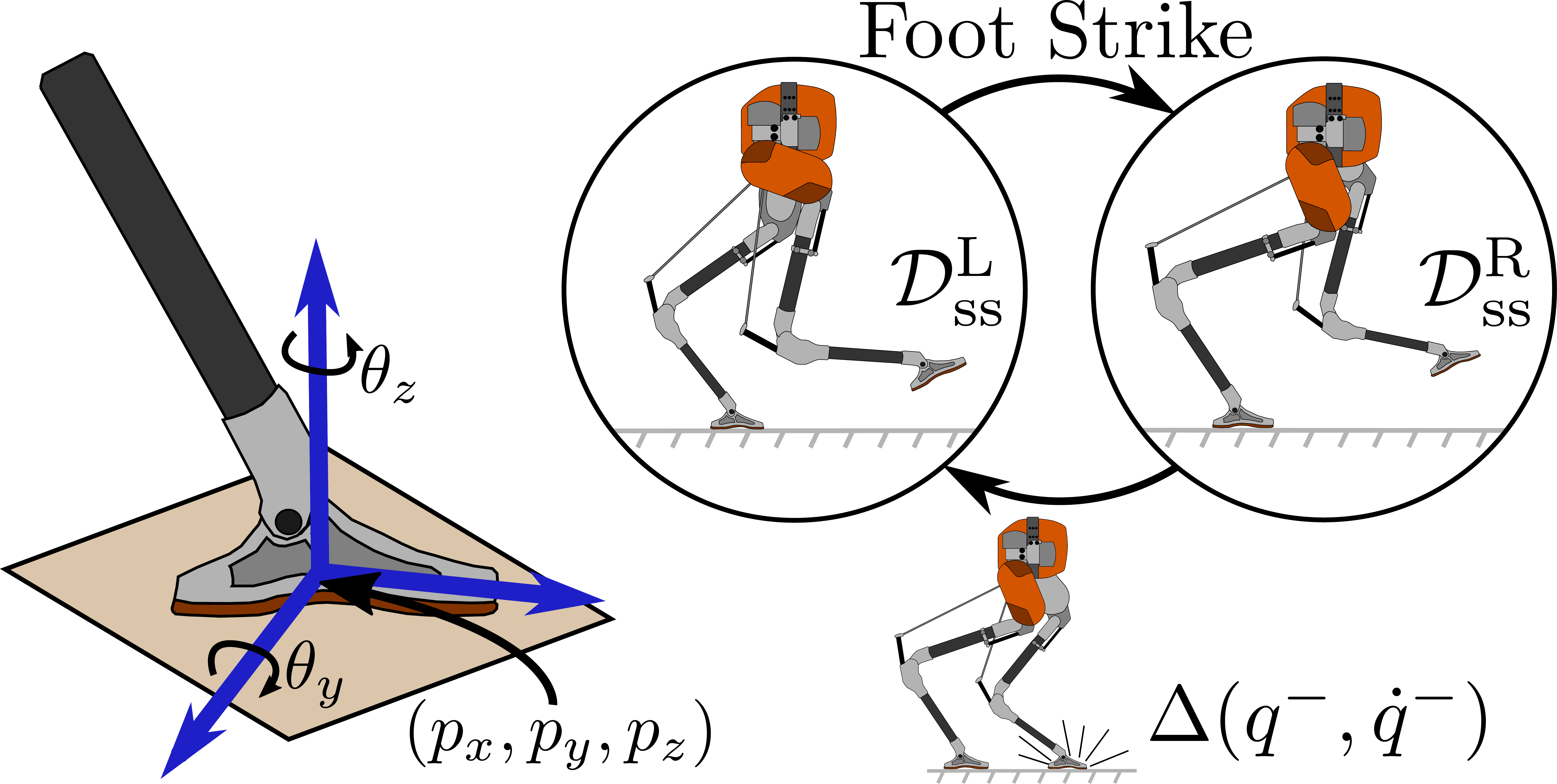}
	\caption{The directed graph of walking dynamics, where we view walking on Cassie as consisting of one domain with a compliant stance leg, and rigidly stiff swing leg. }
	\label{fig:direct}
\end{figure}

\newsec{Contact Constraints.} 
Contacts on the Cassie robot are enforced through a holonomic constraint on the stance foot's position and orientation, $\eta_{st}(q)$. 
Because the width of the feet on Cassie is negligible we enforce contact as a line, illustrated in \figref{fig:direct}, forming the $5$-DOF constraint:
\begin{align}
\label{eq:hol_contact}
    \eta_{st}(q)^{T} := \left[ p_{st}^x, p_{st}^y, p_{st}^z, \phi_{st}^y, \phi_{st}^z \right]^T,
\end{align}
where the first three components are the Cartesian position of the foot center and the last two correspond to the the foot pitch and yaw. The vertical Cartesian force, $\lambda_{st}^z$, is a normal force and thus is unilateral (i.e. $\lambda_{st}^z \geq 0$). Additionally, the tangential forces $\lambda_{st}^x$, $\lambda_{st}^y$ must satisfy friction models to remain physically feasible. Ideally, a classical \textit{Amontons-Coulomb model} of (dry) friction is used to avoid slippage and is represented as a \textit{friction cone}. For a friction coefficient $\mu$ and a surface normal, the space of valid reaction forces is:
\begin{align}
    \mathcal{C} = \left\{ \left. ( \lambda_{st}^x, \lambda_{st}^y, \lambda_{st}^z ) \in \R^3 \right| \sqrt{(\lambda_{st}^x)^2 + (\lambda_{st}^y)^2} \leq \mu \lambda_{st}^z \right\}. \label{eq:cone_friction}
\end{align}
However, this constraint is nonlinear, and cannot be implemented as a linear constraint. 
An alternative solution is to use a \textit{pyramidal friction cone} approximation \cite{grizzle2014models}:
\begin{align}
    \mathcal{P} = \left\{ \left. ( \lambda_x, \lambda_y, \lambda_z )\in \R^3 \right| |\lambda_x|, |\lambda_y| \leq \frac{\mu}{\sqrt{2}} \lambda_z \right\}. \label{eq:pyramid_friction}
\end{align}
This is a more conservative model than the friction cone, but is advantageous in that it is a linear inequality constraint.
Additionally, the moment associated with the foot pitch $\lambda_{st}^{my}$ can produce rotation of the foot over the forward edge if it is too large. It has been shown that due to the unilateral nature of contact this moment is limited by:
\begin{align}
\label{eq:footroll}
      -\frac{l}{2} \lambda_{st}^z <  &\lambda_{st}^{my} < \frac{l}{2} \lambda_{st}^z,
\end{align}
where $l$ is the length of the foot from heel to toe \cite{vucobratovic1990biped}.

\newsec{Kinematic Loop Constraints.} 
It is common practice to model bipedal robots as serial branched-tree structures. 
However, on Cassie, a compliant multi-bar mechanism forms a kinematic loop within the leg structure. 
When a mechanism has a kinematic loop, this is often managed by cutting the loop at one of the joints and enforcing a holonomic constraint at the connection to form the closed-chain manipulator. 
In the Cassie leg, the heel spring is attached to the rear of the tarsus linkage, with its end constrained via a pushrod affixed to the hip pitch linkage. For this work, we assume that pushrod attachment is a virtual holonomic distance constraint applied between the hip and heel spring connectors as:
\begin{align}
\label{eq:hol_ach}
    \eta_{ach}(q_l) := d(q_l) - 0.5012 = 0, 
\end{align}
where the attachment distance $d(q_l)\in\mathbb{R}$ is obtained via the forward kinematics between connectors at the hip and heel spring. This kinematic loop is illustrated in \figref{fig:cassie_configuration}, where the pushrod is ``virtual'' in our model and enforced via \eqref{eq:hol_accel}. We also assume that when a leg is in swing that the springs on that leg are rigidly fixed:
\begin{align}
    \eta_{sw}(q)^T := [q_s, q_{hs}]^T = 0. \label{eq:swing_rigid_constraint}
\end{align}
This simplifies both the optimization and control implementations \cite{sreenath2013embedding} that will be introduced in later sections, and makes the dynamics less numerically stiff.

\subsection{Hybrid Locomotion Model} \label{sec:hybrid_model}
Having already described the configuration of the robot, the coupled equations of motion obtained from \eqref{eq:eom} and \eqref{eq:hol_accel} can also be expressed as the nonlinear affine control system: \cite{grizzle2014models}:
\begin{equation}
    \dot{x} = f(x) + g(x)u,  \quad \mathrm{for} \quad x = (q^T,\dot{q}^T)^T. \label{eq:eom_nonlinear}
\end{equation}
While this ODE can describe the continuous dynamics of the walking robot, 
bipedal walking gaits consist of one or more different continuous
phases followed by discrete events that transition from one
phase to another. This motivates the use of a hybrid system formulation 
with a specific ordering of phases. Periodic robotic walking can then be understood as a 
directed cycle with a sequence of continuous
domains (continuous dynamics) and edges (changes in contacts).


In this work, we structure the dynamics of walking on Cassie in a hybrid fashion. 
The walking consists of two single support domains, $\mathcal{D}_\mathrm{SS}^{\{ \mathrm{L, R} \}}$, associated with stance on the respective left (L) or right (R) foot. 
An associated directed cycle, $\DirectedGraph = (\Vertex, \Edge)$, can be specified for the walking:
\begin{equation}
\begin{aligned}
  \Vertex &= \{ss^\mathrm{R}, ss^\mathrm{L}\}, \\
  \Edge &= \{ss^\mathrm{R} \to ss^\mathrm{L}, ss^\mathrm{L} \to ss^\mathrm{R}\},
\end{aligned}
  \label{eq:vertexedge-singledomain_cassie}
\end{equation}
where each vertex, $v \in V$, represents a continuous domain and each edge, $e\in E$, corresponds to a transition between these domains, as shown in \figref{fig:direct}. 
Specifically, walking on Cassie in this section is considered as a period-two cycle of alternating \textit{single-support}, which are connected by the state dependent event of impact. This means that the \textit{double-support} domain here is \textit{instantaneous}. 
While the inclusion of a double-support domain is the most physically accurate representation of the locomotion on Cassie, it adds a significant amount of additional variables and cardinal nodes to the optimization problem and makes the feedback control approaches we will later derive unnecessarily complex for this study \cite{reher2019dynamic}. 
As previously stated, the walking is considered to be asymmetric, or period-two, meaning that in optimization and in control development we consider the right and left stance as distinct. 
This is done to allow for modeling of lateral walking gaits, which cannot be represented by a symmetric motion. 

The mathematical model of the hybrid system representation of locomotion we wish to design is defined based on the formal framework of hybrid systems. The hybrid control system of the biped is defined as the tuple \cite{ames2007geometric,grizzle2014models}:
\begin{equation}
	\HybridControlSystem = (\DirectedGraph,\Domain,\ControlInput,\Guard,\ResetMap,\emph{FG}). \label{eq:hybrid_control_system}
\end{equation}
\begin{itemize}
    \item $\DirectedGraph = \{\Vertex,\Edge\}$ is a \emph{directed cycle} specific to the desired walking behavior, with $\Vertex$ the set of vertices, $v_{\mathrm{s}},v_{\mathrm{t}} \in V$, and $\Edge$ the set of edges, $e = (v_{\mathrm{s}} \to v_{\mathrm{t}}) \in E$.
	\item $\Domain=\{\Domain_{v} \}_{v \in V}$ is the set of \emph{domains of admissibility} consisting of admissible states on which \eqref{eq:eom_nonlinear} evolves,
	\item $\mathcal{U} \subseteq \mathbb{R}$ is the set of \emph{admissible control inputs},
	\item $S \subset \mathcal{D}$ is a \emph{guard} (or switching surface) that are the states when the swing foot strikes the floor,
	\item  $\ResetMap = \{\ResetMap_{\ei}\}_{\ei \in  E}$ is the set of \emph{reset maps}, $\ResetMap_{\ei} : \Guard_{\ei} \subset \Domain_{v_{\mathrm{s}}} \to \Domain_{v_{\mathrm{t}}}$ from one domain to the next,
	\item $\emph{FG}$ is the \emph{nonlinear control system} associated with the dynamics as given in \eqref{eq:eom_nonlinear}. 
\end{itemize}
The transition from one single support domain to another occurs when the vertical position of the non-stance foot crosses zero. Therefore, the domain and guard are given by:
\begin{align*}
    \mathcal{D}_\mathrm{SS}^{\{ \mathrm{L, R} \}} &= \{ (q, \dot{q}, u) :  p_{nsf}^z(q) \geq 0, \lambda_{nsf}^z(q,\dot{q},u) = 0\}, \\
    S_{\{ \text{L}\rightarrow\text{R, } \text{R}\rightarrow\text{L} \}} &= \{ (q,\dot{q}) :  p_{nsf}^z(q) = 0, \dot{p}_{nsf}^z(q,\dot{q}) < 0 \},
\end{align*}
where $\lambda_{nsf}^z(q,\dot{q},u)$ is the vertical ground reaction force of the swing foot and $p_{nsf}^z(q)$ is the vertical position of the center of the swing foot from the ground. 
An impact occurs when the swing foot touches the ground, modeled here as an inelastic contact between two rigid bodies. 
The configurations of the robot are thus invariant through the impact and velocities will instantaneously jump \cite{Hurmuzlu1994Rigid}. The associated reset map, $\Delta$, is:
\begin{align}
  \Delta(q^-,\dot{q}^-) := \left[\begin{array}{c}
      q^+ \\
      \dot{q}^+
    \end{array}
  \right] = \left[\begin{array}{c}
      \mathcal{R}(q^-) \\
      \frac{\partial \mathcal{R}(q^-)}{\partial q} \Delta^{\dot{q}}(q^-) \dot{q}^-
    \end{array}
  \right], \label{eq:impact_map}
\end{align}
where $\mathcal{R}(q)$ is a relabeling function, $q^-$ and $q^+$ denote the pre and post-impact configurations, and $\Delta^{\dot{q}}(q)$ is obtained from the plastic impact equation \cite{grizzle2014models}:
\begin{align}
\label{eq:vel_impact}
  \Delta^{\dot{q}}(q^-) = I - D^{-1} J_c^T(J_c
  D^{-1} J_c^T)^{-1} J_c.
\end{align}

\newsec{Reset Map Definition.}
Because the legs are compliant, they may not necessarily leave the ground with the springs at their neutral angle. For the majority of the walking that we consider, the swing leg is assumed to be sufficiently rigid to model the springs as a holonomic constraint while in the air. Thus, we must define a spring reset function, which can be applied as part of our relabeling matrix to zero the springs \cite{sreenath2013embedding}. 

While simply resetting the spring values to zero is sufficient for the shin and heel springs, we must solve for a nontrivial value on the tarsus. An inverse kinematics problem can then be used to solve the multi-bar (zero spring deflection) closure constraint given by \eqref{eq:hol_ach}:
\begin{align*}
    \bar{\gamma}_{\text{tar}}(q) &:= 0.028794 + 0.118906 \cos(q_\mathrm{k}) - 0.112216\cos(q_\mathrm{t}) \notag \\
                           & - 0.0280613 \cos(q_\mathrm{k} + q_\mathrm{t}) - 0.0161784 \sin(q_\mathrm{k})  \notag \\
                           & - 0.0425142 \sin(q_\mathrm{t}) - 0.00647928 \sin(q_\mathrm{k} + q_\mathrm{t}) = 0 .
\end{align*}
The inverse kinematics solution for the neutral tarsus angle is then denoted $\bar{q}_{\mathrm{t}}(q_{\mathrm{k}}) := f_{\bar{\gamma}_{\mathrm{tar}}}(q_{\mathrm{k}})$. 
Using this, we can then solve for the for the post-impact tarsus joint, $q_t^+$, given the pre-impact values for $q_k^-$ and assuming $q_s^+\rightarrow0$, $q_{hs}^+\rightarrow0$. 

The walking on Cassie in simulation and in optimization is most generally represented as a period two walking cycle, meaning that the cycle repeats after the left and right legs have both been through a stance phase. 
Thus, for our walking model, the reset map $\mathcal{R}: \mathcal{Q} \rightarrow \mathcal{Q}$ becomes:
\begin{align*}
    \mathcal{R}(q^-) := \left( \mathcal{R}_{b}^T(q^-), \mathcal{R}_{l}^T(q^-), \mathcal{R}_{l}^T(q^-) \right)^T , 
\end{align*}
where $\mathcal{R}_{b}(q^-) := q_{\mathrm{b}}^-$ and $\mathcal{R}_{l}(q^-)$ is a nonlinear function which is mostly the identity mapping combined with the spring zeroing inverse kinematics applied at the tarsus:
\begin{align*}
    \mathcal{R}_{l}(q^-) := \left( q_{\mathrm{hr}}, q_{\mathrm{hy}}, q_{\mathrm{hp}}, q_{\mathrm{kp}}, 0, \bar{q}_{\mathrm{t}}(q_{\mathrm{k}}^-), 0, q_{\mathrm{tp}} \right)^T,
\end{align*}
and it can be seen that the zero entries correspond to the shin and heel spring indices.

\subsection{Motivating the Compliant Model}
\begin{figure}[b!]
\centering
	\includegraphics[width= 0.8\columnwidth]{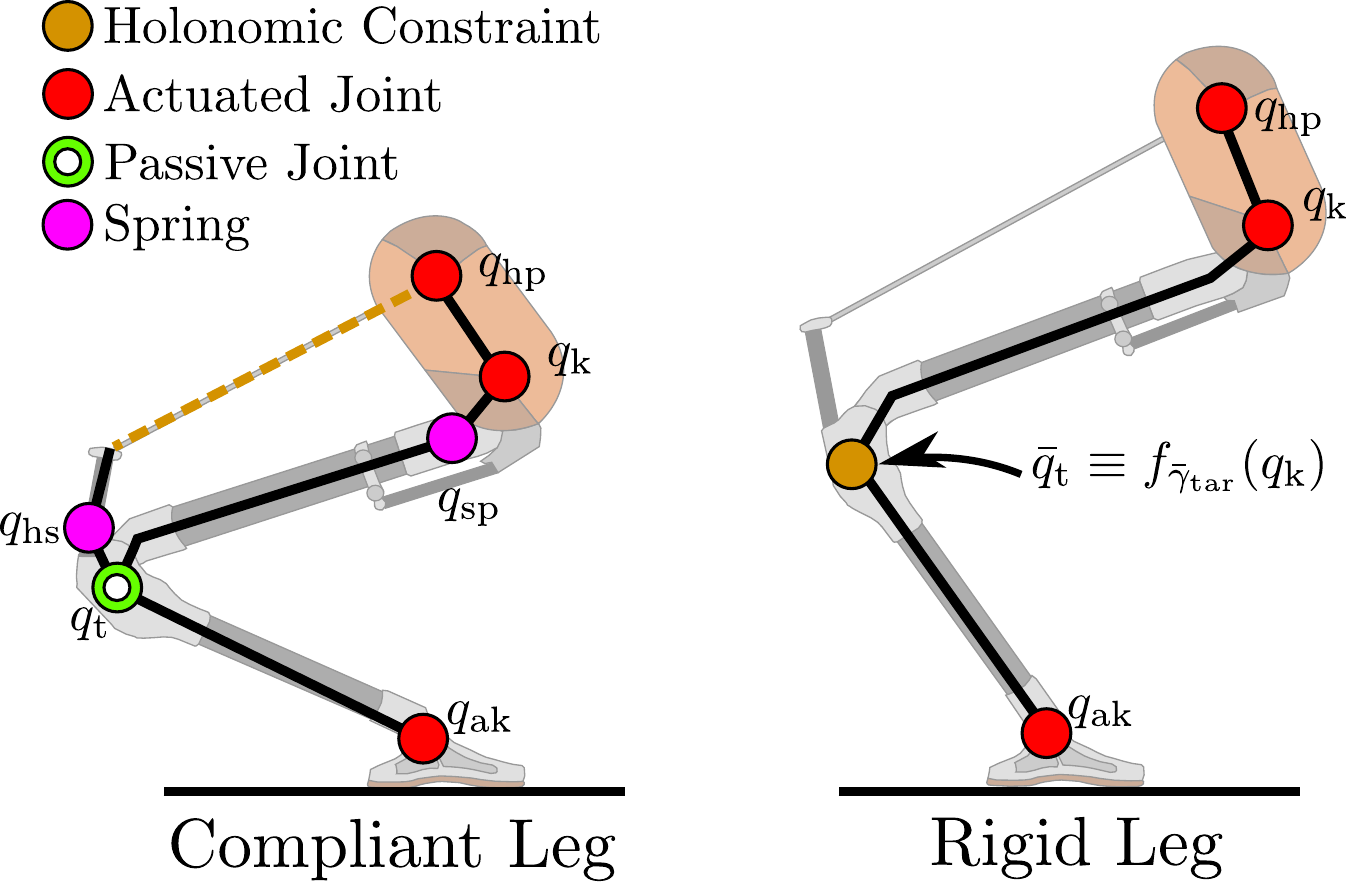}
	\caption{An illustration of the model differences between a compliant and rigid representation of the Cassie leg.} 
	\label{fig:compliant_and_rigid_definition}
\end{figure}
In this work, we will leverage the full compliance on Cassie in the context of dynamic locomotion.  
The leg of the Cassie robot, pictured in \figref{fig:cassie_configuration}, effectively forms a $6$-bar mechanism with $2$ fiberglass leaf springs. 
%
The spring action on Cassie acts along the sagittal plane of the leg mechanism in both the radial and tangential directions. 
This means if we control an output in the leg length and leg angle directions, there will an associated passive compliance along both directions, rather than just the axial length. 
A significant effort was made in this work to fully leverage this compliance by modeling the springs as underactuated coordinates which enter the zero dynamics, and thus considered explicitly in our control approach. 

In several existing works on Cassie \cite{gong2019feedback,hereid2018rapid}, it was shown how a rigid model of the robot could be used to generate stable walking behaviors. This model is shown on the right in \figref{fig:compliant_and_rigid_definition}, where the heel spring is removed, the shin spring is fixed at zero deflection, and the tarsus angle is purely a function of the knee angle as a result of a holonomic constraint which is imposed to close the undeflected four-bar linkage. 
One of the primary reasons that a more constrained model may be considered is that it requires fewer degrees of freedom and the compliant mechanism not only increases local stiffness of the nonlinear dynamics, but also induces model uncertainties for the springy joints. 
Despite these initial difficulties that spring dynamics may bring, the leg mechanism clearly has nontrivial compliance when in contact with the world. 

In this section we will discuss the advantages that using a compliant leg model may provide. 
The differences between the rigid and compliant models can be briefly summarized as:
\begin{enumerate}
    \item[-] \textbf{Rigid model:} assumes all four leaf springs are rigid linkages, which yields kinematic approximations as the geometry relation $\eta_{\mathrm{rigid}}(q) := \theta_\mathrm{k} - \theta_\mathrm{t} - 13^\circ \equiv 0$ for the leg. The constrained model with no contact is $16$ DOF.
    \item[-] \textbf{Compliant model:} instead treats the rotational joint of the leaf spring linkage as a torsional joint, with stiffness and damping effects. In addition, the distance between the hip and end of the heel spring remains a constant (as shown by the dash line in \figref{fig:compliant_and_rigid_definition}). This geometry relation can be described as a closure constraint: $\eta_{\mathrm{{ach}}} (q) \equiv 0$. The full compliant model with no contact is $22$ DOF.
\end{enumerate}

\newsec{A Comparison in Optimization and Simulation.}
\begin{figure}[t!]
\centering
	\includegraphics[width= 0.98\columnwidth]{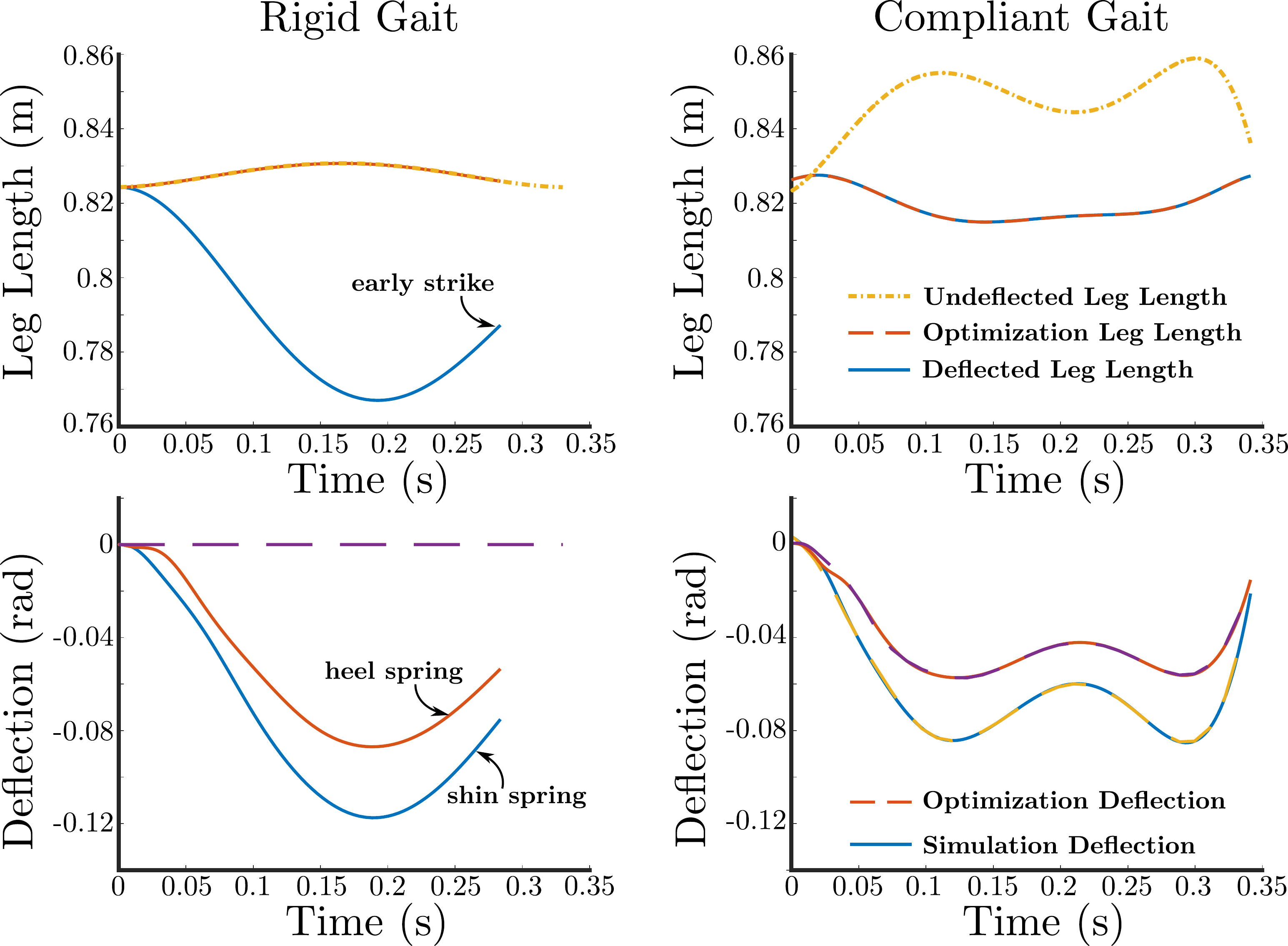}
	\caption{A comparison of the rigid model and compliant model for Cassie implemented in simulation. On the left, the rigid gait has not anticipated passive compliance, and thus drops and strikes the ground early. On the right the compliant motion has a plan for the shin and heel springs, meaning the neutral leg length output offsets to accommodate leg deflections.}
	\label{fig:spring_full_vs_rigid}
	\vspace{-3mm}
\end{figure}
In previous work, which \secref{sec:trajectory_hzd} will build upon, a trajectory optimization was used to find a single gait for both the compliant and rigid models of Cassie \cite{reher2019dynamic}. 
The main advantage of the compliant model is illustrated in \figref{fig:spring_full_vs_rigid}, where the rigid and compliant models were simulated. 
Because the rigid model has no planning for the passive degrees of freedom on the true system, the leg length sinks and causes an early strike. 
On the right, the difference between the neutral leg length (corresponding to the outputs in \secref{sec:trajectory_hzd}) anticipates this deflection. 

\newsec{Motivating the Compliant Gait Library.}
In \secref{sec:trajectory_hzd} we develop a gait library for compliant walking on Cassie at a variety of speeds. This not only provides a set of outputs which have planned for the passive compliance, such as the one shown in \figref{fig:spring_full_vs_rigid}, but it can also provide additional information useful in control design. 
Rather than simply track the output polynomials purely through a model-free PD control law, we would ideally have some feedforward information on how the dynamics should evolve through time. 
In other work on compliant HZD walking \cite{sreenath2013embedding} simply adding a feedforward torque into the control law provided sufficient torque for improved tracking. 
However, many bipedal implementations outside of HZD rely on inverse dynamics \cite{mistry2010inverse} and thus some parameterization of the generalized accelerations, $\ddot{q}$ \cite{feng2015optimization}. 
In this work, a model-based control approach will be introduced which uses a QP to track gaits in a pointwise-optimal fashion. 
The implementation of this controller in \secref{sec:implementation} uses regularization terms for the torque, forces, and accelerations.

\begin{figure}[t]
\centering
	\includegraphics[width= 0.92\columnwidth]{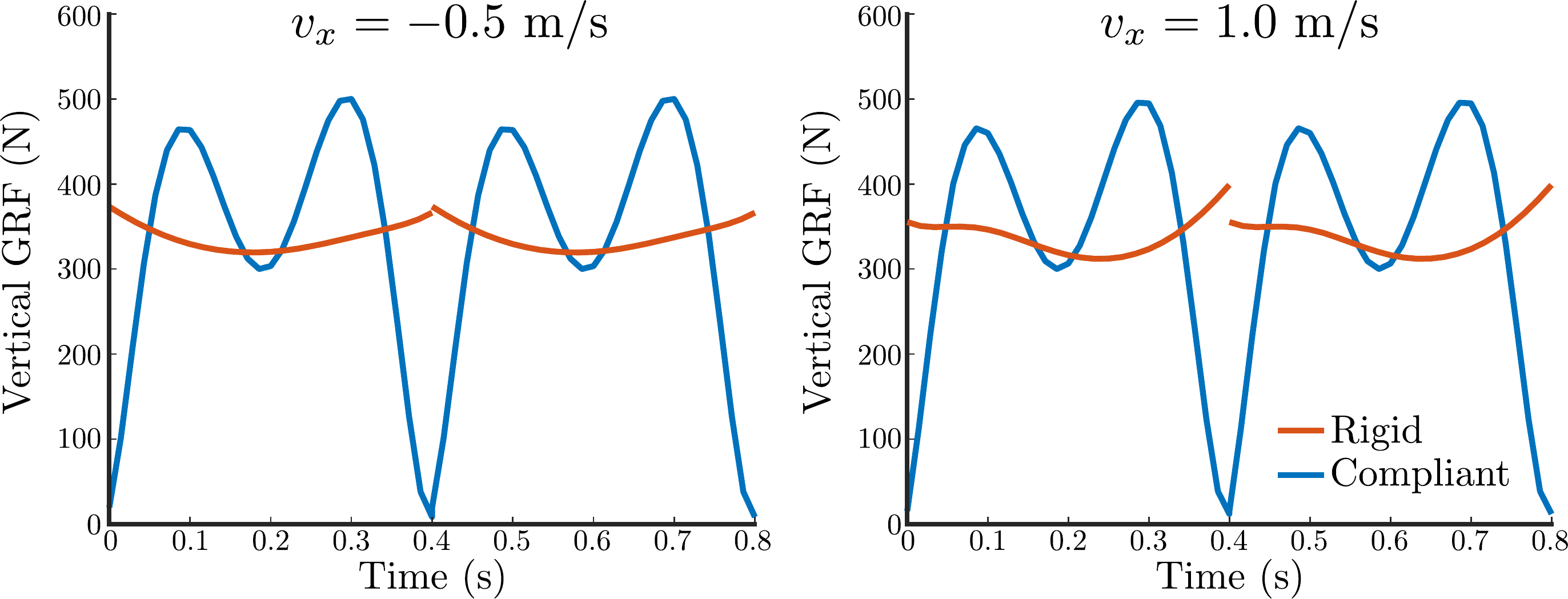}
	\includegraphics[width= 0.92\columnwidth]{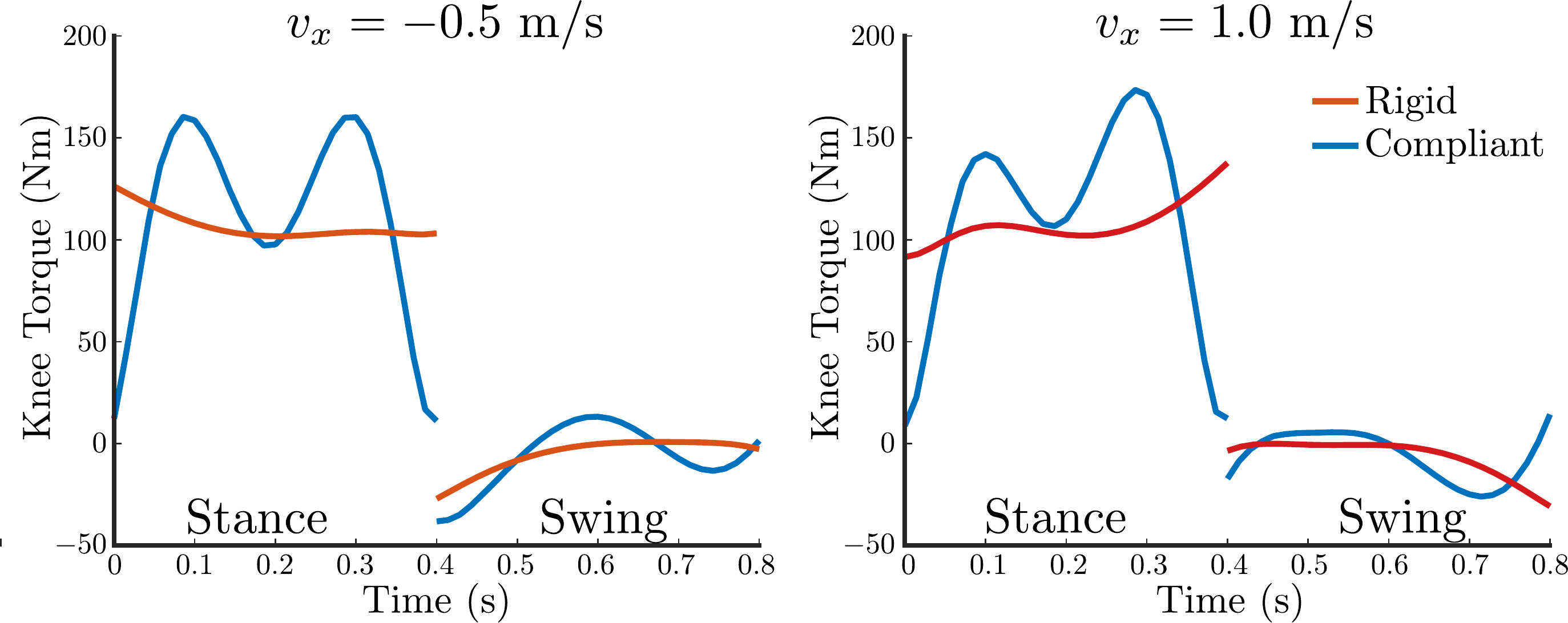} 
	\caption{(Top) The vertical ground reaction forces for compliant and rigid walking. (Bottom) Torque at the knee joint compared for the rigid and compliant models of Cassie. Because the knee directly corresponds to the leg length output, the emergent torque is very similar to the vertical force.}
	\label{fig:rigid_compliant_force}
\end{figure}

With the aim of developing feedforward and regularization terms for control development, we then like to investigate some of the characteristics of the compliant motion library found in \secref{sec:trajectory_hzd}, and compare them to a rigid collection of gaits. 
The implementation of the optimization for the compliant gait library is outlined in \secref{sec:trajectory_hzd}, where here we have imposed identical constraints and an identical cost for a rigid optimization. 
The SLIP model is an emergent behavior of the contact forces shown in \figref{fig:rigid_compliant_force}, where we have plotted the ground reaction forces for a gait walking, where one can see the ``double-hump'' force profile \cite{blickhan1989spring}. 
In comparison, the rigid model has an almost constant vertical force, meaning that this profile is not an accurate representation of the compliant leg dynamics. 
One of the most important characteristics with regards to implementation is smooth torque profiles. 
On Cassie, the highest torque joint is the knee pitch. 
If we observe the knee torque in \figref{fig:rigid_compliant_force}, we can see a profile similar to the vertical contact force, with smooth profiles and much smaller discontinuities at impact for the compliant model.

\begin{figure*}[t]%
	\centering
	\includegraphics[width=0.83\textwidth]{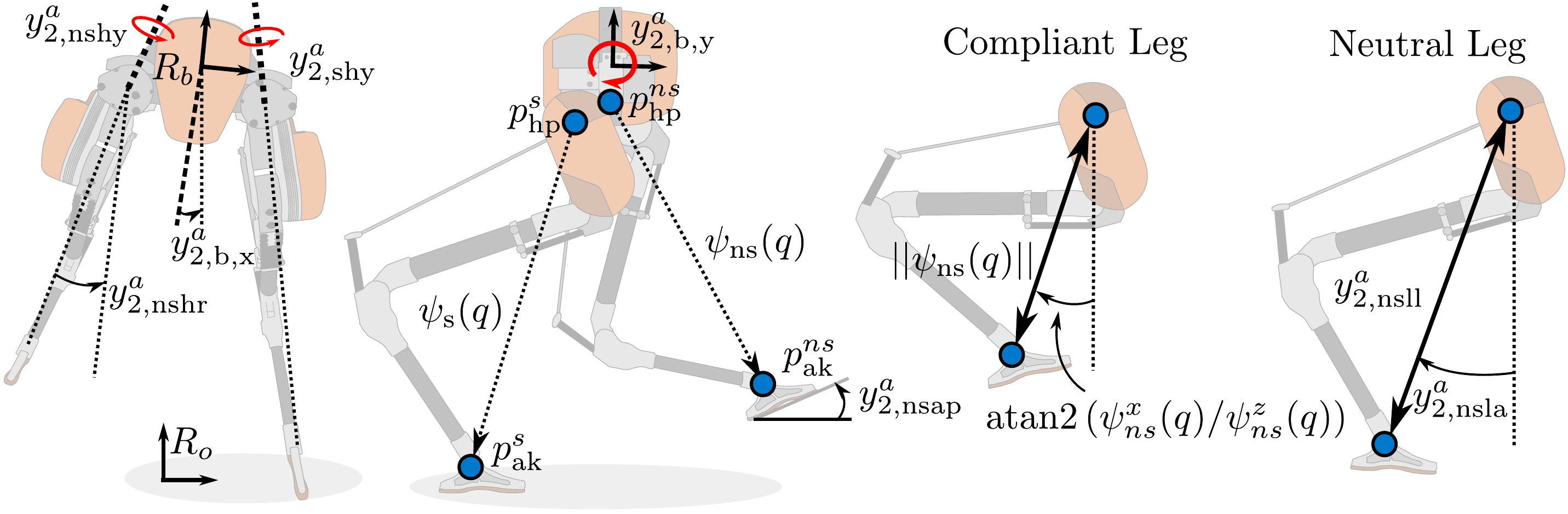}%
	\caption{A visualization of the outputs which can be selected for Cassie in this section. On the right is an illustration of the difference between the actual leg length and angle versus the neutral leg configuration, for which the neutral (meaning undeflected) positions are used as actual outputs.}%
	\label{fig:cassie_onedomain_output_definitions}%
	\vspace{-3mm}
\end{figure*}

\section{Hybrid Zero Dynamics Gait Planning} \label{sec:trajectory_hzd}
This section details the trajectory optimization used to design a collection of walking trajectories.
While each optimization will determine one stable orbit, it has been shown that one can expand the range of motions a robot can perform through systematic optimization to build parameterized gait libraries \cite{da20162d}. Reinforcement learning has also been used to handle transitions for different speeds and slopes \cite{da2017supervised}. 
The method presented in this work specifically seeks to obtain compliant behaviors which are representative of the physical system, and permit locomotion in both the sagittal and coronal directions using a gait library.

\subsection{Virtual Constraints and Feedback Linearization} \label{sec:fbl}
Analogous to holonomic constraints, virtual constraints are defined as a set of functions that regulate the motion of the robot with a desired behavior \cite{westervelt2018feedback}. 
The term ``virtual'' comes from the fact that these constraints are enforced through feedback controllers instead of through physical constraints. 
The primary idea is to design a controller $u(t, x, \alpha)$ to regulate:
\begin{equation}
\label{eq:outputs}
    y(\tau, x) := y^a(x) - y^d(\tau,\alpha),
\end{equation}
where $y^a : X \rightarrow \R^m$ and $y^d : \R \times \R^a \rightarrow \R^m$ are smooth functions encoding the desired behavior in a given domain. A $6$th-order B\'ezier polynomial is chosen for the desired outputs, for which $\alpha$ is a matrix of real coefficients.  

In the case of Cassie, we select nine actual outputs, $y^a(x)$, with (vector) relative degree $2$ \cite{sastry2013nonlinear}:
\begin{align*}
    y_{2,\mathrm{b},x}^a   &= \phi^x               &\text{(pelvis roll)}       \\
    y_{2,\mathrm{b},y}^a   &= \phi^y               &\text{(pelvis pitch)}      \\
    y_{2,\mathrm{sll}}^a   &= ||\psi_{s}(\bar{q}_l) ||_{2}     &\text{(stance leg length)} \\
    y_{2,\mathrm{nsll}}^a  &= ||\psi_{ns}(\bar{q}_l)||_{2}    &\text{(swing leg length)}   \\
    y_{2,\mathrm{nsla}}^a  &= \textrm{atan2}\left(\psi_{ns}^x(\bar{q}_l)/\psi_{ns}^z(\bar{q}_l)\right) &\text{(swing leg pitch)}\\
    y_{2,\mathrm{nshr}}^a  &= q_{\mathrm{nshr}}                  &\text{(swing hip roll)}   \\
    y_{2,\mathrm{shy}}^a   &= q_{\mathrm{shy}}                   &\text{(stance hip yaw)}   \\
    y_{2,\mathrm{nshy}}^a  &= q_{\mathrm{nshy}}                  &\text{(swing hip yaw)}    \\
    y_{2,\mathrm{nsap}}^a  &= \phi^y(q_{\mathrm{b}}, \bar{q}_l)  &\text{(swing foot pitch)}
\end{align*}
where $\phi^y(\theta_{\textrm{tp}})$ is the swing foot pitch angle and,
\begin{equation}
    \psi(q) = p_{\mathrm{hp}}(q) - p_{\mathrm{ak}}(q),
\end{equation}
is the expression for the distance between the hip pitch and ankle pitch joints.  In addition, we leave the stance foot passive. 
Because we use the neutral leg length, we can remove the tarsus and spring coordinates from the expressions using a substitution that gives the configuration of a ``neutral'' leg  
$\bar{q}_l \in \{\mathcal{Q}_l \ | \ q_{\mathrm{sp}}=0, q_{\mathrm{hs}}=0, q_{\mathrm{t}} = 13^o - q_{\mathrm{k}} \}$, leaving simplified expressions for the leg length and leg angle:
\begin{align*}
    y_{2,\mathrm{ll}}^a &= 0.727\sqrt{1.002 + \cos(q_\mathrm{k}) - 0.035\sin(q_\mathrm{k})}\\
    y_{2,\mathrm{nsla}}^a &= \textrm{atan2}\left(
    \frac{-0.053(\cos(q_{\mathrm{hp}}) + 9.971\sin(q_{\mathrm{hp}}) }{0.527(\cos(q_{\mathrm{hp}}) - 0.1\sin(q_{\mathrm{hp}})} \right. \dotsb  \notag\\
    &\hspace{5pt}\dotsb\left.
        \frac{+ 1.277\cos(q_{\mathrm{hp}} + q_k) + 9.382\sin(q_{\mathrm{hp}} + q_k))}{+ 0.941\cos(q_{\mathrm{hp}} + q_\mathrm{k})  - 0.128\sin(q_{\mathrm{hp}} + q_\mathrm{k}))}
    \right).
\end{align*}
The full geometry of the relevant expressions, along with the output definitions, are illustrated in \figref{fig:cassie_onedomain_output_definitions}. 
By formulating the outputs in this way, the passive dynamics of the system (and thus the zero dynamics) will contain the additional coordinates associated with the compliant elements \cite{sreenath2013embedding}. As a practical matter, this is also important as directly controlling the compliance in the leg is significantly more difficult \cite{ames2014quadratic}. 

\newsec{Feedback Linearization.}
In order to actually encode the walking behaviors described by the output polynomials, we must prescribe a feedback controller which can drive $y(\tau,x) \rightarrow 0$. Feedback linearization is a commonly used tool within the HZD community for this purpose, which transforms a nonlinear system into a linear one given a suitable change of variables and control input \cite{khalil2002nonlinear, isidori1997nonlinear}. 

Let us begin the derivation of our preliminary controller by considering the second derivative of our outputs:
\begin{align}
    \ddot{y}_2(q,\dot{q}) = \underbrace{\frac{\partial}{\partial q} \Big( \frac{\partial y_2}{\partial q} \dot{q}\Big) \dot{q} + \frac{\partial y_2}{\partial q} \Big[ -D^{-1} H \Big]}_{L_f^2 y_2} + \underbrace{\frac{\partial y_2}{\partial q} D^{-1} B }_{L_g L_f y_2} u, \notag
\end{align}
where $L_{f}$ and $L_{g}$ are the Lie derivatives with respect to the vector fields $f(x)$ and $g(x)$. For more concise representation terms are also grouped with a common notation:
\begin{align}
    \ddot{y}_2 &= \underbrace{\begin{bmatrix} \frac{\partial }{\partial q} \left(\frac{\partial y_2}{\partial q}\dot{q} \right) & \frac{\partial y_2}{\partial q} \end{bmatrix} f(x)}_{\mathbf{L}_f y} + \underbrace{\frac{\partial y_2}{\partial q}  g(x)}_{\mathcal{A}} u,
\end{align}
where $\mathcal{A}(x)$ is termed the \textit{decoupling matrix}, and is invertible. 
We can then prescribe the following control law:
\begin{align}
    u_{\text{IO}}(t,x) = \mathcal{A}^{-1}\Big(-\mathbf{L}_f y + \nu\Big) \hspace{3mm} \implies \hspace{3mm} \ddot{y} = \nu, \label{eq:FBL}
\end{align}
with an auxiliary control input $\nu$. 
Assuming that the preliminary feedback \eqref{eq:FBL} has been applied to \eqref{eq:eom_nonlinear}, we will render a linear system for the output dynamics with the specific choice of coordinates $\eta := (y_2^T, \dot{y}_2^T)^T$:
\begin{align}
    \dot{\eta} &= \begin{bmatrix} \dot{y}_2 \\ \ddot{y}_2 \end{bmatrix} = \underset{F}{\underbrace{\begin{bmatrix}  0 & \mathbf{I} \\ 0 & 0 \end{bmatrix}}} \eta + \underset{G}{\underbrace{\begin{bmatrix} 0 \\ \mathbf{I} \end{bmatrix}}} \nu. \label{eq:output_dynamics}
\end{align}
A valid choice of $\nu$ which stabilizes this linear system is: 
\begin{align}
    \nu = \ddot{y}_2  = - \frac{1}{\epsilon^2} K_P y_2 - \frac{1}{\epsilon} K_D \dot{y}_2 , \label{eq:IO_auxcontroller}
\end{align}
where $0 < \epsilon \leq 1$ is a tunable parameter, and $K_P$, $K_D > 0$ are control gains for the relative degree $2$ output error. 
This can be grouped into the closed-loop linear system:
\begin{align}
    \begin{bmatrix} \dot{y}_2 \\ \ddot{y}_2 \end{bmatrix} = 
    \underset{F_{\text{cl}}}{\underbrace{\begin{bmatrix} 0 & \mathbf{I} \\  -\frac{1}{\epsilon^2} K_P & - \frac{1}{\epsilon} K_D \end{bmatrix}}} \begin{bmatrix} y_2 \\ \dot{y}_2 \end{bmatrix}.
\end{align}
Since $F_{\text{cl}}$ is Hurwitz by definition (meaning that $\mathrm{Re}(\mathrm{eig}(F_{\text{cl}})) < 0$), the resulting linear dynamics is exponentially stable. In addition, the control parameter $\epsilon$ forces the system to converge at a rate governed by $\epsilon$. 

\begin{figure*}[t!]
	\centering
	\includegraphics[width=0.95\textwidth]{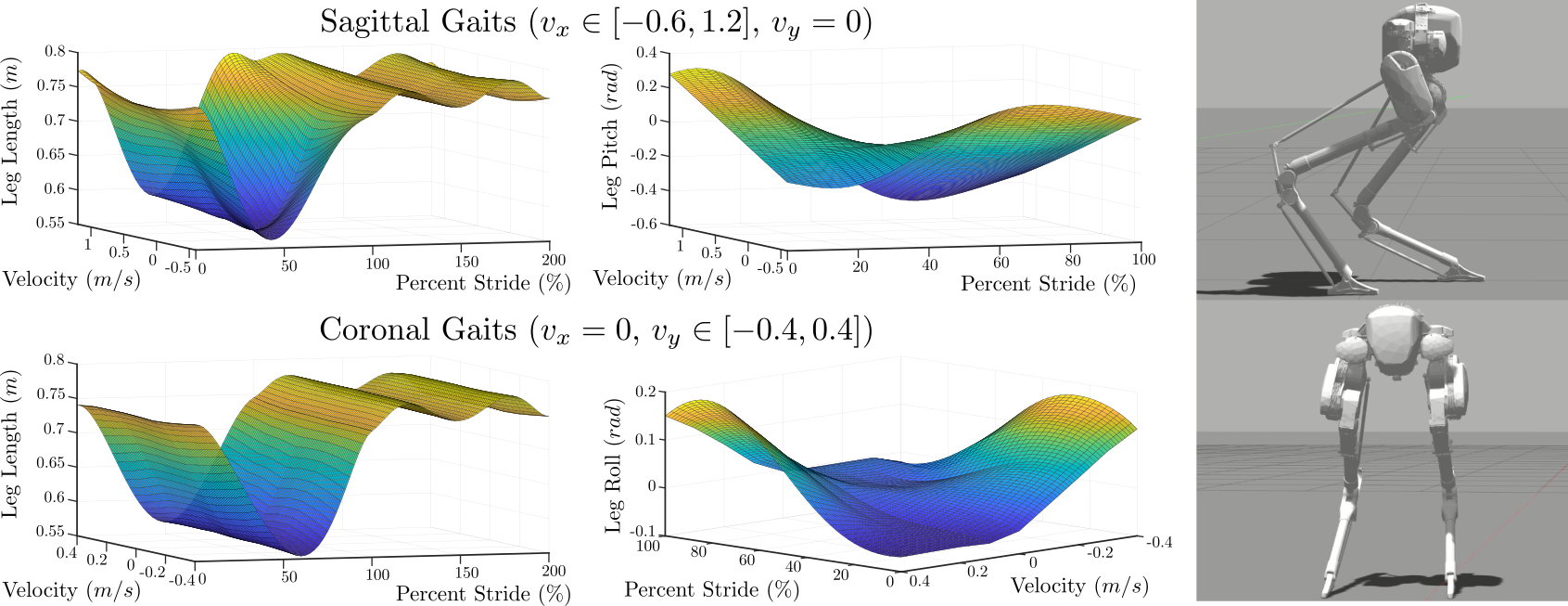}
	\caption{Contour plots of the swing leg length, leg angle, and leg roll outputs over the library speeds in the sagittal and coronal directions, showing the forward and reverse sweep of the leg as it tracks the motions. Also shown is the corresponding motion executed in a Gazebo simulation.}
	\label{fig:opt_output_continuum_feedforward}
\end{figure*}
%

\subsection{Hybrid Zero Dynamics} 
Using the control law in \eqref{eq:FBL}, we can apply $u_{\text{IO}}(t,x)$ with $\nu$ in \eqref{eq:IO_auxcontroller} to exponentially stabilize the linear dynamics \eqref{eq:output_dynamics}. Further, substitution of this controller into \eqref{eq:eom_nonlinear} yields the closed-loop dynamics: 
\begin{align}
\label{eq:eom_nonlinearcl}
    \dot{x} = f^\alpha(\tau,x,\alpha) = f(x) + g(x) u_{\text{IO}}(\tau,x,\alpha).
\end{align}
The hybrid system associated with the closed-loop dynamical system is described as \cite{hereid2018dynamic}:
\begin{align}
    \HybridSystem^\alpha \triangleq 
    \begin{cases} 
        \dot{x} = f^\alpha(x,\alpha) &\text{if} \hspace{5mm} x \in \Domain^\alpha \backslash \Guard^\alpha \\
        x^+= \ResetMap(x^-) &\text{if} \hspace{5mm} x^- \in \Guard^\alpha
    \end{cases} \label{eq:hybrid_closed_loop_dynamics}
\end{align}
where $f^\alpha$ is the dynamical system defined on $\Domain$ for the system \eqref{eq:eom_nonlinearcl}. 
Additionally, by driving the outputs to zero, the controller renders the 
\emph{zero dynamics} submanifold: 
\begin{align}
\label{eq:zerodyn}
  \ZD{}^\alpha = \{(\q,\dq) \in \Domain | y_{2}= \zm, L_f y_{2} = \zm\}
\end{align}
forward invariant and attractive \cite{isidori1997nonlinear}. 
Suppose then 
that there exists a local coordinate transformation $\Phi^z: \Domain \rightarrow \ZD{}$ and $\Phi^\eta : \Domain \rightarrow \Domain^\alpha$ so that $(\eta, z) = (\Phi^\eta(x), \Phi^z(x)) := \Phi(x)$. We can then write our controlled system in \textit{normal form} as: 
\begin{align}
    \dot{\eta} &= \bar{f}(\eta,z) + \bar{g}(\eta,z) u \label{eq:closed-loop-output-dynamics}\\
    \dot{z} &= \omega(\eta,z) \notag
\end{align}
with the restriction dynamics:
\begin{align*}
    \bar{f}(\eta(x), z(x)) = \begin{bmatrix}
        \dot{y}(x) \\
        \mathbf{L}_f y(x)
    \end{bmatrix}, \hspace{5mm}
    \bar{g}(\eta(x),z(x)) = \begin{bmatrix}
        0 \\ \mathcal{A}(x)
    \end{bmatrix}.
\end{align*} 
Thus, we can write the zero dynamics as the maximal dynamics compatible with the output equal to zero:
\begin{align}
    \dot{z} = \omega(0,z) := f|_{\mathcal{Z}}^\alpha(z). \label{eq:zero_dynamics_contdynamics}
\end{align}
Thus, the continuous dynamics \eqref{eq:eom_nonlinearcl} will evolve on $\mathcal{Z}^\alpha$. However, because \eqref{eq:zerodyn} has been designed without taking into account the hybrid transition maps \eqref{eq:impact_map}, it will not be impact invariant. In order to enforce impact invariance, the B\'ezier polynomials for the desired outputs can be shaped through the parameters $\alpha$. This can be interpreted as the condition \cite{westervelt2003hybrid}:
\begin{align}\label{eq:resetmapv}
 \Delta ( \mathcal{Z}^\alpha \cap S^\alpha ) \subset \mathcal{Z}^\alpha,
\end{align}
and will be imposed as a constraint on the pre and post-impact states through impact \eqref{eq:impact_map}. When \eqref{eq:resetmapv} is satisfied, we say that the system lies on the \textit{hybrid zero dynamics} (HZD) manifold. While the conditions derived here demonstrate HZD for a single-domain case, other work has shown that the multi-domain case follows in a similar fashion \cite{hereid20163d}.

The stability of hybrid systems 
is often determined by the existence and stability of periodic orbits. 
If the system \eqref{eq:eom_nonlinearcl} has HZD, then due to the hybrid invariance of $\mathcal{Z}$, there exits a stable hybrid periodic orbit, $\mathcal{O}|_{\mathcal{Z}} \subset \mathcal{Z}$, for the reduced order zero dynamics evolving on $\mathcal{Z}$, i.e., if we are evolving on the restriction dynamics of $f^\alpha|_{\mathcal{Z}}(z)$, then $\mathcal{O}|_{\mathcal{Z}}$ is a stable hybrid periodic orbit for the restricted dynamics in \eqref{eq:eom_nonlinearcl} \cite{westervelt2018feedback}. 
More concretely, let $\phi_t^{f^\alpha}|_{\mathcal{Z}}(z_0)$ be the (unique) solution to \eqref{eq:eom_nonlinearcl} at time $t\geq 0$ with initial condition $z_0$. 
For a point $z^*\in S$ we say that $\phi_t^{f^\alpha}|_{\mathcal{Z}}$ is hybrid periodic if there exists a $T>0$ such that $\phi_T^{f^\alpha}|_{\mathcal{Z}}(\Delta(z^*))=z^*$. 
Further, the stability of the resulting hybrid periodic orbit, $\mathcal{O}|_{\mathcal{Z}} = \{ \phi_t^{f^\alpha}|_{\mathcal{Z}}(\Delta(z^*)):0 \leq t \leq T \}$, 
can be found by analyzing the stability of the Poincar\'{e} map.

\subsection{Gait Optimization}
%
The problem of finding stable dynamic walking can now be transcribed to the nonlinear programming (NLP) problem of finding a fixed point $x^*$ and set of parameters $\alpha$ parameterizing the virtual constraints of \eqref{eq:outputs}. 
The optimization problem in this work is performed over one full step cycle, with a discrete impact \eqref{eq:impact_map} applied to the terminal state so that it satisfies the HZD condition of \eqref{eq:resetmapv}. It is also critical that the motions respect the limitations of the physical system such as the friction cone \eqref{eq:cone_friction}, foot rollover \eqref{eq:footroll}, and actuator limits. 
These constraints can be transcribed into an NLP and solved \cite{hereid2017frost}: 
\begin{align} 
	\mathbf{w}(\alpha)^* = &\underset{\mathbf{w}(\alpha)}{\mathrm{argmin}} \hspace{3mm} \mathcal{J}(\mathbf{w}(\alpha)) \tag{HZD Optimization}\label{eq:opteqs} \\
	\mathrm{s.t.} 		&\hspace{3mm}  \text{Closed\ loop\ dynamics: \eqnref{eq:eom_nonlinearcl}}  \notag \\
				  		&\hspace{3mm}  \text{HZD\ condition: \eqnref{eq:resetmapv}}  \notag\\
				  		&\hspace{3mm}  \text{Physical feasibility (e.g. \eqnref{eq:cone_friction})}   \notag
\end{align}
where $\mathbf{w}(\alpha)\in\mathbb{R}^{N_w}$, with $N_w$ being the total number of optimization variables and here we made the dependence on the parameters, $\alpha$, explicit. In order to minimize torque and to center the floating base orientation movement around the origin, the following cost function was minimized:
\begin{align}
    \mathcal{J}(\mathbf{w}) &:= \int_{t=0}^{tf} \left( c_u |u|^2 + c_\phi |\phi_b|^2 \right) dt,
    \label{eq:cost}
\end{align}
with $c_u = 0.0001$ and $c_\phi = (20,1,30)$.

\begin{table}[t]
\centering
\caption{Optimization constraints and parameters}
\label{table:optimization_constraints}
\def\arraystretch{1.1} 
    \begin{tabular}{ | m{15.4em} m{2.2cm} m{0.50cm} | } 
        \hline
        Step duration  &  $=0.4$ & sec \\ 
        \hline
        Average step velocity, $\bar{v}_{x,y}$ & $=v_{x,y}$ & m/s \\
        \hline
        Pelvis height, $p_z$  &  $\geq 0.80$ & m \\ 
        \hline
        Terminal spring deflection, $q_{\text{sp},\text{hsp}}(t_f)$    &  $=0$ & rad \\ 
        \hline
        Mid-step foot clearance, $p_{nsf}^z$  &  $\geq 0.14$ & m \\ 
        \hline
        Vertical impact velocity, $\dot{p}_{sw}^z$  &  $\in(-0.40, -0.10)$ & m/s \\
        \hline
        Step width, ${p}_{lf}^y - {p}_{lf}^y$  &  $\in(0.14, 0.35)$ & m \\ 
        \hline
        Swing foot pitch, $\phi^y(q)$ & $=0$ & rad \\
        \hline
        Friction cone, $\mu$    &  $< 0.6$ &  \\ 
        \hline
    \end{tabular}
    \vspace{-2mm}
\end{table}

Similar to \cite{xie2020learning,gong2019feedback}, we would like to design a variety of walking speeds for which the robot can operate. To accomplish this, a library of walking gaits at sagittal speeds of $v_x\in [ -0.6,1.2 ]$ m/s and coronal speeds of $v_y \in [-0.4, 0.4]$ m/s are generated in a grid of $0.1$ m/s intervals.
This resulted in $171$ individual optimization problems to be solved. 
Each optimization was then solved sequentially through the C-FROST interface \cite{hereid2017frost} on a laptop with an Intel Core i7-6820 HQ CPU @ $2.7$ GHz with $16$ GB RAM, and consisted of $8418$ variables with $4502$ equality and $5880$ inequality constraints. Using each gait as an initial guess to warm-start the next speed in the library, the average number of iterations per run was $199$ with an average total evaluation time of $263.8$ seconds.

\begin{figure}[t]
	\centering
	\includegraphics[width=0.9\columnwidth]{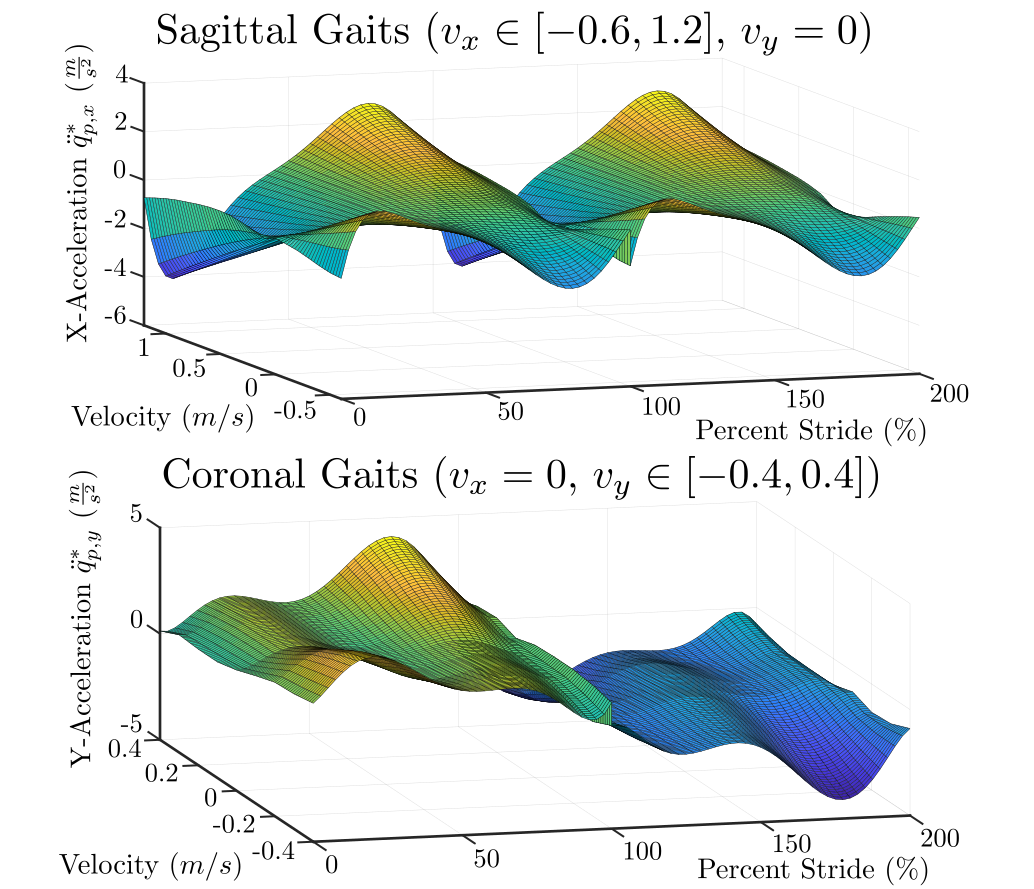}
	\caption{The contours of the floating base $x$ and $y$ accelerations which are obtained from the trajectory optimization problem. }
	\label{fig:ddq_plots}
\end{figure}

\begin{figure*}[t!]
	\centering
	\includegraphics[width=1\textwidth]{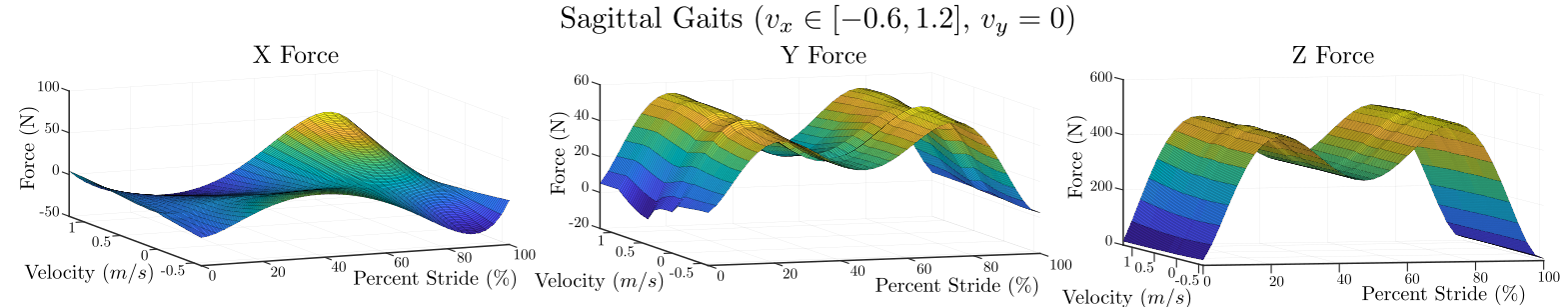}
	\includegraphics[width=1\textwidth]{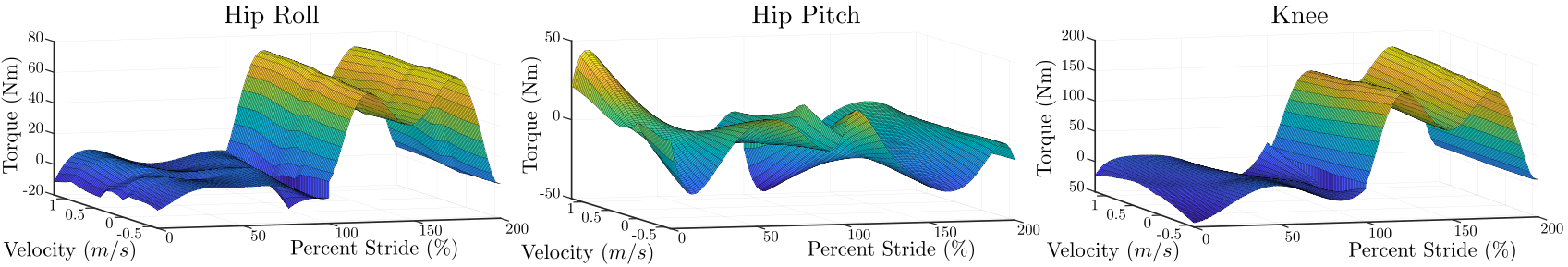}
	\caption{(Top) Contour plots showing the parameterized ground reaction forces for the right stance domain obtained from the optimization in the sagittal direction. (Bottom) Contour plots showing the parameterized torques for swing and then stance for walking on Cassie for sagittal walking speeds.}
	\label{fig:cassie_clf_forcereg}
\end{figure*}

\newsec{Gait Parameterization}
The controller implemented on hardware requires the feedback control objectives, defined by $y^d(\alpha,t)$, acceleration information $\ddot{q}^*$, and contact forces $\lambda_c^*$ from the optimal path to track the planned motions. The desired output parameters, $\alpha$, already concisely parameterize the feedback control, and can be placed in a large matrix for use with an interpolation routine. 
Several output polynomials are shown in \figref{fig:opt_output_continuum_feedforward}, where the leg length, leg pitch, and hip roll outputs are visualized over various walking speeds alongside simulation results. The leg length outputs shown in \figref{fig:opt_output_continuum_feedforward} demonstrate a SLIP-like ``double-hump'' shape \cite{blickhan1989spring} corresponding to the contact force profiles in \figref{fig:cassie_clf_forcereg}. 

Generalized accelerations $\ddot{q}^*$, torques $u^*$, and constraint forces $\lambda_c^*$ are extracted directly from the optimization variables, $\mathbf{w}$. To allow for easier implementation, regression is performed on each curve to obtain a $6$th order B\'ezier polynomial. They can then be stacked with the $\alpha$ parameters in the same bilinear interpolation routine for code efficiency. Plots of the accelerations for the floating base $x$ and $y$ coordinates are visualized in \figref{fig:ddq_plots}. Finally, the floating-base position  $p^*_{x,y}$ and velocity $v^*_{x,y}$ relative to the stance foot is also extracted. 


The bilinear interpolation routine is built on the assumption that each parameterized variable from the optimization is a rectangular matrix, $\alpha_{\square} \in \R^{M+1\times n_{\alpha_\square}}$, where $n_{\alpha_{\square}}$ is the dimension of the variable in question (i.e. $n_{\alpha_{\ddq}}=22$ for Cassie as $n=22$).
We first flatten each parameter matrix and then concatenate them into a single parameter array:
\begin{align}
    \beta^{i,j} := (\bar{\alpha}_y, \bar{\alpha}_p, \bar{\alpha}_v, \bar{\alpha}_{\ddot{q}}, \bar{\alpha}_u, \bar{\alpha}_{\lambda_c}).
\end{align}
Each flattened array is then organized into a matrix of arrays which is sorted and tagged with the corresponding velocity. 
In order to find the 
parameters for a given speed, we then search the matrix for the corresponding indices for the current $v_x$ and $v_y$. Next, we interpolate in the sagittal direction: 
\begin{align*}
    p(v_x,v_y^{j})   &= \frac{v_x^{i+1} - v_x}{v_x^{i+1} - v_x^{i}} \beta^{i, j} + \frac{v_x - v_x^{i}}{v_x^{i+1} - v_x^{i}} \beta^{i+1,j}, \\
    p(v_x,v_y^{j+1}) &= \frac{v_x^{i+1} - v_x}{v_x^{i+1} - v_x^{i}} \beta^{i, j+1} + \frac{v_x - v_x^{i}}{v_x^{i+1} - v_x^{i}} \beta^{i+1,j+1},
\end{align*}
where we can then also interpolate in the coronal direction to obtain the desired parameters:
\begin{align}
    p(v_x,v_y) &= \frac{v_y^{j+1} - v_y}{v_y^{j+1} - v_y^{j}} p(v_x,v_y^{j}) + \frac{v_y-v_y^{j}}{v_y^{j+1}-v_y^{j}} p(v_x,v_y^{j+1}). \label{eq:cassie_library_interp}
\end{align}
\section{Inverse Dynamics-Based Control Lyapunov Functions} \label{sec:id-clf}
In \secref{sec:fbl}, it was shown how feedback linearization could be used to render a linear system which could be stabilized via PD feedback. 
Instead, we would like to examine output tracking from a Lyapunov perspective. 
Control Lyapunov functions (CLFs), and specifically rapidly exponentially stabilizing control Lyapunov functions (RES-CLFs), were introduced as methods for achieving (rapidly) exponential stability for walking robots \cite{ames2014rapidly, ames2012control}. 
This control approach has the benefit of yielding an entire class of controllers that provably stabilize periodic orbits for hybrid system models of walking robots, and can be realized in a pointwise optimal fashion. 

In this section, we will introduce an alternative formulation of the CLF with equivalent convergence properties, but with more desirable traits for actual implementation and tuning on hardware. 
These new developments will then form the basis for the experimental study introduced for Cassie in \secref{sec:implementation}.

\subsection{Preliminaries on Control Lyapunov Functions}
Recall the feedback control approach given in \secref{sec:fbl}, which resulted in a linear system that could stabilize the output dynamics. 
As an alternative to the feedback linearizing approach, we can instead propose a control Lyapunov function candidate $V(\eta)$ with $V : Y \rightarrow \R$. A control can then be chosen pointwise in time such that the time derivative of the Lyapunov function $\dot{V}(\eta,\nu) \leq 0$, resulting in stability in the sense of Lyapunov, or $\dot{V}(\eta,\nu) < 0$ for asymptotic stability and $\dot{V}(\eta,\nu) + \gamma V(\eta) \leq 0$ with $\gamma > 0$ for exponential stability. 


\begin{definition}\label{def:res-clf} (RES-CLF \cite{ames2014rapidly})
    For the system \eqref{eq:closed-loop-output-dynamics}, a continuously differentiable function $V_\epsilon : Y \rightarrow \R$ is said to be a \textbf{rapidly exponentially stabilizing control Lyapunov function} if there exist positive constants $c_1,c_2,c_3 > 0$ such that:
    \begin{gather}
        c_1 || \eta(x) ||^2 \leq V_\epsilon(\eta(x)) \leq \frac{c_2}{\epsilon^2} || \eta(x) ||^2, \\
        \inf_{u \in U} \left[ L_{{f}} V_\epsilon(\eta(x),z) + L_{{g}} V_\epsilon(\eta(x),z) u + \frac{c_3}{\epsilon}V_\epsilon(\eta(x)) \right] \leq 0,
    \end{gather}
    for all $0<\epsilon < 1$ and for all $(\eta(x),z) \in Y \times Z$.
\end{definition}

In the context of the control system \eqref{eq:output_dynamics}, we consider the continuous time algebraic Riccati equations (CARE):
\begin{align}
    F^T P + P F - P G R^{-1} G^T P + Q = 0, \label{eq:care}
\end{align}
for $Q=Q^T > 0$, $R = R^T > 0$, and solution $P=P^T > 0$. 
Using \defref{def:res-clf}, we can then construct a (R)ES-CLF: 
\begin{align}
    V(\eta) = \eta^T \underset{P_\epsilon}{\underbrace{\mathbf{I}_\epsilon P \mathbf{I}_\epsilon} \eta}, 
    \hspace{12pt} \mathrm{ with } \hspace{3 pt} \mathbf{I}_\epsilon := \mathrm{diag}\left(  \frac{1}{\epsilon} \mathbf{I}, \mathbf{I} \right), \label{eq:lyap}
\end{align}
where the selection of $0 < \epsilon < 1$ creates a RES-CLF, and $\epsilon = 1$ instead renders an ES-CLF. We can find the derivative of \eqref{eq:lyap} to be:
\begin{align}
    \dot{V}(\eta) &= L_F V(\eta) + L_G V(\eta) \nu, \label{eq:Vdot}
\end{align}
where the Lie derivatives of $V$ along the linear output system's dynamics \eqref{eq:output_dynamics} are:
\begin{align}
    L_F V(\eta) &= \eta^T (F^T P_\epsilon + P_\epsilon F)\eta,\\
    L_G V(\eta) &= 2 \eta^T P_\epsilon G.
\end{align}
An exponential convergence constraint can then be prescribed:
\begin{align}
    L_F V(\eta) + L_G V(\eta) v \leq - \frac{1}{\epsilon}\underbrace{\frac{\lambda_{\mathrm{min}}(Q)}{ \lambda_{\mathrm{max}}(P_\epsilon)}}_{\gamma} V(x), \label{eq:CLFv}
\end{align}
where $\gamma$ is related to the convergence rate. This constraint is in terms of our auxiliary control input $\nu$ and not the actual feedback control $u$. In order to convert back into a form which can be represented in terms of the control input, we can use the previous relationship between $u$ and $\nu$ in \eqref{eq:FBL} 
to obtain the CLF constraint stated in terms of $x$ and $u$:
\begin{align}
    \underbrace{L_F V(x) + L_G V(x) \mathbf{L}_f y(x)}_{L_f V(x)} + \underbrace{L_G V(x) \mathcal{A}(x)}_{L_g V(x)} u \leq - \frac{\gamma}{\epsilon} V(x).
\end{align}
In the context of (R)ES-CLF, we can then define the set
\begin{align}
    K_\epsilon(x) = \{u_\epsilon \in U : L_f V(x) + L_g V(x) u + \frac{\gamma}{\epsilon} V(x) \leq 0 \}, \label{eq:ES_clf_u_class}
\end{align}
consisting of the control values which result in (rapidly) exponential convergence, wherein $\dot{V}(\eta(x)) \leq - \frac{\gamma}{\epsilon} V(\eta(x))$.

\newsec{Stabilizing Hybrid Zero Dynamics.}
Starting from the assumption that a system has stable zero dynamics and is shaped in such a way that it has hybrid invariance, let us consider a hybrid control system \eqref{eq:hybrid_control_system} in normal form
where $\eta,z,\bar{f}$, and $\bar{g}$ are defined as in \eqref{eq:closed-loop-output-dynamics}, $\Delta_{\eta}$ and $\Delta_{z}$ are locally Lipschitz in their arguments, and the domain and guard are now:
\begin{align}
    \Domain &= \{ (\eta,z) \in Y \times \mathcal{Z} ~ | ~ H(\eta,z) \geq 0\}, \\
    \Guard  &= \{ (\eta,z) \in Y \times \mathcal{Z} ~ | ~ H(\eta,z) = 0, \dot{H}(\eta,z) < 0 \} 
\end{align}
where $H(\eta,z):Y\times\mathcal{Z} \rightarrow \R$ is a continuously differentiable function where $L_{\bar{g}} H = 0$. 
If we now assume that the normal-form hybrid control system has continuous invariance, $\bar{f}(0,z) = 0$, and discrete invariance, $\Delta_x(0,z) = 0$, then 
our system has hybrid zero dynamics. In other words, we have encoded satisfaction of the invariance condition, $\Delta ( \mathcal{Z} \cap S ) \subset \mathcal{Z}$, previously given for the full-order system \eqref{eq:resetmapv}. 
If we further assume that a RES-CLF $V_{\epsilon}$ is chosen to obtain a locally Lipschitz control law $u_{\epsilon}(\eta,z) \in K_{\epsilon}(\eta,z)$ that can be applied 
where $u_{\epsilon}(\eta,z) \in K_{\epsilon}(\eta)$ implies $u_{\epsilon}(0,z)=0$  and thus preserves the hybrid zero dynamics $\HybridSystem^\alpha|_\mathcal{Z}$. Because the hybrid zero dynamics holds, the stability of periodic orbits also follow \cite{ames2014human}. In fact, a stronger statement can be made regarding the stability of the hybrid system.
\begin{theorem}\label{thm:res-clf_hzd}(RES-CLF and Hybrid Zero Dynamics \cite{ames2014rapidly}) 
Let $\mathcal{O}|_{\mathcal{Z}}$ be an exponentially stable periodic orbit of the hybrid zero dynamics $\HybridSystem^\alpha|_\mathcal{Z}$ transverse to $S\cap \mathcal{Z}$ and assume there exists a RES-CLF $V_{\epsilon}$ for the continuous dynamics \eqref{eq:eom_nonlinear}.
Then there exists an $\bar{\epsilon}>0$ such that for all $0<\epsilon<\bar{\epsilon}$ and for all Lipschitz continuous $u_{\epsilon}\in K_{\epsilon}(\eta,z)$, $\mathcal{O}=\iota_0(\mathcal{O}|_{\mathcal{Z}})$ is an exponentially stable hybrid periodic orbit of $\HybridSystem^\alpha$.
\end{theorem}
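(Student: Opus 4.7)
The plan is to establish stability of $\mathcal{O}$ via a Poincaré map analysis on the guard $S$, decomposing the return map into the zero dynamics Poincaré map (whose stability is assumed) plus a transverse perturbation that the RES-CLF makes arbitrarily fast-contracting by choosing $\epsilon$ small. First, I would exploit the RES-CLF inequality to derive an explicit decay bound on the transverse coordinate $\eta$ along continuous flows. From $\dot{V}_\epsilon \leq -(\gamma/\epsilon)V_\epsilon$ (which is available for any $u_\epsilon \in K_\epsilon$) one obtains $V_\epsilon(\eta(t)) \leq e^{-\gamma t/\epsilon}V_\epsilon(\eta(0))$, and combined with the quadratic sandwich in Definition \ref{def:res-clf} this yields a bound of the form $\|\eta(t)\| \leq \tfrac{1}{\epsilon}\sqrt{c_2/c_1}\,e^{-\gamma t/(2\epsilon)}\|\eta(0)\|$. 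This quantifies how fast the output error contracts on each continuous arc between impacts.

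Next, I would bound the impact. The hybrid zero dynamics assumption $\Delta_\eta(0,z) = 0$ together with Lipschitz continuity of $\Delta$ yields, near the orbit, an estimate $\|\Delta_\eta(\eta,z)\| \leq L_\Delta \|\eta\|$. Transversality of $\mathcal{O}|_{\mathcal{Z}}$ to $S \cap \mathcal{Z}$, via $\dot{H} < 0$ on $S$ and the implicit function theorem, guarantees that the time-to-impact $T(\eta,z)$ is well-defined and Lipschitz in a neighborhood of the orbit. Composing the flow with $\Delta$ then produces a full Poincaré map $P:S\to S$ of the form $P(\eta,z) = (P_\eta(\eta,z), P_z(\eta,z))$, with the crucial structural property $P_z(0,z) = \rho(z) := P|_{\mathcal{Z}}(z)$, i.e.\ it restricts on $\{\eta = 0\}$ to the zero-dynamics Poincaré map whose fixed point $z^*$ is exponentially stable by hypothesis.

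Combining these, the $\eta$-component satisfies $\|P_\eta(\eta,z)\| \leq L_\Delta \tfrac{1}{\epsilon}\sqrt{c_2/c_1}\,e^{-\gamma T_{\min}/(2\epsilon)}\|\eta\|$, so choosing $\epsilon < \bar{\epsilon}$ makes this contraction as strong as desired, dominating any Lipschitz amplification at impact. I would then introduce a composite Lyapunov function $W(\eta,z) = \|z - z^*\|^2 + \sigma V_\epsilon(\eta)$ on $S$, with $\sigma > 0$ tuning parameter. The $z$-part contracts through $\rho$ by the exponential stability hypothesis on $\mathcal{O}|_{\mathcal{Z}}$; the $\sigma V_\epsilon$-part contracts through the bound above; the cross-coupling from $\|P_z(\eta,z) - \rho(z)\| \leq L_P\|\eta\|$ (again using Lipschitz continuity plus vanishing at $\eta = 0$) is absorbed via Young's inequality. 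For $\sigma$ chosen appropriately and $\epsilon$ small, this gives $W(P(\eta,z)) \leq r\, W(\eta,z)$ with $r < 1$, implying exponential stability of the fixed point $(0,z^*)$ of $P$, which by standard hybrid Poincaré theory is equivalent to exponential stability of $\mathcal{O} = \iota_0(\mathcal{O}|_\mathcal{Z})$ as a hybrid periodic orbit of $\HybridSystem^\alpha$.

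The main obstacle is the tight Poincaré-map estimate: one must obtain a uniform Lipschitz constant for $P_z$ in $\eta$ (and a uniform lower bound $T_{\min}$ on the step duration) on a neighborhood of $\mathcal{O}|_{\mathcal{Z}}$ that does \emph{not} shrink with $\epsilon$, so that a single threshold $\bar{\epsilon}$ works independently of initial conditions. This relies on compactness of the periodic orbit, smooth dependence of solutions on initial data, and transversality to the guard; absent transversality or with non-Lipschitz reset, the coupling between fast transverse contraction and the slower zero-dynamics contraction cannot be cleanly separated, and the construction collapses.
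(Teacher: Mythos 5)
The paper does not supply its own proof of this theorem; it explicitly defers to \cite{ames2014rapidly}, and your proposal is essentially a faithful reconstruction of the argument given there: decompose the Poincar\'e map on $S$ into the restricted (zero-dynamics) map and a transverse part, use the RES-CLF sandwich to get the $\tfrac{1}{\epsilon}\sqrt{c_2/c_1}\,e^{-\gamma T_{\min}/(2\epsilon)}$ contraction factor that beats the Lipschitz amplification of the reset map for $\epsilon$ small, and close the loop with a composite discrete-time Lyapunov function. Your identification of the genuinely delicate points (a uniform $T_{\min}$ and Lipschitz constants independent of $\epsilon$, obtained from transversality and compactness of the orbit) also matches the reference.

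One technical repair is needed in your composite function $W(\eta,z)=\|z-z^*\|^2+\sigma V_\epsilon(\eta)$. Exponential stability of the fixed point $z^*$ of the restricted Poincar\'e map $\rho$ gives $\|\rho^k(z)-z^*\|\leq N\beta^k\|z-z^*\|$ with possibly $N>1$, so it does \emph{not} imply one-step contraction of $\|z-z^*\|^2$ under $\rho$; your claim that ``the $z$-part contracts through $\rho$'' is therefore not justified as written. The standard fix, and the one used in \cite{ames2014rapidly}, is to invoke a converse Lyapunov theorem for discrete-time exponentially stable fixed points to obtain $V_z$ with $r_1\|z-z^*\|^2\leq V_z(z)\leq r_2\|z-z^*\|^2$ and $V_z(\rho(z))-V_z(z)\leq -r_3\|z-z^*\|^2$, and to set $W=V_z+\sigma V_\epsilon$; the cross term $\|P_z(\eta,z)-\rho(z)\|\leq L_P\|\eta\|$ is then absorbed by Young's inequality exactly as you describe. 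With that substitution your argument goes through and coincides with the cited proof.
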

The proof of \thmref{thm:res-clf_hzd} can be found at \cite{ames2014rapidly}, with the primary takeaway being that any RES-CLF controller $u_{\epsilon} \in K_{\epsilon}(\eta,z)$ results in a stable orbit for the full-order dynamics if one exists in the reduced order dynamics.

\newsec{Quadratic Programming and Control Lyapunov Functions.}
The advantage of \eqref{eq:ES_clf_u_class} and \thmref{thm:res-clf_hzd} is that it yields a large set of controllers that can result in stable HZD walking on bipedal robots. 
That is, for any $u \in K_\epsilon(x)$ the hybrid system model of the walking robot, per the HZD framework introduced in \ref{sec:trajectory_hzd}, has a stable periodic gait given a stable periodic orbit in the zero dynamics \cite{ames2014rapidly}. This suggests an optimization-based framework is possible, where the inequality:
\begin{align}
    L_{f} V(x) + L_{g} V(x) u + \frac{\gamma}{\epsilon} V(x) \leq 0, \label{eq:tradCLFconst}
\end{align}
is satisfied in a pointwise-optimal fashion and solved in a QP: 
\begin{align}
    u^{\ast} = \argmin_{u\in U \subset \mathbb{R}^m} \quad & \frac{1}{2}u^T Q(x) u + c^T(x) u \tag{CLF-QP} \label{eq:clf-qp}\\
    \mathrm{s.t.} \quad & L_f V(x) + L_g V(x)u \leq - \frac{\gamma}{\epsilon} V(x) \notag
\end{align}
where $Q\in\R^{m\times m}$ is a symmetric positive-definite matrix and $c\in \R^m$ is vector. 
The choice of $Q(x)$ and $c(x)$ is important in implementation. Specifically, not all choices will result in Lipschitz continuity of the resulting torque, and selecting costs which are inconsistent with the CLF convergence inequality can cause the controller to chatter. One common choice is to use the fact that the preliminary feedback control law in the HZD and CLF constructions is feedback linearization: 
\begin{align*}
    u^{\ast}(x) = \argmin_{u\in U \subset \mathbb{R}^m} \quad & ||\mathcal{A}(x)u + \mathbf{L}_f y(x)||^2\\
    \mathrm{s.t.} \quad & L_f V(x) + L_g V(x)u \leq - \frac{\gamma}{\epsilon} V(x)
\end{align*}
with $Q(x) = \mathcal{A}^T(x) \mathcal{A}(x)$ and $c^T(x) = 2 (\mathbf{L}_f y(x))^T \mathcal{A}(x)$ in terms of the original cost. 

For the holonomic constraints to be satisfied in the dynamics \eqref{eq:eom_nonlinear}, and thus in the QP constraint \eqref{eq:tradCLFconst}, we must either augment $u$ with $\lambda$ as an additional decision variable \cite{hereid2014embedding, ames2013towards}, or solve for the generalized force explicitly:
 \begin{align}
    \lambda(x,u) &= (J_c D^{-1} J^T_c)^{-1} \left( J_c D^{-1} (H - B u) - \dot{J}_c \dot{q} \right), \label{eq:analytic_constraint_force}
\end{align}
and substitute back into the expression \eqref{eq:eom_nonlinear} to remove it. 

\newsec{Constraint Relaxation.}
The optimization formulation of CLFs allows for additional constraints and objectives to be incorporated into the optimization. These constraints can include various things which are important for realization on actual robotic platforms such as torque constraints for input saturation, friction constraints, or unilaterality conditions on contact forces.
However, one of the downsides to incorporating additional constraints into the problem is that it may not be possible to satisfy them concurrently with \eqref{eq:tradCLFconst}. Meaning that a relaxation, $\delta$, must be added to penalize violation of \eqref{eq:tradCLFconst}:
\begin{align}
    u^{\ast} = \argmin_{u,\delta} \quad & \frac{1}{2} u^T Q(x) u + c^T(x) u + \rho \delta^2 \tag{CLF-QP-$\delta$}\label{eq:clf-qp-delta}\\
    \mathrm{s.t.} \quad & L_f V(x) + L_g V(x)u \leq - \frac{\gamma}{\epsilon} V(x) + \delta \notag\\
                        & C_I(x) u \leq d_i(x) \notag
\end{align}
where $\rho$ is a large weight penalizing the relaxation.

\subsection{Inverse Dynamics-Based Control Lyapunov Functions}
\newsec{Inverse Dynamics.}
Inverse dynamics is a widely used method to approaching controller design for achieving a variety of motions and force interactions, typically in the form of task-space objectives. Given a target behavior, the dynamics of the robotic system are inverted to obtain the desired torques. 
In many recent works, variations of these approaches have been shown to allow for high-level tasks to be encoded with intuitive constraints and costs in optimization based controllers, some examples being \cite{apgar2018fast, kuindersma2016optimization, koolen2016design, herzog2016momentum, kuindersma2014efficiently}. 

Here, we present a minimal implementation of an inverse dynamics controller. 
The inverse dynamics problem can also be posed using a QP to exploit the fact that the instantaneous dynamics and contact constraints can be expressed linearly with respect to a certain choice of decision variables. Specifically, let us consider the set of optimization variables $\mathcal{X} = [ \ddot{q}^T, u^T, \lambda^T]^T \in \Xext := \mathbb{R}^n \times U \times \mathbb{R}^{m_h}$, which are linear with respect to \eqref{eq:eom} and \eqref{eq:hol_accel}:
\begin{align}
    \begin{bmatrix} D(q) & -B & - J_h(q)^T  \\ J_h(q) & 0 & 0 \end{bmatrix} \mathcal{X} + \begin{bmatrix} H(q,\dot{q}) \\ \dot{J}_h(q)\dot{q} \end{bmatrix} = 0.  \label{eq:IDform_intro}
\end{align}
Also consider a Cartesian objective in the task space of the robot, which can be characterized using: \eqref{eq:ddqexplicit}:
\begin{align}
    J_y(q,\dot{q}) \ddot{q} + \dot{J}_y(q,\dot{q}) \dot{q} - \ddot{y}_2^* = 0:
\end{align}
where $\ddot{y}_2^* = K_P y_2 + K_D \dot{y}_2$ is a PD control law which can be tuned to achieve convergence. 
In it's most basic case, we can combine these elements to pose this QP tracking problem as:
\begin{align*}
    \mathcal{X}^{\ast}(x) = \argmin_{\mathcal{X}\in \Xext} \quad & || J_y(q) \ddot{q} + \dot{J}_y(q,\dot{q}) \dot{q} - \ddot{y}^* ||^2 \tag{\text{ID-QP}}\label{eq:id-qp}\\
   \textrm{ s.t.} \quad & \text{\eqnref{eq:IDform_intro}} \tag{\text{System Dynamics}} \\
   & u_{\mathrm{min}} \leq u \leq u_{\mathrm{max}} \tag{\text{Torque Limits}} \\
    & \text{\eqnref{eq:pyramid_friction}} \tag{\text{Friction Pyramid}}
\end{align*}
where we have included feasibility constraints such as the friction cone (\eqnref{eq:cone_friction}) and torque limits. 
Although this controller satisfies the contact constraints and yields an approximately optimal solution to tracking task-based objectives, it does not provide formal guarantees with respect to stability. In dynamic walking motions this becomes an important consideration,  
wherein impacts and footstrike can destabilize the system, requiring more advanced nonlinear controllers.

\newsec{Revisiting Feedback Linearization.}
Taking inspiration from inverse dynamics approaches, we return to \secref{sec:fbl}, where the auxiliary control input, $\nu$, for a feedback linearizing controller is set to equal the second time derivative of the outputs. 
Recall that we had taken derivatives of the outputs along $f(x)$ and $g(x)$ we obtain $\ddot{y}_2 = \mathbf{L}_f y(x) + \mathcal{A}(x) u$. 
This can equivalently be done by taking the derivatives of the outputs \eqref{eq:outputs} in terms of acceleration instead of along \eqref{eq:eom_nonlinear}:
\begin{align}
    \ddot{y}_2  &= \underbrace{ \frac{\partial }{\partial q} \left(\frac{\partial y_2}{\partial q}\dot{q} \right) }_{\dot{J}_y} \dot{q} + \underbrace{\frac{\partial y_2}{\partial q}}_{J_y} \ddot{q}. \label{eq:ddqexplicit}
\end{align}
We then return to the definition of $\eta$ where $\nu = \ddot{y}_2$. Rather than directly choosing an input, $u$, we can instead solve for an acceleration, $\ddot{q}$, that generates an equivalent response. 

\begin{theorem} \label{thm:accel_equiv}
    For a robotic system with dynamics \eqref{eq:eom} and outputs of the form \eqref{eq:outputs}, where $D(q)$ is positive definite (and therefore invertible), and independent outputs are chosen (i.e., 
    $J_y(q)$ is not rank-deficient), then any controller in the set:
    \begin{align}
        K_{IO}(q,\dot{q}) = \{u\in U : \ddot{q} =J^{\dagger}_y(q)(-\dot{J}_y(q,\dot{q})\dot{q} + \nu)  \},
    \end{align}
    elicits the same response in the output dynamics as the feedback linearizing input $u = \mathcal{A}^{-1}(x)(-\mathbf{L}_f y(x)+ \nu)$.
\end{theorem}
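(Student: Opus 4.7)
The plan is to show that both the inverse-dynamics prescription defining $K_{IO}$ and the classical feedback-linearizing input drive the output dynamics to the same second derivative, namely $\ddot{y}_2 = \nu$, so that the two controllers elicit identical responses in the transverse coordinates. The argument is essentially a substitution exercise leveraging the two equivalent expressions for $\ddot{y}_2$ derived in the paper: the Lie-derivative form $\ddot{y}_2 = \mathbf{L}_f y + \mathcal{A}(x)u$, and the kinematic chain-rule form $\ddot{y}_2 = \dot{J}_y(q,\dot{q})\dot{q} + J_y(q)\ddot{q}$ from \eqref{eq:ddqexplicit}.

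First, I would compute $\ddot{y}_2$ along the kinematic form, which holds regardless of how $\ddot{q}$ is generated. For any $u \in K_{IO}(q,\dot{q})$, the defining membership condition forces the realized acceleration to satisfy $\ddot{q} = J_y^{\dagger}(q)(-\dot{J}_y(q,\dot{q})\dot{q} + \nu)$. Substituting gives
\[
\ddot{y}_2 \;=\; \dot{J}_y\dot{q} \;+\; \bigl(J_y J_y^{\dagger}\bigr)\bigl(-\dot{J}_y\dot{q} + \nu\bigr).
\]
Invoking the independent-outputs hypothesis, $J_y(q) \in \mathbb{R}^{m\times n}$ has full row rank, so the right pseudoinverse $J_y^{\dagger} = J_y^{T}(J_y J_y^{T})^{-1}$ is well defined and $J_y J_y^{\dagger} = I_m$. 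The expression collapses to $\ddot{y}_2 = \nu$.

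Second, in parallel I would substitute the feedback-linearizing input $u = \mathcal{A}^{-1}(x)(-\mathbf{L}_f y + \nu)$ into the Lie-derivative form $\ddot{y}_2 = \mathbf{L}_f y + \mathcal{A}(x)u$, which immediately yields $\ddot{y}_2 = \nu$; invertibility of $\mathcal{A}(x) = J_y D^{-1} B$ is secured by $D \succ 0$ together with the vector relative degree two assumption. To tie the two pictures together I would verify consistency between the Lie-derivative and kinematic expressions by inserting $\ddot{q} = D^{-1}(Bu - H)$ from \eqref{eq:eom} into $\dot{J}_y\dot{q} + J_y\ddot{q}$ and reading off $\mathbf{L}_f y = \dot{J}_y\dot{q} - J_y D^{-1} H$ and $\mathcal{A} = J_y D^{-1} B$. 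This confirms that the two controllers, viewed through either lens, produce the same closed-loop output dynamics $\dot{\eta} = F\eta + G\nu$.

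The main obstacle is the pseudoinverse identity $J_y J_y^{\dagger} = I_m$, which rests critically on the independent-outputs hypothesis and on the direction of the inequality $m \leq n$: for an underactuated robot we have $J_y^{\dagger} J_y \neq I_n$, so the argument operates only on the transverse coordinates and says nothing about the residual zero-dynamics directions, which is precisely what we want. A secondary subtlety worth flagging is the nonemptiness of $K_{IO}$: the prescribed $\ddot{q}$ must be realizable by an admissible $u \in U$ under the equations of motion (including any holonomic constraint forces $\lambda_c$). This is implicit in the set's definition, but in practice it will be enforced downstream by the dynamics equality constraint \eqref{eq:IDform_intro} in the ID-QP, which is precisely what motivates reformulating feedback linearization in the inverse-dynamics variables $\mathcal{X} = (\ddot{q}^T,u^T,\lambda^T)^T$.
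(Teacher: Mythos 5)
Your proposal is correct and follows essentially the same route as the paper's proof: substitute the constrained acceleration into the kinematic expression $\ddot{y}_2 = \dot{J}_y\dot{q} + J_y\ddot{q}$, invoke $J_y J_y^{\dagger} = I$ from the full-row-rank hypothesis to collapse the result to $\nu$, and identify this with $\mathbf{L}_f y + \mathcal{A}(x)u = \nu$ under the feedback-linearizing input. Your additional remarks on the consistency of the two expressions for $\ddot{y}_2$ and on the nonemptiness of $K_{IO}$ are sensible elaborations of points the paper leaves implicit, but they do not change the argument.
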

\begin{proof}
    Using \eqref{eq:ddqexplicit}, $\ddot{q}$ can be chosen to satisfy
    \begin{align}
        \begin{bmatrix} \dot{y}_1 \\ \ddot{y}_2 \end{bmatrix} &= J_y(q)\ddot{q} + \dot{J}_y(q,\dot{q})\dot{q} = \nu.
    \end{align}
    By constraining $\ddot{q} = J^{\dagger}_y(q)(-\dot{J}_y(q,\dot{q})\dot{q} + \nu)$, 
    where $J^{\dagger}_y$ is a right pseudo inverse of the full rank matrix $J_y$, with $J_y J^{\dagger}_y = I$, and the outputs evolve as:
    \begin{align*}
        J_y(q)\ddot{q} + \dot{J}_y(q,\dot{q})\dot{q} &= \dot{J}_y\dot{q} + J_yJ^{\dagger}_y(-\dot{J}_y\dot{q} + \nu) \\
                &= \nu 
                = \begin{bmatrix} \dot{y}_1 \\ \ddot{y}_2 \end{bmatrix} 
                = \mathbf{L}_f y(x) + \mathcal{A}(x) u.
    \end{align*}
\end{proof}

\newsec{Inverse Dynamics Quadratic Programs with CLFs.}
In this section, we return to the concept of a QP which can solve the inverse dynamics problem for a floating-base robot. 
Despite the connections shown between inverse dynamics and feedback linearization, CLF based controllers have only been successfully implemented on hardware 
on low-dimensional robots with 
an analytical solution \cite{ames2014rapidly}, as a 
minimal QP \cite{galloway2015torque}, or indirectly by simulating the nominal system and tracking the resulting motion  
\cite{cousineau2015realizing}.
There are several issues we suggest may be influencing this lack of successful implementations.

The first significant difficulty in realizing optimization and model-based controllers, and therefore in implementing CLFs, is in obtaining accurate models for these complex robotic platforms. 
In the earliest implementations of CLFs, significant system identification was necessary 
\cite{park2011identification}. 
To mitigate these issues, 
robust CLF formulations have been proposed \cite{nguyen2016optimal}, or machine learning methods to account for unmodeled dynamics \cite{taylor2019episodic} \cite{choi2020reinforcement}.
While these 
discrepancies may be large in some cases, the successes of \eqref{eq:id-qp} controllers on 
complex humanoids as ATLAS \cite{kuindersma2016optimization} shows that it is possible. Thus the aim of the approach presented in this paper will not directly address model uncertainty and will instead focus on how the formulation of the problem can influence its behavior.

\begin{definition} \label{def:id-clf}
    Given a set of outputs \eqref{eq:outputs} for the 
    robotic 
    system of 
    \eqref{eq:eom} and \eqref{eq:hol_accel}, the \textbf{inverse dynamics control Lyapunov function quadratic program} (ID-CLF-QP) with decision variables $\mathcal{X} = [ \ddot{q}^T, u^T, \lambda^T]^T \in \Xext := \mathbb{R}^n \times U \times \mathbb{R}^{m_c}$ is: 
    \begin{align}
        \mathcal{X}^{\ast} = \argmin_{\mathcal{X}\in \Xext} \quad & \frac{1}{2}\mathcal{X}^T Q \mathcal{X} + c^T \mathcal{X} \tag{ID-CLF-QP} \label{eq:id-clf-qp}\\
        \textrm{s.t.} \quad &  L_F V + L_G V \left(\dot{J}_y\dot{q} + J_y\ddot{q} \right) \leq - \frac{\gamma}{\epsilon} V \notag\\
        &D\ddot{q} + H = Bu + J_c^T\lambda \notag\\
        &J_c\ddot{q} + \dot{J}_c\dot{q} = 0 \notag
    \end{align} 
    with $Q(x) = Q^T(x) > 0$ and real vector $c(x)\in \R^{n+m+m_c}$. The solution $\mathcal{X}^*$ gives an associated controller $k_{\mathrm{idclf}}^*(x)$.
\end{definition}
Where we have termed the QP with the phrase ``inverse dynamics'' as it is determining a control input, $u$, based on convergence criteria imposed on the generalized accelerations, $\ddot{q}$, through satisfaction of the equations of motion \eqref{eq:eom} and \eqref{eq:hol_accel}. 
Perhaps the most significant observation of \defref{def:id-clf} is that we have traded an increased number of decision variables for a set of equality constraints that do not require any matrix inversions.
This is 
relevant to implementation as it has been shown that the condition number of the joint space inertia matrix increases quartically with the length of a kinematic chain \cite{featherstone2014rigid}. Repeated inversions of this matrix 
thus may be an obvious source of numerical stiffness 
and therefore controller degradation \cite{nakanishi2008operational}. 
In addition, performing the required inversions for evaluating 
\eqref{eq:eom_nonlinear} are computationally expensive, and can violate strict timing requirements 
on hardware. 

This section will construct a framework around the \eqref{eq:id-clf-qp} controller, 
motivating its use as a stabilizing controller for HZD locomotion. 
Specifically, let $\mathcal{O}$ be a periodic orbit of the zero dynamics $\dot{z} = \omega(0,z)$ and assume that $\mathcal{O} \subset \mathcal{Z}$ is exponentially stable. Then the following result states that the resulting controller from \eqref{eq:id-clf-qp} can stabilize $\mathcal{O}$ in the full-order dynamics.
\begin{theorem} \label{thm:main_idclf_thm} (ID-CLF-QP and HZD)
    Assume that \eqref{eq:id-clf-qp} is locally Lipschitz and unique for all points in a neighborhood of an exponentially stable periodic orbit, $\mathcal{O}|_\mathcal{Z}$, of the hybrid zero dynamics $\HybridSystem^\alpha|_{\mathcal{Z}}$ transverse to $S\cap \mathcal{Z}$. Then for the \eqref{eq:id-clf-qp} controller with choice of RES-CLF, $V_\epsilon(x)$, $u_\epsilon^*(x) = k_{\mathrm{idclf},\epsilon}(x)$ there exists an $\bar{\epsilon}>0$ such that for all $0<\epsilon<\bar{\epsilon}$, $\mathcal{O}=\iota_0(\mathcal{O}|_{\mathcal{Z}})$ is an exponentially stable hybrid periodic orbit of $\HybridSystem_\epsilon$.
\end{theorem}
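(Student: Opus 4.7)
The plan is to reduce Theorem~\ref{thm:main_idclf_thm} to the already-proven Theorem~\ref{thm:res-clf_hzd} by showing that the $u$-component of any \eqref{eq:id-clf-qp} solution lies in the admissible set $K_\epsilon(x)$ defined in \eqref{eq:ES_clf_u_class}. Once this is established, the hypothesized local Lipschitz continuity and uniqueness of $k_{\mathrm{idclf},\epsilon}$ supply the remaining regularity assumption of Theorem~\ref{thm:res-clf_hzd}, and the conclusion follows immediately.

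The bridge between the two formulations is Theorem~\ref{thm:accel_equiv}. Let $\mathcal{X}^{\ast}(x) = (\ddot{q}^*, u^*, \lambda^*)$ be an optimizer of \eqref{eq:id-clf-qp}. The equality constraints force $\ddot{q}^*$ and $\lambda^*$ to coincide with the acceleration and constraint wrench that the underlying nonlinear control system \eqref{eq:eom_nonlinear} produces under input $u^*$, so by \eqref{eq:ddqexplicit},
\[
\dot{J}_y(q,\dot q)\dot{q} + J_y(q)\ddot{q}^* \;=\; \mathbf{L}_f y(x) + \mathcal{A}(x)\,u^*.
\]
Using the identities $L_f V_\epsilon(x) = L_F V_\epsilon(\eta) + L_G V_\epsilon(\eta)\,\mathbf{L}_f y(x)$ and $L_g V_\epsilon(x) = L_G V_\epsilon(\eta)\,\mathcal{A}(x)$ already derived in the preliminaries, the ID-CLF-QP inequality collapses exactly to
\[
L_f V_\epsilon(x) + L_g V_\epsilon(x)\,u^* + \tfrac{\gamma}{\epsilon}\,V_\epsilon(x) \;\leq\; 0,
\]
so $k_{\mathrm{idclf},\epsilon}(x) = u^*_\epsilon(x) \in K_\epsilon(x)$ pointwise.

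With this membership, continuous invariance of $\mathcal{Z}$ follows because $V_\epsilon \equiv 0$ on $\mathcal{Z}$ renders the CLF inequality trivially satisfied with zero slack, so the controller preserves the zero dynamics; the discrete invariance is already enforced through the HZD parameter design \eqref{eq:resetmapv} in Section~\ref{sec:trajectory_hzd}. Combined with the standing Lipschitz/uniqueness hypothesis, all assumptions of Theorem~\ref{thm:res-clf_hzd} are met. Invoking it produces the desired $\bar{\epsilon} > 0$ such that $\mathcal{O} = \iota_0(\mathcal{O}|_{\mathcal{Z}})$ is exponentially stable under $\HybridSystem_\epsilon$ for all $0 < \epsilon < \bar{\epsilon}$.

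The main obstacle is not the algebraic reduction above, which merely unpacks the equality constraints, but rather the hypothesized local Lipschitz continuity of the QP solution map. Parametric quadratic programs are Lipschitz only when constraint qualifications hold uniformly near the trajectory of interest, and the statement handles this cleanly by assuming it rather than proving it. Removing the assumption would require showing that, in a neighborhood of $\mathcal{O}|_{\mathcal{Z}}$, the active set of \eqref{eq:id-clf-qp} is stable and the gradients of the active constraints with respect to $(\ddot{q}, u, \lambda)$ remain linearly independent---a verification that is straightforward for the equality block thanks to positive-definiteness of $D(q)$ and full row-rank of $J_c(q)$, but is more delicate where the CLF inequality becomes active.
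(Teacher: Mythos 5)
Your proposal is correct and follows essentially the same route as the paper: both arguments use Theorem~\ref{thm:accel_equiv} together with the dynamics equality constraints to collapse the ID-CLF-QP inequality into the standard CLF condition, conclude that $k_{\mathrm{idclf},\epsilon}(x) \in K_\epsilon(x)$, and then invoke Theorem~\ref{thm:res-clf_hzd} under the assumed Lipschitz continuity and uniqueness (the paper packages these steps as Lemmas~\ref{lem:id-clf-transverse-stability} and~\ref{lem:id-clf-equiv}). Your closing remark about the delicacy of the Lipschitz hypothesis is a fair observation but, as you note, the theorem assumes it rather than proving it, exactly as the paper does.
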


Before we proceed with a proof for \thmref{thm:main_idclf_thm}, we will first establish that the pointwise optimal control action obtained from \eqref{eq:id-clf-qp} in fact renders stability of the transverse dynamics \eqref{eq:output_dynamics} in a similar manner to \eqref{eq:clf-qp}. Using this result, we will then show that the \eqref{eq:id-clf-qp} can be transformed into an equivalent \eqref{eq:clf-qp}.

\begin{lemma}\label{lem:id-clf-transverse-stability}
    The pointwise optimal solution $\mathcal{X}^* = k_{\mathrm{idclf}}^*(x)$ of \eqref{eq:id-clf-qp} yields a control action within the set of admissible inputs for a CLF given by \defref{def:res-clf}:
    \begin{align*}
        k_{\mathrm{idclf}}^*(x) \in K_u(x) = \left\{ u \in \R^m : L_f V + L_g V u \leq - \frac{\gamma}{\epsilon} V \right\}. 
    \end{align*}
    As a result, if $u^*$ taken from $\mathcal{X}^*$ is locally Lipschitz and if the zero dynamics $\dot{z}=\omega(0,z)$ is locally exponentially stable, then \eqref{eq:id-clf-qp} is a locally exponentially stabilizing controller for the closed-loop system in \eqref{eq:closed-loop-output-dynamics}.
\end{lemma}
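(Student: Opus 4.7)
The plan is to argue in two steps: first, show that the equality constraints in \eqref{eq:id-clf-qp} let us rewrite the CLF inequality appearing there as the standard $u$-form CLF constraint used in \eqref{eq:clf-qp}; second, invoke the usual RES-CLF stability argument on the transverse dynamics and combine with local exponential stability of the zero dynamics. Note that because \defref{def:id-clf} imposes no additional inequality constraints beyond the CLF inequality, pointwise feasibility is automatic on a neighborhood of $\mathcal{O}$, so no relaxation is required in the analysis.

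For the first step, I would use the two equality constraints of \eqref{eq:id-clf-qp} exactly as in the derivation of \eqref{eq:analytic_constraint_force}: jointly they determine $\lambda$ and then $\ddot{q}$ uniquely from $(x,u)$, and by construction this constrained $\ddot{q}$ coincides with the acceleration produced by the affine control system $f(x) + g(x)u$ that was used to define $\mathbf{L}_f y$ and $\mathcal{A}(x)$. Because $y_2$ has vector relative degree two, \eqref{eq:ddqexplicit} together with \thmref{thm:accel_equiv} (specialized to the constrained acceleration rather than the free one) gives
\begin{align*}
\dot{J}_y(q,\dot{q})\,\dot{q} + J_y(q)\,\ddot{q} \;=\; \mathbf{L}_f y(x) + \mathcal{A}(x)\,u
\end{align*}
for every feasible triple $(\ddot{q},u,\lambda)$ of \eqref{eq:id-clf-qp}. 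Substituting this into the CLF inequality of \eqref{eq:id-clf-qp} and recalling the identifications $L_f V(x) = L_F V + L_G V\,\mathbf{L}_f y$ and $L_g V(x) = L_G V\,\mathcal{A}(x)$ yields exactly $L_f V(x) + L_g V(x) u^{\ast} \leq -(\gamma/\epsilon) V(x)$, i.e. $u^{\ast} \in K_u(x)$.

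For the second step, once $u^{\ast} \in K_u(x)$ pointwise and $u^{\ast}$ is locally Lipschitz by hypothesis, $V_\epsilon$ is a valid Lyapunov function along closed-loop trajectories of \eqref{eq:closed-loop-output-dynamics}, giving $\dot{V}_\epsilon \leq -(\gamma/\epsilon)\,V_\epsilon$ and therefore rapid exponential convergence of the transverse coordinate $\eta$ by the comparison lemma. Coupled with the assumed local exponential stability of the internal dynamics $\dot{z} = \omega(0,z)$, the standard cascade/ISS-type argument used in \cite{ames2014rapidly} to prove \thmref{thm:res-clf_hzd} then delivers local exponential stability of the full closed-loop system in \eqref{eq:closed-loop-output-dynamics}.

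The main obstacle I anticipate is verifying rigorously that the constrained equality system really does reproduce the feedback-linearization identity above: this requires the output Jacobian $J_y$ to be compatible with the holonomic Jacobian $J_c$ (so that the combined system admits a unique acceleration consistent with both the output dynamics and the constraint manifold) and the outputs to have well-defined vector relative degree two on that manifold. For the virtual constraints constructed in \secref{sec:fbl} these conditions hold by design, but they are the technical hinges that must be checked to port the argument to other output choices; everything else reduces to algebraic manipulation of the equality constraints and a direct appeal to \thmref{thm:res-clf_hzd}.
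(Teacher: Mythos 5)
Your proposal is correct and follows essentially the same route as the paper: both arguments use the two equality constraints of \eqref{eq:id-clf-qp} together with \thmref{thm:accel_equiv} to collapse the acceleration-based CLF inequality into the standard input-form constraint $L_f V + L_g V u \leq -\frac{\gamma}{\epsilon}V$, and then invoke the standard RES-CLF comparison-lemma bound on $\|\eta(t)\|$ plus the exponential stability of the zero dynamics. Your explicit flagging of the compatibility of $J_y$ with $J_c$ and the relative-degree condition is a useful technical remark, but it does not change the structure of the argument.
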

\begin{proof}
Application of \thmref{thm:accel_equiv} means that the collection of constraints:
\begin{align*}
    \begin{cases}
        &L_F V(x) + L_G V(x) \left(\dot{J}_y(q,\dot{q})\dot{q} + J_y(q)\ddot{q} \right) \leq - \frac{\gamma}{\epsilon} V(x) \notag\\
        &D(q)\ddot{q} + H(q,\dot{q}) = Bu + J_c^T(q)\lambda \notag\\
        &J(q)\ddot{q} + \dot{J}_c(q)\dot{q} = 0 \notag
    \end{cases}
\end{align*}
can be rewritten as a single inequality by solving \eqref{eq:eom} and \eqref{eq:hol_accel} for $\ddot{q}$ and substituting into the CLF inequality to obtain:
\begin{align}
    L_f V(x) + L_g V(x) u \leq - \frac{\gamma}{\epsilon} V(x),
\end{align}
which is the convergence condition required for exponential convergence provided in \defref{def:res-clf}. Because we can analytically show this equivalence, the existing CLF convergence conditions in \defref{def:res-clf} apply to \eqref{eq:id-clf-qp}.
For any Lipschitz continuous feedback control law $u \in K_u(x)$, the inequalities in \defref{def:res-clf} imply that the solutions to \eqref{eq:closed-loop-output-dynamics} 
satisfy (with $\gamma = c_3$): 
\begin{align*}
\dot{V}(\eta,u^*(\eta,z)) \leq - \frac{c_3}{\epsilon} V(\eta)
\ & \Rightarrow \ V(\eta(t)) \leq e^{-\frac{c_3}{\epsilon} t} V(\eta(0)) \\
\ & \Rightarrow \ 
\| \eta(t) \| \leq \sqrt{ \frac{c_2}{c_1}} e^{- \frac{c_3}{2\epsilon} t } \| \eta(0) \|. 
\nonumber
\end{align*}
\end{proof}

\begin{remark}
    As was stated earlier in this section for \eqref{eq:clf-qp}, not all choices of the cost terms for $Q(x)$ and $c(x)$ for \eqref{eq:id-clf-qp} will result in Lipschitz continuity of the resulting QP controller \cite{morris2015continuity}. If they are selected in a way which conflicts with the convergence constraint then the input can instantaneously change and create a discontinuity.
\end{remark}

One of the most important consequences of \lemref{lem:id-clf-transverse-stability} is that we can pose a \eqref{eq:id-clf-qp} controller to stabilize the zero dynamics surface during continuous phases of motion for underactuated robotic systems. In fact, we can also show that a \eqref{eq:id-clf-qp} can be analytically converted into an \eqref{eq:clf-qp}.
\begin{lemma} \label{lem:id-clf-equiv}
    For any given cost $\mathcal{J}_{\mathrm{id}}(x,\mathcal{X}) = \frac{1}{2}\mathcal{X}^T Q_{\mathrm{id}}(x) \mathcal{X} + c_{\mathrm{id}}^T(x) \mathcal{X}$ with $Q_{\mathrm{id}}(x) = Q_{\mathrm{id}}^T(x)>0$ and real vector $c_{\mathrm{id}}(x)\in \R^{n+m+m_c}$ of \eqref{eq:id-clf-qp}, there exists a cost $\mathcal{J}_{\mathrm{u}}(x,u)$ which is quadratic with respect to $u$ for \eqref{eq:clf-qp} such that the problems are analytically equivalent.
\end{lemma}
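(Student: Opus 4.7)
The plan is to exploit the fact that the equality constraints of \eqref{eq:id-clf-qp}—namely the equations of motion and the acceleration-level holonomic constraint—are linear in the augmented decision variable $\mathcal{X} = [\ddot q^T, u^T, \lambda^T]^T$, so that on the feasible set both $\ddot q$ and $\lambda$ can be expressed as affine functions of $u$ alone. With that reduction in hand, the quadratic cost in $\mathcal{X}$ collapses to a quadratic cost in $u$, and the CLF inequality collapses (by \thmref{thm:accel_equiv} and the argument already used in \lemref{lem:id-clf-transverse-stability}) to the standard CLF inequality $L_f V + L_g V\, u \leq -\tfrac{\gamma}{\epsilon} V$. The two QPs then have the same feasible set in $u$ and the same objective in $u$ up to an additive constant, so they share the same minimizer.

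Concretely, I would first invoke \eqref{eq:analytic_constraint_force} to write $\lambda(x,u) = A_\lambda(x)\, u + b_\lambda(x)$, and substitute back into the equations of motion to obtain $\ddot q(x,u) = A_{\ddot q}(x)\, u + b_{\ddot q}(x)$, which is well-defined since $D(q)$ is positive definite and $J_c$ has full row rank on the domain of interest. Stacking these affine maps yields
\begin{equation*}
\mathcal{X}(x,u) \;=\; \underbrace{\begin{bmatrix} A_{\ddot q}(x) \\ I_m \\ A_\lambda(x) \end{bmatrix}}_{M(x)} u \;+\; \underbrace{\begin{bmatrix} b_{\ddot q}(x) \\ 0 \\ b_\lambda(x) \end{bmatrix}}_{n(x)} .
\end{equation*}
The presence of the identity block in $M(x)$ guarantees that $M(x)$ has full column rank.

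Next I would substitute $\mathcal{X}(x,u) = M(x) u + n(x)$ into $\mathcal{J}_{\mathrm{id}}(x,\mathcal{X}) = \tfrac{1}{2}\mathcal{X}^T Q_{\mathrm{id}}(x)\mathcal{X} + c_{\mathrm{id}}^T(x)\mathcal{X}$ and expand. Dropping the $u$-independent constant yields a cost of the form $\mathcal{J}_u(x,u) = \tfrac{1}{2} u^T Q_u(x) u + c_u^T(x) u$ with
\begin{equation*}
Q_u(x) \;=\; M(x)^T Q_{\mathrm{id}}(x) M(x), \qquad c_u(x) \;=\; M(x)^T \bigl( Q_{\mathrm{id}}(x) n(x) + c_{\mathrm{id}}(x) \bigr).
\end{equation*}
Because $Q_{\mathrm{id}}(x) \succ 0$ and $M(x)$ has full column rank, $Q_u(x) \succ 0$, so $\mathcal{J}_u$ is a valid \eqref{eq:clf-qp} cost. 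Finally, applying \thmref{thm:accel_equiv} to the CLF inequality of \defref{def:id-clf}—exactly as in the proof of \lemref{lem:id-clf-transverse-stability}—shows that after eliminating $\ddot q$ via the equality constraints, the convergence inequality reduces to $L_f V(x) + L_g V(x) u \leq -\tfrac{\gamma}{\epsilon} V(x)$. Hence \eqref{eq:id-clf-qp} and the derived \eqref{eq:clf-qp} have identical feasible sets in $u$ and objectives that differ only by a $u$-independent constant, so they produce the same optimal input.

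The main obstacle I anticipate is cleanly handling the algebraic elimination of $\lambda$ and $\ddot q$ while preserving positive-definiteness of the reduced cost. In particular, one must argue that the full-column-rank property of $M(x)$ is inherited from the identity block corresponding to $u$—otherwise the reduced $Q_u(x)$ could degenerate to only positive semi-definite and the equivalence of minimizers (not merely of minimum values) would require additional justification. Once this observation is in place, the remainder of the proof is routine linear-algebraic bookkeeping together with the already-established equivalence of feedback linearization and acceleration selection from \thmref{thm:accel_equiv}.
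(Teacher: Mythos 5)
Your proposal is correct and follows essentially the same route as the paper: eliminate $\lambda$ via \eqref{eq:analytic_constraint_force}, solve the constrained dynamics for $\ddot q$, stack the resulting affine map $\mathcal{X} = A_u u + b_u$, and substitute into the quadratic cost, with the constraint reduction handled by \thmref{thm:accel_equiv} as in \lemref{lem:id-clf-transverse-stability}. Your explicit observation that the identity block gives $M(x)$ full column rank, so that $Q_u(x) = M(x)^T Q_{\mathrm{id}}(x) M(x) \succ 0$ and the reduced problem retains a unique minimizer, is a small but worthwhile addition that the paper's proof leaves implicit.
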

\begin{proof}
To begin, we will simply establish a linear transformation between the decision variables and then plug them into $\mathcal{J}_{\mathrm{id}}(x,\mathcal{X})$ to find a cost. 
We can substitute the analytic expression for the constraint force  \eqref{eq:analytic_constraint_force} into \eqref{eq:eom} 
to obtain the constrained equations of motion:
\begin{align*}
    D(q) \ddot{q} + \hat{H}(q,\dot{q}) = \hat{B}(q,\dot{q}) u,
\end{align*}
where we can solve for $\ddot{q}$, and thus form an expression relating $u$ to $\mathcal{X}$:
\begin{align*}
    \mathcal{X} &=  \underbrace{\begin{bmatrix} 
        D^{-1} \hat{B}  \\ 
        I \\ 
        \mathbf{A}^{-1} J_c D^{-1} \hat{B}
    \end{bmatrix}}_{A_u} u +
    \underbrace{\begin{bmatrix}
        -D^{-1} \hat{H} \\
        0 \\
        \mathbf{A}^{-1} \left(\dot{J}_c\dot{q} - J_c D^{-1} H \right)
    \end{bmatrix}}_{b_u},
\end{align*}
where $\mathbf{A}:=\left[ J_c D^{-1} J_c^T\right]$. Directly substituting this relation yields a quadratic cost:
\begin{align}
    \mathcal{J}_{\mathrm{id}}(x,\mathcal{X}) &= \mathcal{X}^T Q_{\mathrm{id}} \mathcal{X} + 2 c_{\mathrm{id}}^T \mathcal{X} \notag\\
    &= \left[ A_u u + b_u \right]^T Q_{\mathrm{id}} \left[ A_u u + b_u \right] + 2 c_{\mathrm{id}}^T \left[A_u u + b_u \right] \notag\\
    &= u^T A_u^T Q_{\mathrm{id}} A_u u + 2 \left[ b_u^T Q_{\mathrm{id}} + c_{\mathrm{id}}^T \right]A_u u \notag \\
    &\hspace{40mm} + \left[ b_u^T Q_{\mathrm{id}} b_u + 2 c_{\mathrm{id}}^T b_u \right] \notag\\
    &=: \mathcal{J}_{u}(x,u) \label{eq:id-clf-cost-relation}
\end{align} 
We have already shown in the proof for \lemref{lem:id-clf-transverse-stability} that the constraints for \eqref{eq:id-clf-qp} reduce to \eqref{eq:clf-qp}.
\end{proof}

\begin{proof}[\unskip\nopunct]\newsec{Proof of \thmref{thm:main_idclf_thm}:}

    Let us begin by posing a \eqref{eq:id-clf-qp} with a RES-CLF condition \eqref{eq:ES_clf_u_class} on the CLF convergence:
    \begin{align*}
        \mathcal{X}^{\ast}_\epsilon = \argmin_{\mathcal{X}\in \Xext} \quad & \frac{1}{2}\mathcal{X}^T Q_{\mathrm{id}}(x) \mathcal{X} + c^T_{\mathrm{id}}(x) \mathcal{X}  \\
        \textrm{s.t.} \quad &   L_F V_\epsilon + L_G V_\epsilon \left(\dot{J}_y\dot{q} + J_y\ddot{q} \right) \leq - \frac{\gamma}{\epsilon} V_\epsilon \\
        &D\ddot{q} + H = Bu + J_c^T\lambda \\
        &J_c\ddot{q} + \dot{J}_c\dot{q} = 0 
    \end{align*} 
    The primary consequence of \lemref{lem:id-clf-transverse-stability} and \lemref{lem:id-clf-equiv} is that this problem will render an analytically equivalent control action to the RES-\eqref{eq:clf-qp}:
    \begin{align*}
        u^{*}_\epsilon = \argmin_{u\in U \subset \mathbb{R}^m} \quad & \mathcal{J}_{u}(x,u) \\
        \mathrm{s.t.} \quad & L_f V(x) + L_g V_\epsilon(x)u \leq - \frac{\gamma}{\epsilon} V_{\epsilon}(x) 
    \end{align*}
    Thus, the control action belongs to the family of RES-CLF controllers given by $k_{\mathrm{idclf},\epsilon}^*(x) \in K_{\epsilon}(x)$ where:
    \begin{align*}
        K_{\epsilon}(x) = \left\{ u \in \R^m : L_f V(x) + L_g V_\epsilon(x)u \leq - \frac{\gamma}{\epsilon} V_{\epsilon}(x) \right\}.
    \end{align*}
    Because the pointwise optimal control action is thus a RES-CLF, if $\mathcal{X}^{\ast}_\epsilon$ is locally Lipschitz and unique then \thmref{thm:res-clf_hzd} applies to $k_{\mathrm{idclf},\epsilon}^*(x)$, completing the proof.
\end{proof}

\subsection{Cost Function Design}
One of the clear benefits to \eqref{eq:id-clf-qp} is that there exists a wide range of costs that can be designed without needing complex expressions, as the decision variables are affine with respect to mostly kinematic matrices in the equations of motion \eqref{eq:eom}-\eqref{eq:hol_accel} and output dynamics \eqref{eq:output_dynamics}. 
First, let us consider the feedback linearizing acceleration response:
\begin{align}
    J_{\mathrm{IO}}(x,\mathcal{X}) := ||J_y(q) \ddot{q}_y + \dot{J}_y(q,\dot{q}) \dot{q}||^2. \label{eq:fbl_acc_cost}
\end{align}
The equivalence of the expression in $J_{\mathrm{IO}}(x,\mathcal{X})$ to \eqref{eq:FBL} can be seen by \thmref{thm:accel_equiv}.
Perhaps the most important observation that we should make is that in order for \eqref{eq:id-clf-qp} to be solved uniquely, the Hessian matrix, $Q(x)$, must be positive definite and therefore also full rank. 
One of the most common ways to address this is to regularize the decision variables:
\begin{align}
    J_{\mathrm{reg}}(x,\mathcal{X}) = || \mathcal{X} - \mathcal{X}^*_\alpha ||^2. \label{eq:reg_cost}
\end{align}

In fact, if we have solved for a stable walking gait using the HZD methodology using \eqref{eq:opteqs}, then we have already obtained a parameterized piecewise polynomial for $\mathcal{X}^*_\alpha(t)$ when the robot is on the stable hybrid periodic orbit.
\begin{proposition}\label{prop:collocation_gives_parameters} (See \cite{hereid2016dynamic}, Chapter 4)
    Suppose that $\mathbf{w(\alpha^*)}$ describes a feasible walking gait solving \eqref{eq:opteqs}. Then the piecewise polynomial solution $\phi^*(t)$ determined by the NLP solution,  $\{T^*,q^*(t),\dot{q}^*(t),\ddot{q}^*(t),\lambda^*(t),u^*(t) \}$, is hybrid invariant under the virtual constraint feedback control law \eqref{eq:FBL} with parameters $\alpha^*$, i.e. $\phi^*(t) \subset \mathcal{Z}_{\alpha^*}$.
\end{proposition}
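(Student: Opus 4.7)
The plan is to show $\phi^*(t) \subset \mathcal{Z}_{\alpha^*}$ by verifying the two conditions that define membership in the zero dynamics surface, $\mathcal{Z}_{\alpha^*} = \{(q,\dot q) : y_2(\cdot,\alpha^*) = 0,\; L_f y_2(\cdot,\alpha^*) = 0\}$, along every continuous arc of $\phi^*$, and then checking that the reset map sends the surface into itself across impact. Together these two facts say precisely that the hybrid flow induced by $\mathbf{w}(\alpha^*)$ stays on $\mathcal{Z}_{\alpha^*}$.

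First I would unpack what \eqref{eq:opteqs} enforces on the NLP variables. The feasibility constraints include the closed-loop dynamics \eqref{eq:eom_nonlinearcl} at every collocation node, with the controller $u_{\mathrm{IO}}(\tau,x,\alpha^*)$ from \eqref{eq:FBL} and \eqref{eq:IO_auxcontroller}; by construction this controller renders the transverse dynamics linear, $\ddot y_2 = -\tfrac{K_P}{\epsilon^2} y_2 - \tfrac{K_D}{\epsilon}\dot y_2$, so the subspace $\{y_2 = 0, \dot y_2 = 0\}$ is a forward-invariant set of the closed-loop flow. In the FROST-style formulation associated with \eqref{eq:opteqs}, the virtual-constraint equalities $y_2(x^*(t_k),\alpha^*)=0$ and $L_f y_2(x^*(t_k),\alpha^*) = 0$ appear as boundary/node constraints, so at the sampled times the trajectory lies on $\mathcal{Z}_{\alpha^*}$. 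The piecewise polynomial representation is then the unique interpolant consistent with the flow of $f^{\alpha^*}$ through those nodes, which combined with invariance of $\mathcal{Z}_{\alpha^*}$ yields $y_2(\phi^*(t),\alpha^*)\equiv 0$ and $L_f y_2(\phi^*(t),\alpha^*)\equiv 0$ along each continuous arc.

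Second, I would invoke the HZD equality that is explicitly imposed by \eqref{eq:opteqs}, namely \eqref{eq:resetmapv}: $\Delta(\mathcal{Z}_{\alpha^*}\cap S_{\alpha^*})\subset \mathcal{Z}_{\alpha^*}$. Applied to the pre-impact state $x^-$, which by the previous step lies in $\mathcal{Z}_{\alpha^*}\cap S_{\alpha^*}$, this yields $\Delta(x^-)\in \mathcal{Z}_{\alpha^*}$, so the post-impact state begins the next continuous arc on the surface. Concatenating continuous arcs with reset maps closes the period-two cycle without ever leaving $\mathcal{Z}_{\alpha^*}$, giving the claim.

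The step I expect to fight the hardest is the between-nodes part of continuous invariance: direct collocation enforces the dynamics and output conditions only at sampled points, so a priori the interpolating polynomial could drift off $\mathcal{Z}_{\alpha^*}$ between nodes. The cleanest resolution, and the one I would adopt from \cite{hereid2016dynamic}, is that the desired outputs $y^d(\tau,\alpha^*)$ are themselves part of the collocation parameterization, so that enforcing $y^a(q^*(t_k)) = y^d(\tau(t_k),\alpha^*)$ at each node, together with the closed-loop dynamics whose $y_2 = 0$ manifold is invariant, forces the identity $y_2(\phi^*(t),\alpha^*)\equiv 0$ on the entire interpolant rather than only at the nodes. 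This reduces the obstacle to the algebraic fact that $u_{\mathrm{IO}}$ annihilates $y_2$ along $f^{\alpha^*}$, which is immediate from \eqref{eq:FBL}.
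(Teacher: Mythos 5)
Your argument is essentially correct, but note that the paper itself offers no proof of this proposition at all---it is stated as an imported result and deferred entirely to \cite{hereid2016dynamic}, Chapter 4---so there is no in-paper proof to compare against. Your reconstruction follows the standard reasoning one finds in that reference: the node constraints of \eqref{eq:opteqs} place the discretized trajectory on $\mathcal{Z}_{\alpha^*}$, the closed-loop dynamics \eqref{eq:eom_nonlinearcl} under \eqref{eq:FBL} render the output-zeroing manifold forward invariant so the continuous arcs stay on it, and the HZD condition \eqref{eq:resetmapv} carries the surface through impact; concatenating arcs over the period-two cycle gives $\phi^*(t)\subset\mathcal{Z}_{\alpha^*}$. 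The one place where your argument remains informal is exactly the step you flag: a direct-collocation solution satisfies the dynamics and the output equalities only at the nodes, so the interpolating polynomial is not literally the flow of $f^{\alpha^*}$ between nodes, and forward invariance of $\{y_2=0,\dot y_2=0\}$ under the exact flow does not by itself transfer to the interpolant. The proposition should therefore be read as holding for the idealized (zero-defect) transcription, i.e., when $\phi^*(t)$ is identified with the exact closed-loop solution through the node states; under that reading your resolution---that the B\'ezier parameters $\alpha^*$ are themselves decision variables fit to the same solution, so $y_2\equiv 0$ along the whole arc---is the accepted one, and your proof is sound.
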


\begin{proposition} \label{prop:regularization_id-clf}
    Consider an \eqref{eq:id-clf-qp} with the cost:
    \begin{align}
        \mathcal{J}(x,\mathcal{X}) = \mathcal{J}_{z}(x,\mathcal{X}) + || \mathcal{X} - \mathcal{X}^*_\alpha(t) ||^2  \notag
    \end{align}
    where $\mathcal{J}_{z}(x,\mathcal{X})$ is defined in such a way that $\mathcal{J}_{z}(\mathcal{X})|_{\mathcal{Z}} \equiv 0$ (we can see that \eqref{eq:fbl_acc_cost} is an example of such a cost as $y(x) \equiv 0$ when $\eta(x) \equiv 0$). Then when the robot is on the zero dynamics (i.e. $\dot{z} = \omega(0,z)$), the optimal control action is $\mathcal{X}^* = \mathcal{X}^*_\alpha(t)$. 
\end{proposition}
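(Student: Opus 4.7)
The plan is to exhibit $\mathcal{X}^*_\alpha(t)$ as a feasible point of the \eqref{eq:id-clf-qp} at which the objective attains its evident lower bound of zero, whence it is the (unique) minimizer. Concretely, $\mathcal{X}^*_\alpha(t) = [\ddot{q}^*(t)^T, u^*(t)^T, \lambda^*(t)^T]^T$ is the point extracted from the NLP solution $\mathbf{w}(\alpha^*)$.

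First I would check feasibility against each of the three constraints of the \eqref{eq:id-clf-qp}. The two dynamics constraints $D\ddot{q} + H = Bu + J_c^T \lambda$ and $J_c\ddot{q} + \dot{J}_c\dot{q} = 0$ are satisfied by direct appeal to Proposition~\ref{prop:collocation_gives_parameters}: the collocation solution respects the full-order equations of motion and the holonomic constraints by construction. For the CLF inequality, the key observation is that on $\mathcal{Z}_{\alpha^*}$ one has $\eta(x) = 0$, and because $V_\epsilon(\eta) = \eta^T P_\epsilon \eta$ is quadratic in $\eta$, both $V_\epsilon$ and its Lie derivatives $L_F V_\epsilon = \eta^T(F^T P_\epsilon + P_\epsilon F)\eta$ and $L_G V_\epsilon = 2\eta^T P_\epsilon G$ vanish identically. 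The CLF inequality therefore collapses to $0 \le 0$ and is trivially satisfied by any $\mathcal{X}$, in particular by $\mathcal{X}^*_\alpha(t)$.

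Next I would evaluate the cost at $\mathcal{X}^*_\alpha(t)$. The regularization term $\|\mathcal{X} - \mathcal{X}^*_\alpha(t)\|^2$ is zero at this point. The hypothesis $\mathcal{J}_z|_{\mathcal{Z}} \equiv 0$ guarantees that $\mathcal{J}_z(x,\mathcal{X}^*_\alpha(t)) = 0$ whenever $x$ lies on $\mathcal{Z}_{\alpha^*}$; the worked example \eqref{eq:fbl_acc_cost} illustrates this, since $J_y\ddot{q}^*(t) + \dot{J}_y\dot{q}^*(t) = \ddot{y}_2 \equiv 0$ along the zero dynamics. Thus the total cost $\mathcal{J}(x,\mathcal{X}^*_\alpha(t)) = 0$.

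To close the argument, I would note that $\mathcal{J}(x,\mathcal{X})$ is a sum of nonnegative terms (a squared norm plus $\mathcal{J}_z$, which is itself a squared penalty in all natural choices), so $\mathcal{J}(x,\mathcal{X}) \ge 0$ for every feasible $\mathcal{X}$. Since $\mathcal{X}^*_\alpha(t)$ attains this lower bound and the regularization term $\|\mathcal{X} - \mathcal{X}^*_\alpha(t)\|^2$ is strictly convex in $\mathcal{X}$, the minimizer is unique and equals $\mathcal{X}^*_\alpha(t)$. The main subtlety to pin down is the precise interpretation of $\mathcal{J}_z(\mathcal{X})|_{\mathcal{Z}} \equiv 0$: one must check that ``on the zero dynamics'' simultaneously means the state $x$ lies on $\mathcal{Z}_{\alpha^*}$ and that $\mathcal{J}_z$ is being evaluated at the compatible nominal $\mathcal{X}^*_\alpha(t)$, which is exactly the situation certified by Proposition~\ref{prop:collocation_gives_parameters}.
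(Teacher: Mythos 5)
Your proof is correct and follows essentially the same route as the paper's: invoke Proposition~\ref{prop:collocation_gives_parameters} to certify that $\mathcal{X}^*_\alpha(t)$ is feasible (the constraints are ``implicitly satisfied'' on $\phi^*(t)\subset\mathcal{Z}_{\alpha^*}$), observe that both cost terms vanish there, and conclude. You are in fact more explicit than the paper on two points it leaves implicit --- that the CLF inequality collapses to $0\le 0$ because $V_\epsilon$ and its Lie derivatives vanish with $\eta$, and that the lower-bound/uniqueness step tacitly requires $\mathcal{J}_z\ge 0$ (not merely $\mathcal{J}_z|_{\mathcal{Z}}\equiv 0$) together with strict convexity of the regularizer.
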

\begin{proof}
    \propref{prop:collocation_gives_parameters} means that the solution to \eqref{eq:opteqs} lies on the hybrid invariant zero dynamics surface of the corresponding walking gait. Thus, when the robot is on the zero dynamics surface the QP constraints vanish, since they are implicitly satisfied on the solution $\phi^*(t) \subset \mathcal{Z}_{\alpha^*}$ if $\mathbf{w(\alpha^*)}$ is a feasible solution to \eqref{eq:opteqs}. Further, the cost $\mathcal{J}_{z}(\mathcal{X})|_{\mathcal{Z}} \equiv 0$ by definition, and thus the 
optimal control action is given by $\mathcal{X}^* = \mathcal{X}^*_\alpha(t)$. 
\end{proof}

\begin{figure*}[t!]
	\centering
	\includegraphics[width=0.92\textwidth]{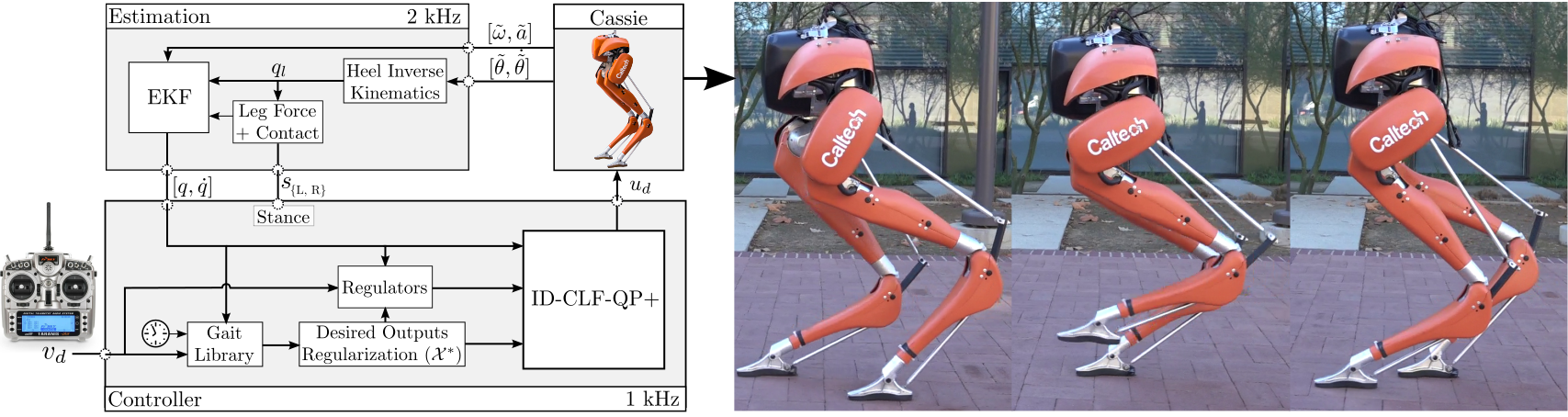}
	\caption{The control and estimation diagram for implementation on Cassie. The estimation and controller run in separate threads on the Intel NUC at $2$ kHz and $1$ kHz, respectively. Also shown are gait tiles for Cassie walking on a sidewalk at Caltech while using a \eqref{eq:id-clf-qp-plus} controller.}
	\label{fig:cassie_clf_forward_tiles}
\end{figure*}

\newsec{Constraint Relaxation.}
Up until this point in the development of \eqref{eq:id-clf-qp} controllers, we have considered only the dynamics and CLF constraints applied to the QP. However, in order to implement these controllers on hardware, we 
require additional constraints for 
For walking robots, these are typically 
torque limits and admissibility conditions on the constraint forces (see \eqref{eq:pyramid_friction} and \eqref{eq:footroll}). 
Due to the presence of these constraints, it is not always feasible for the system to simultaneously satisfy physical constraints and converge according to the CLF bound \cite{galloway2015torque}. The accepted way of dealing with this within the literature \cite{galloway2015torque} is to add a relaxation term, $\delta$, to the convergence constraint with an associated weight, $\rho$. 
Following the development of the relaxed \eqref{eq:clf-qp-delta} constraint, we propose a similar relaxed CLF constraint: 
\begin{align}
    L_F V + L_G V \left(\dot{J}_y\dot{q} + J_y\ddot{q} \right) \leq - \gamma V + \delta.
\end{align}
Because we have introduced a weighted relaxation to the inequality that is minimized in the cost, we can actually 
move the constraint 
into the cost as an exact penalty function \cite{han1979exact}:
\begin{align}
    \mathcal{J}_{\mathrm{\delta}} = \frac{1}{2} \mathcal{X}^T Q(x) \mathcal{X} + c^T(x) \mathcal{X} + \rho || g^+(q,\dot{q},\ddot{q}) ||
\end{align}
where:
\begin{align*}
        g(q,\dot{q},\ddot{q}) &:= L_F V + L_G V \left(\dot{J}_y\dot{q} + J_y\ddot{q} \right) + \gamma V \\
        g^+(q,\dot{q},\ddot{q}) &\triangleq \max(g,0) .
\end{align*}
One of the downsides to using this approach is that the cost term $|| g^+(q,\dot{q},\ddot{q}) ||$ is non-smooth. 
Instead, we can allow $g(q,\dot{q},\ddot{q})$ to go negative, meaning that the controller will always drive convergence even when the inequality \eqref{eq:tradCLFconst} is not triggered. This will lead to a smooth term in the cost, which must be balanced with it's possibly more aggressive control action. We can then remove the CLF convergence inequality from \eqref{eq:id-clf-qp} to obtain a relaxed controller:
\begin{align}
        \mathcal{X}^{\ast}= \argmin_{\mathcal{X}\in \Xext} &\quad  \frac{1}{2}\mathcal{X}^T Q \mathcal{X} + c^T \mathcal{X} + \dot{V}(x,\mathcal{X}) \tag{ID-CLF-QP$^{+}$} \label{eq:id-clf-qp-plus}\\ 
    \textrm{s.t.} \quad &  D(q)\ddot{q} + H(q,\dot{q}) = Bu + J^T(q)\lambda \notag\\
    &  J(q) \ddot{q} + \dot{J}(q,\dot{q})\dot{q} = 0 \notag \\
    & A_E(x) \mathcal{X} = b_e(x)    \notag \\
	& C_I(x) \mathcal{X} \geq d_I(x) \notag
\end{align}
which incentivizes convergence. Further, \textit{whenever it is feasible to do so this problem will render $\dot{V}$ as negative as possible}.

\section{Implementation} \label{sec:implementation}
This section discusses the main experimental result of this paper, and serves to illustrate how the various concepts introduced throughout this work can be combined on hardware. 
Specifically, the motion library developed 
in \secref{sec:trajectory_hzd} on the compliant model of Cassie in \secref{sec:robotmodel} is parameterized and then combined with the \eqref{eq:id-clf-qp} controller of \secref{sec:id-clf}.

There is often an ``artful implementation'' step that translates model-based controllers to a form that can actually implemented on hardware.  Ideally, methods can be developed that allow the exact transcription of model-based methods to hardware in a robust fashion and without heuristics. 
This work serves as a major step in this direction, with the walking shown in this section being the \textit{first successful experimental implementation of a (relaxed) CLF for walking on a 3D biped}.

\subsection{Feedback Controller Development} \label{sec:cassie_clf_controller}
In this section, the feedback controller used on hardware to track the HZD locomotion problem is developed. 
Several notable modifications were made in this section in order to achieve successful walking. 
First, several of the holonomic constraints are removed, 
leaving the ground reaction forces as a decision variable.
Next, the stance 
springs are enforced via a 
soft holonomic constraint.
By adding the spring forces in this way, we can allow the QP to choose the spring torque. 
The springs are then regularized against the measured spring force, with the 
soft constraint 
allowing for non-zero accelerations.

\newsec{Gait Regularization}
The trajectory optimization which was performed in \secref{sec:trajectory_hzd} to obtain a motion library of $171$ individual walking gaits on Cassie was directly used in this section. 
Thus, in order to form a full parameterization:
\begin{align*}
    \mathcal{X}^*_\alpha(t) := \begin{bmatrix} \ddot{q}^*_\alpha(t,\bar{v}^a_{k-1})^T, & 
    u^*_\alpha(t,\bar{v}^a_{k-1})^T, &
    \lambda^*_{c,\alpha}(t,\bar{v}^a_{k-1})^T \end{bmatrix}^T,
\end{align*}
we extract B\'ezier polynomial
coefficients via \eqref{eq:cassie_library_interp} at the average velocity of the previous step, $\bar{v}^a_{k-1}$. 

\newsec{RES-CLF Specification.}
In order to track the virtual constraints in each of the continuous domains, a RES-CLF \eqref{eq:ES_clf_u_class} is found using \eqref{eq:care}. The matrices $Q$ and $R$ are chosen as diagonal matrices, with specific gains given in \tabref{tab:clf_regularization_param}. The solution to \eqref{eq:care}, $P$, is thus a symmetric block matrix:
\begin{align*}
    P = \begin{bmatrix} 
            P_{\mathrm{ud}} & P_{\mathrm{od}} \\
            P_{\mathrm{od}} & P_{\mathrm{ld}}
    \end{bmatrix},
\end{align*}
where $P_{\mathrm{ud}}$, $P_{\mathrm{ld}}$ and $P_{\mathrm{od}}$ are found to be the diagonal matrices:
\begin{align*}
    P_{\mathrm{ud}} &= \mathrm{diag}\left(  795, 683, 137, 880, 15533, 796, 442, 189, 303    \right), \\
    P_{\mathrm{ld}} &= \mathrm{diag}\left(  10.5, 11.3, 8.2, 10.9, 15, 12.5, 10.2, 9.8, 9.4 \right), \\
    P_{\mathrm{od}} &= \mathrm{diag}\left(  60.7, 60.3, 23.4, 56.6, 82.9, 63.7, 43.3, 27, 32.3    \right).
\end{align*}
Using this solution, we select $\epsilon=0.1$ and then construct a RES-CLF according to \eqref{eq:lyap}.

\newsec{Robot Dynamics and Partial Constraint Elimination.}
The holonomic constraint vector given in \secref{sec:robotmodel} contains several constraints which are a function of the internal kinematics of the leg, and do not lend any use in shaping as a decision variable. 
In addition, the measured spring forces on the stance leg are a potential source of model uncertainty, vibration, and numerical stiffness.
To address this, we will differentiate between \textit{hard} and \textit{soft} constraints. Hard constraints, such as the holonomic equality constraint equation \eqref{eq:hol_accel}, 
cannot be violated. Soft constraints, however, are penalized in the cost: 
\begin{align}
    J_s\ddot{q}+ \dot{J}_s\dot{q} = 0 \ \ \Rightarrow \ \ \left|\left|{\begin{bmatrix} J_s & 0 & 0\end{bmatrix}}\mathcal{X} + \dot{J}_s\dot{q} \right|\right|^2 . \label{eq:hol_soft_constraint}
\end{align}
The formulation of holonomic constraints in this way allows for small violations, which is sometimes necessary in practice where systems can be significantly perturbed. 

We therefore partition the constraints, and append the stance spring forces to the soft constraint wrench vector:
\begin{align*}
    \lambda_c := \begin{bmatrix} \lambda_{4\mathrm{bar}} \\ 
                                        \lambda_\text{ns,sp} \end{bmatrix} \in \R^{4},  \quad 
    \lambda_s := \begin{bmatrix} \lambda_\text{sf} \\ \lambda_\text{s,sp} \end{bmatrix} \in \R^{7},
\end{align*}
where $\lambda_c$ denotes constraints which are ``hard'' and $\lambda_s$ denotes the ``soft'' constraints. 
The soft constraint can be weighted to allow violations, and the regularization, $\mathcal{X}^*_\alpha(t)$, is augmented to include the measured spring forces:
\begin{align}
    \lambda^*_{\text{s,sp}} := - K_{\text{sp}} q_{\text{sp}} = - \begin{bmatrix} 2300~ q_{\text{s,sp}} \\ 2000~ q_{\text{s,hs}} \end{bmatrix} .
\end{align}

Instead of computing the hard constraints as variables in the QP, we remove them from the dynamics expressions by forming a linear projection operator \cite{mistry2010inverse,aghili2005unified}, $P_c(q) = I - J_c^\dagger(q) J_c(q)$, 
where $(\cdot)^\dagger$ denotes the pseudoinverse. 
Using this, we can obtain the constrained dynamics \cite{mistry2010inverse}:
\begin{align*}
    D_c(q) \ddot{q} + H_c(q,\dot{q}) \dot{q} = B_c(q) u + J_{s}^T(q) \lambda_{s},
\end{align*}
where the individual terms are:
\begin{align*}
    D_c(q) &= D(q) + P_c(q) D(q) - (P_c(q) D(q))^T, \\
    H(q,\dot{q}) &= P_c(q) H(q,\dot{q}) + D(q) J_c^\dagger(q) \dot{J}_c(q,\dot{q}), \\
    B_c(q) &= P_c(q) B, \\
    J_{c,s}^T(q) &= P_c(q) J_s^T(q).
\end{align*}
This does not significantly complicate the equations of motion, as $P_c(q)$ is simply a function of the internal leg kinematics forming the multi-bar mechanisms on each leg. 

Because the soft constraints are no longer an equality constraint, any cost terms which involve feedback on the generalized accelerations could incentivise their violation. We can instead modify these constraints so that the acceleration component of the cost implicitly satisfies \eqref{eq:orthog_accels}: 
\begin{align}
    \ddot{q}^\perp_y = \left( I - J_s^\dagger(q) J_s(q) \right)\ddot{q} + J_s^\dagger(q) \dot{J}_s(q,\dot{q}) \dot{q}. \label{eq:orthog_accels}
\end{align}
This can be applied to the virtual constraint cost \eqref{eq:fbl_acc_cost}:
\begin{align*}
    \bigg|\bigg| J_y \ddot{q}^\perp_y + \dot{J}_y \dot{q} \bigg|\bigg|^2 
    = \bigg|\bigg| \underbrace{J_y(I - J_s^\dagger J_s)}_{J_y^\perp(q)} \ddot{q} 
    + \underbrace{( \dot{J}_y + J_y J_c^\dagger \dot{J}_s )}_{\dot{J}_y^\perp(q,\dot{q})} \dot{q} \bigg|\bigg|^2,
\end{align*}
and the CLF derivative in the cost for the \eqref{eq:id-clf-qp-plus}: 
\begin{align*}
    \mathcal{J}_{\dot{V}}^\perp(x,\mathcal{X}) := L_G V(x) J_y(q) \ddot{q}^\perp 
    =
    L_G V(x) J_y^\perp(q) \ddot{q}.
\end{align*}

\newsec{Final Controller.}
\begin{table}[b!]
\centering
    \caption{Weights used in the \eqref{eq:id-clf-qp-plus} controller on hardware.}
    \begin{tabular}{|l|l|l|}
        \hline
        Parameter & Value  \\ \hline
        $Q$        &   $[4600, 3640, 390, 4575, 8580, 4056, 1872, 520, \dots$  \\
                   & \hspace{10mm}$\dots 520, 16, 7.3, 1.6, 56, 115, 28.8, 18, 15, 12]$  \\ \hline
        $R$        &   $[0.8, 1, 1.4, 0.7, 0.8, 1, 1, 1.4, 1]$                      \\ \hline
        $w^{\mathrm{reg}}_{\ddot{q}_b}$                 &  $[0.01, 0.01, 0.01, 0.01, 0.01, 0.01]$             \\ \hline
        $w^{\mathrm{reg}}_{\ddot{q}_{l,\mathrm{st}}}$   &  $[0.01, 0.01, 0.01, 0.01, 0.6, 0.01, 0.6, 0.01]$   \\ \hline
        $w^{\mathrm{reg}}_{\ddot{q}_{l,\mathrm{sw}}}$   &  $[0.01, 0.01, 0.01, 0.01, 0.01, 0.01, 0.01, 0.01]$ \\ \hline
        $w^{\mathrm{reg}}_{u_{\mathrm{st}}}$            &  $[1, 0.9, 0.5, 0.1, 1]$                            \\ \hline
        $w^{\mathrm{reg}}_{u_{\mathrm{sw}}}$            &  $[1, 1, 0.9, 0.8, 1]$                              \\ \hline
        $w^{\mathrm{reg}}_{\mathrm{grf}}$               &  $[0.9, 0.1, 1.9, 1.3, 1.3]$                        \\ \hline
        $w^{\mathrm{reg}}_{\text{spring force}}$        &  $1.0$                                              \\ \hline
        $w_{\mathrm{grf}}$         &  $[1, 1, 1, 1.3, 1.3]$ \\ \hline
        $w_{\text{static spring}}$ &  $1.0$                 \\ \hline
        $w_y$                      &  $1.42$                \\ \hline
        $w_{\dot{V}}$              &  $1.40$                \\ \hline
    \end{tabular}
    \label{tab:clf_regularization_param}
\end{table}
The resulting controller that was implemented on hardware was posed in the form of \eqref{eq:id-clf-qp-plus}, with $\mathcal{X} = \left[\ddot{q}^T, u^T, \lambda_{s}^T \right]^T \in \mathbb{R}^{39}$:
\begin{align*}
    \mathcal{X}^{\ast} = \argmin_{\mathcal{X}\in \Xext} \quad |&|A(x) \mathcal{X} - b(x)||^2 + \dot{V}(q,\dot{q},\ddot{q}) \\
    \textrm{s.t.} \quad &  
    D_c(q) \ddot{q} + H_c(q,\dot{q}) \dot{q} = B_c(q) u + J_{c,s}^T(q) \lambda_{s}\\
    & \lambda_s \in \mathcal{AC}_{\mathrm{ss}}(\mathcal{X}) \\
    & u_\mathrm{lb} \leq u \leq u_\mathrm{ub} \\
    & u_{\text{s,ak}} = 0
\end{align*} 
where $u_{\text{s,ak}}$ is the stance ankle torque, which constrained to zero in order to ensure that the QP leaves it passive. 
The cost function is composed of the CLF derivative: 
\begin{align}
    \dot{V}(q,\mathcal{X}) := \begin{bmatrix} L_G V_\epsilon (x) J_y^\perp (q) & 0 & 0 \end{bmatrix} \mathcal{X},
\end{align}
and the least-squares cost terms for regularization and the virtual and soft constraints:
\begin{align*}
    A(x) = \begin{bmatrix}
        w_{\mathrm{reg}} I \\
        w_y J_y^\perp(q) \\
        w_{\lambda} J_s(q)
    \end{bmatrix}, \ b = \begin{bmatrix}
          w_{\mathrm{reg}} \mathcal{X}^*_\alpha(t,\bar{v}^a_{k-1}) \\
          w_y \left( \ddot{y}^d(t) - \dot{J}_y^\perp (q,\dot{q}) \dot{q} \right) \\
        - w_{\lambda} \dot{J}_s(q,\dot{q}) \dot{q}
    \end{bmatrix}. \label{eq:clf_walk_cost}
\end{align*}
The constraint feasibility associated with holonomic foot constraints in \secref{sec:robotmodel} are given as:
\begin{align*}
    \mathcal{AC}_{\mathrm{ss}}(\mathcal{X}) = \begin{bmatrix} 
        \{ \lambda_{\mathrm{sf}}^z, \lambda_{\mathrm{sf}}^z \} \\
        \frac{\mu}{\sqrt{2}} \lambda_{\mathrm{sf}}^z - \{ |\lambda_{\mathrm{sf}}^x|, |\lambda_{\mathrm{sf}}^y| \} \\
        \frac{l_f}{2} \lambda_{\mathrm{sf}}^z - \{ |\lambda_{\mathrm{sf}}^{my}|, |\lambda_{\mathrm{sf}}^{mz}| \} 
    \end{bmatrix} \geq 0, 
\end{align*}
corresponding to the friction pyramid \eqref{eq:pyramid_friction} and foot rollover \eqref{eq:footroll}. 


%
\begin{figure}[t!]
	\centering
	\includegraphics[width=1\columnwidth]{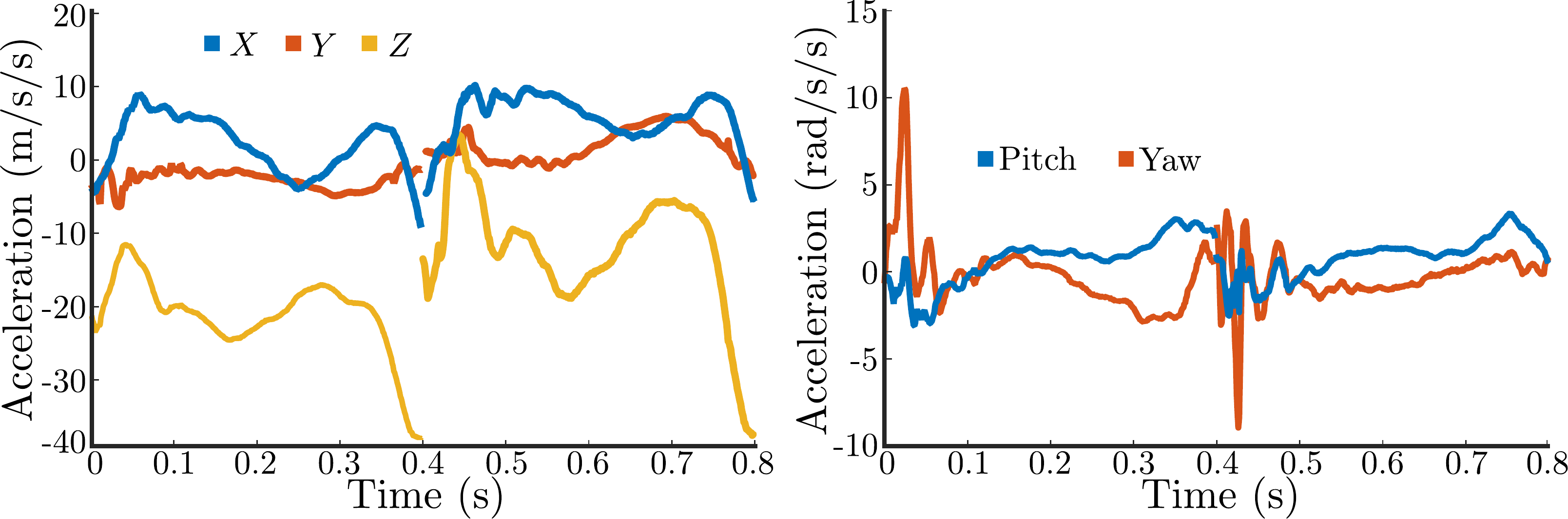}
	\caption{Shown are the acceleration violations of the soft constraint on the stance foot contact over two steps. The largest error is in the vertical direction, which corresponds to our observation that the robot mass has some inaccuracy.}
	\label{fig:cassie_clf_foot_accel_violation}
\end{figure}
%

\subsection{Regulation}
Directly implementing the motion library obtained from \eqref{eq:opteqs} can at best result in a marginally stable locomotion, as it is always operating on the orbit of the current walking speed, and has no ability to track a target walking speed. 
Motivated by this, a regulator is applied to modify the nominal accelerations of the floating base accelerations:
\begin{align}
    \ddot{q}^d_{x,y} = \ddot{q}^*_{x,y} + {k}_{\ddot{q},p} \left(p^a_{x,y}(q) - p^*_{x,y}\right) + 
        k_v (\tilde{v}_{x,y}^{a} - v^d_{x,y})
\end{align}
where ${k}_v^{x,y}=[3, 3]$ and ${k}_{\ddot{q},p}^{x,y}=[2, 2]$.
An additional regulator is used to find an offset to the footstrike location \cite{raibert1984experiments}:
\begin{align*}
    \Delta_{(x,y)} &= \tilde{K}_{p}^{x,y} (\tilde{v}^{a} - v^d) + \tilde{K}_{d}^{x,y} (\tilde{v}^{a} - \bar{v}_{k-1}^a)  \\
    &\hspace{20mm} + k_{i} \int_0^t \gamma (\tilde{v}^{a}(t') - v^d(t')) dt',
\end{align*}
where $\tilde{K}_p^{x,y} = [0.045, 0.0375]$, $\tilde{K}_d^{x,y}=[0.18, 0.21]$, $\tilde{K}_i^{x,y}=[0.06, 0]$, and $\gamma = 0.9995$. We can find the current step velocity as $\tilde{v}_{x,y}^a = \bar{v}_{k-1}^a + \left( v^a_{x,y} - v^*_{x,y} \right)$, where $v^d_{x,y}$ is the target step velocity from the user joystick and $v^a_{x,y}$ is the instantaneous velocity of the robot. 
We then define $\Delta := \left[ \Delta_x, \Delta_y, 0 \right]^T$ and augment the nominal value of the desired outputs as:
\begin{align*}
        y_{sw,ll}^d &= || p_{nsf}^*(y^d) + \Delta ||_2, \\
    y_{lp}^d &= \sin^{-1} \left( \frac{p_x^*(y^d) + \Delta_x}{y_{sw,ll}^d} \right) - y_{b,x}^d(t,\alpha), \\
    y_{lr}^d &= \sin^{-1} \left( \frac{p_y^*(y^d) + \Delta_y}{y_{sw,ll}^d} \right) - y_{b,y}^d(t,\alpha),
\end{align*}
where $p_{nsf}^*(y^d) = ( p_{nsf,x}^*, p_{nsf,y}^*, p_{nsf,z}^* )$ are the nominal Cartesian swing foot positions.
Finally, to help overcome energy loss from disturbances, an additional regulator (adapted from \cite{rezazadeh2015spring}) applies a corrective force to the leg length:
\begin{align*}
    \Delta F_l &= - k_{p,F} y_{2,\mathrm{sll}}(t,q) - k_{d,F} \dot{y}_{2,\mathrm{sll}}(t,q,\dot{q}) \\
    &\hspace{10mm} + 
    \begin{cases}
        \left[ k_{\lambda,1} v^d_x + k_{\lambda,2} (v^d_x - \bar{v}_{x,k-1}^a) \right] \frac{x}{x_0} & \text{if} \ x\leq 0 \\
        \left[ k_{\lambda,3} v^d_x + k_{\lambda,4} (v^d_x - \bar{v}_{x,k-1}^a) \right] \frac{x}{x_0} & \text{if} \ x> 0
    \end{cases}
\end{align*}
where $x$ is the forward position within the stride, ${k}_{D,L}, {k}_{D,L}=[1800, 22]$, and $[k_{\lambda,1}, k_{\lambda,2}, k_{\lambda,3}, k_{\lambda,4}]=[0.025, 0.04, 0.0075, 0]$. The QP is then regularized with the $\lambda^* = \lambda_\alpha^*(t) + R_l(q) \Delta F_l$ where $R_l(q)$ is a rotation matrix which maps the axial force into an Cartesian force at the foot.

\begin{figure}[t!]
	\centering
	\includegraphics[width=1\columnwidth]{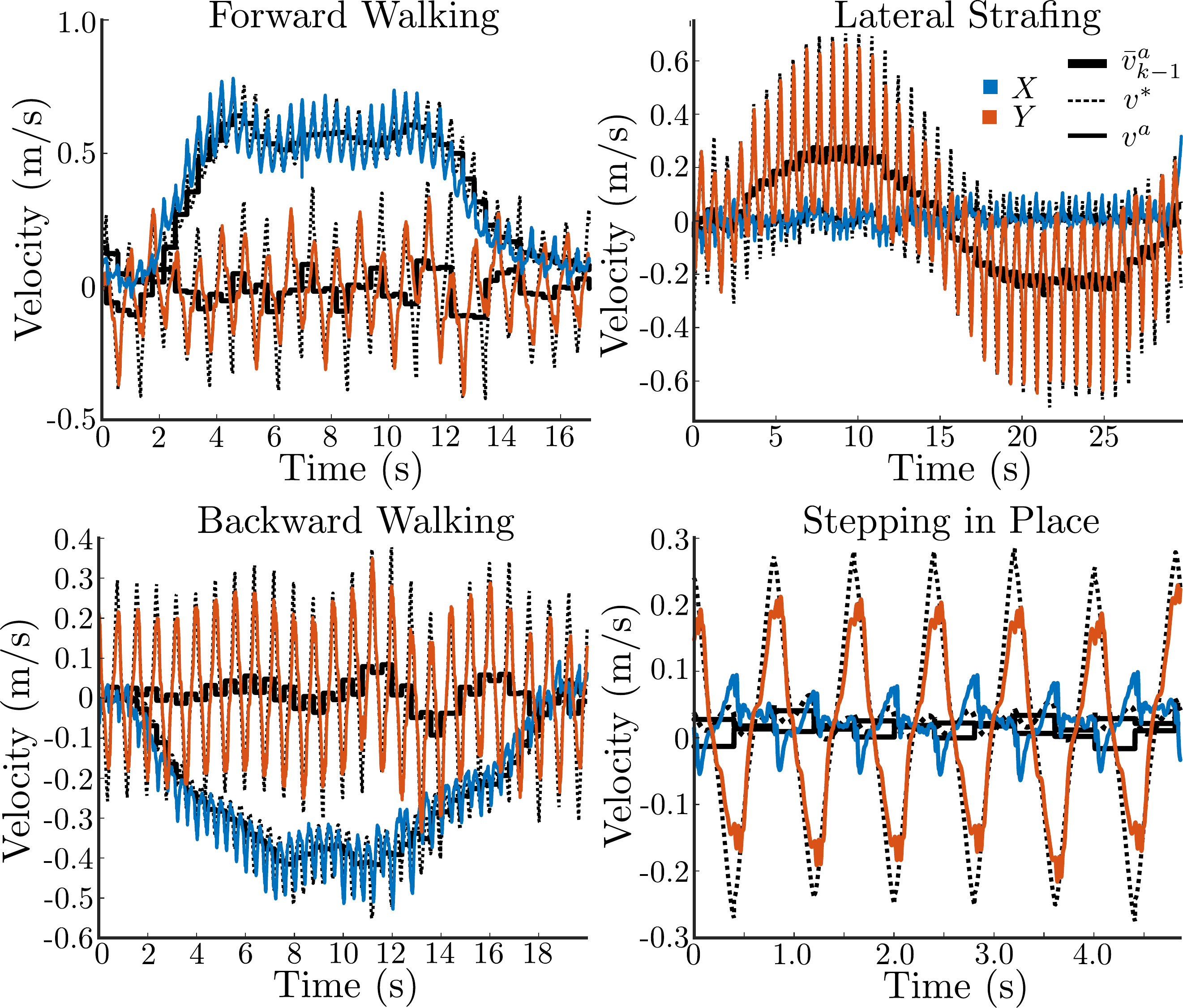}
	\caption{Shown is a comparison of the desired velocities from the current gait for the motion library, $\bar{v}_{k-1}^a$, with the actual velocity for different behaviors.}
	\label{fig:cassie_clf_velocity_tracking}
\end{figure}
\begin{figure}[t!]
	\centering
	\includegraphics[width=1\columnwidth]{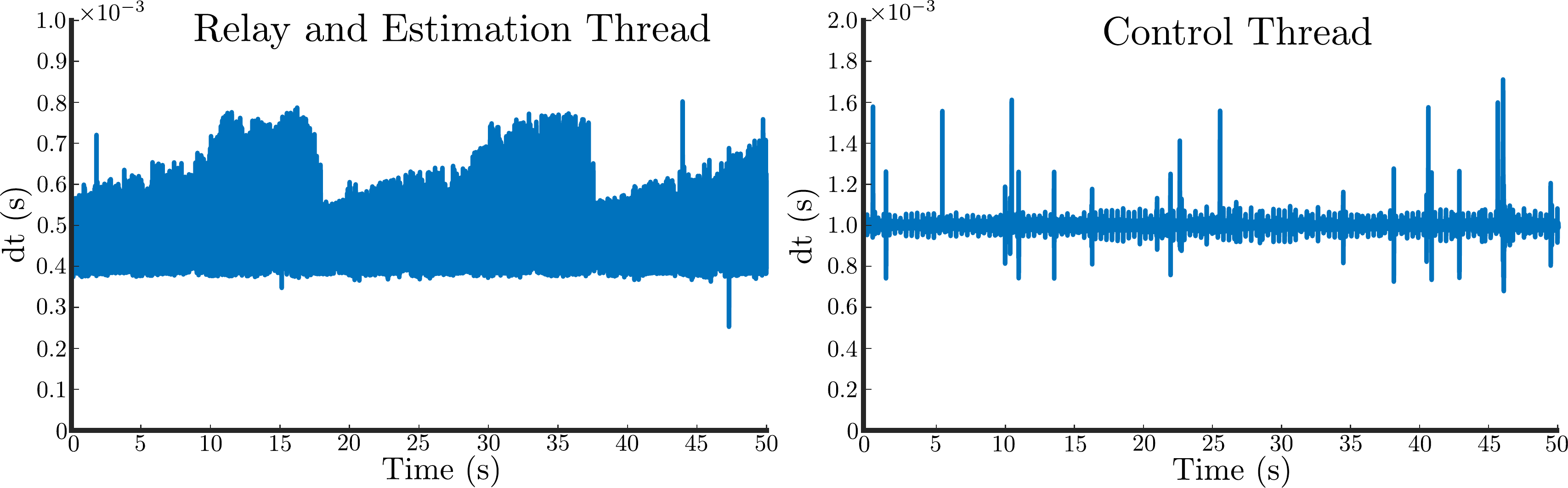}
	\caption{Thread timing for the estimation and communication relay node ($2$ kHz) and feedback control node ($1$ kHz) for $50$s of walking on hardware.}
	\label{fig:cassie_clf_dt}
\end{figure}
%


\begin{figure*}[t!]
	\centering
	\includegraphics[width=1\textwidth]{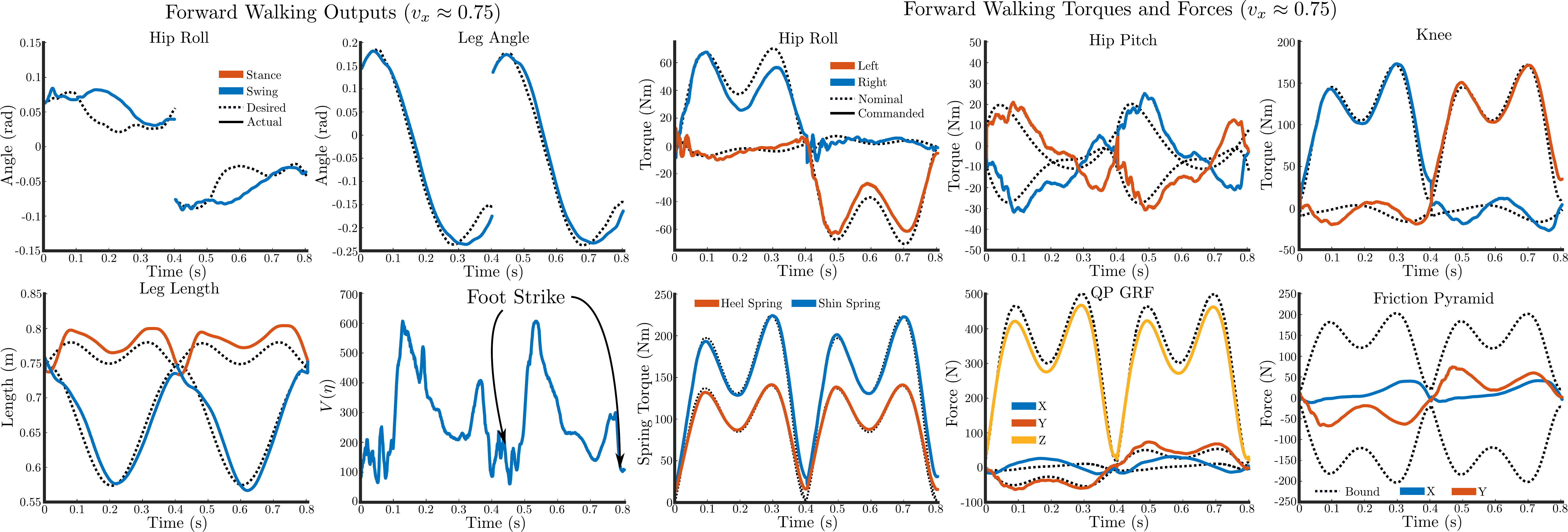}
	\caption{(Left) Output tracking for forward walking on Cassie, with the CLF evolution over two steps also shown. (Right) Inputs selected by the ID-CLF-QP which are applied to Cassie for forward walking. Pictured are torques and contact forces, along with the stance spring forces and friction pyramid constraint.}
	\label{fig:cassie_clf_forwardwalk_outputs}
	\vspace{-3mm}
\end{figure*}
\begin{figure}[t!]
	\centering
	\includegraphics[width=0.95\columnwidth]{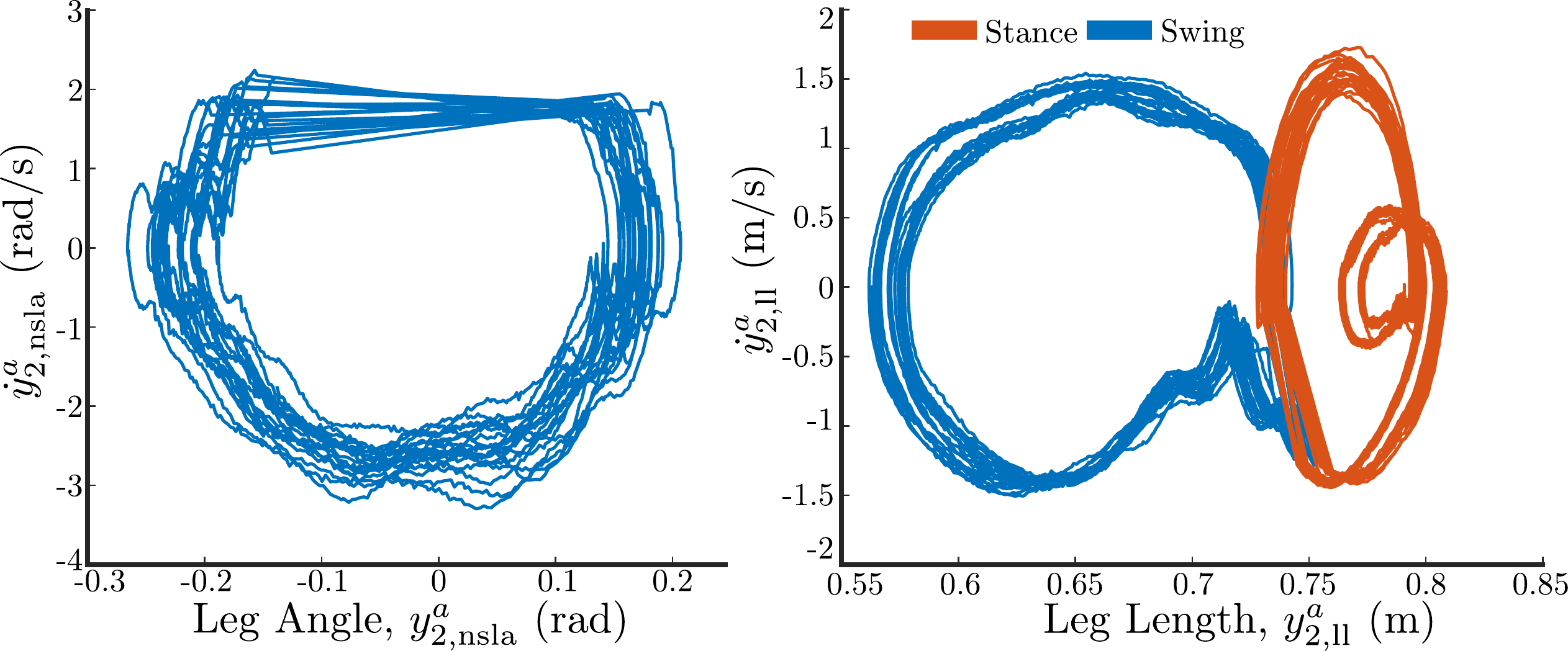}
	\caption{Phase portrait of the leg angle and leg length outputs while walking.}
	\label{fig:cassie_limit}
	\vspace{-3mm}
\end{figure}

\subsection{Software Architecture}
The software for feedback control and was implemented in C++ on the Intel NUC computer in the Cassie torso, and runs on two ROS nodes: one which communicates to the Simulink xPC over UDP to relay torques and sensor data and to perform estimation, and a second which runs the controllers. 
The first node runs at $2$ kHz and executes contact classification, inverse kinematics to obtain the heel spring deflection, and an EKF for velocity estimation \cite{bloesch2013state,reher2019dynamic}.
The second node runs at $1$ kHz and executes the \eqref{eq:id-clf-qp-plus} controller using qpOASES. 
The evaluation time during walking is shown in \figref{fig:cassie_clf_dt}, and the general control concept is illustrated in \figref{fig:cassie_clf_forward_tiles}.

\section{Results} \label{sec:results}
\begin{figure}[t!]
	\centering
	\includegraphics[width=1\columnwidth]{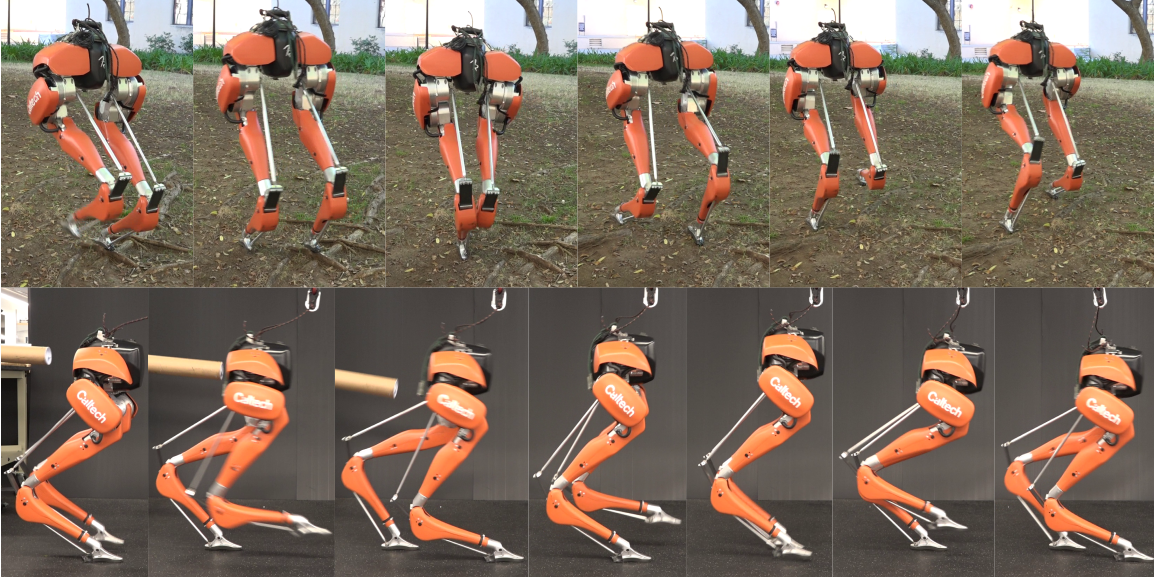}\\
	\vspace{1mm}
	\includegraphics[width=1\columnwidth]{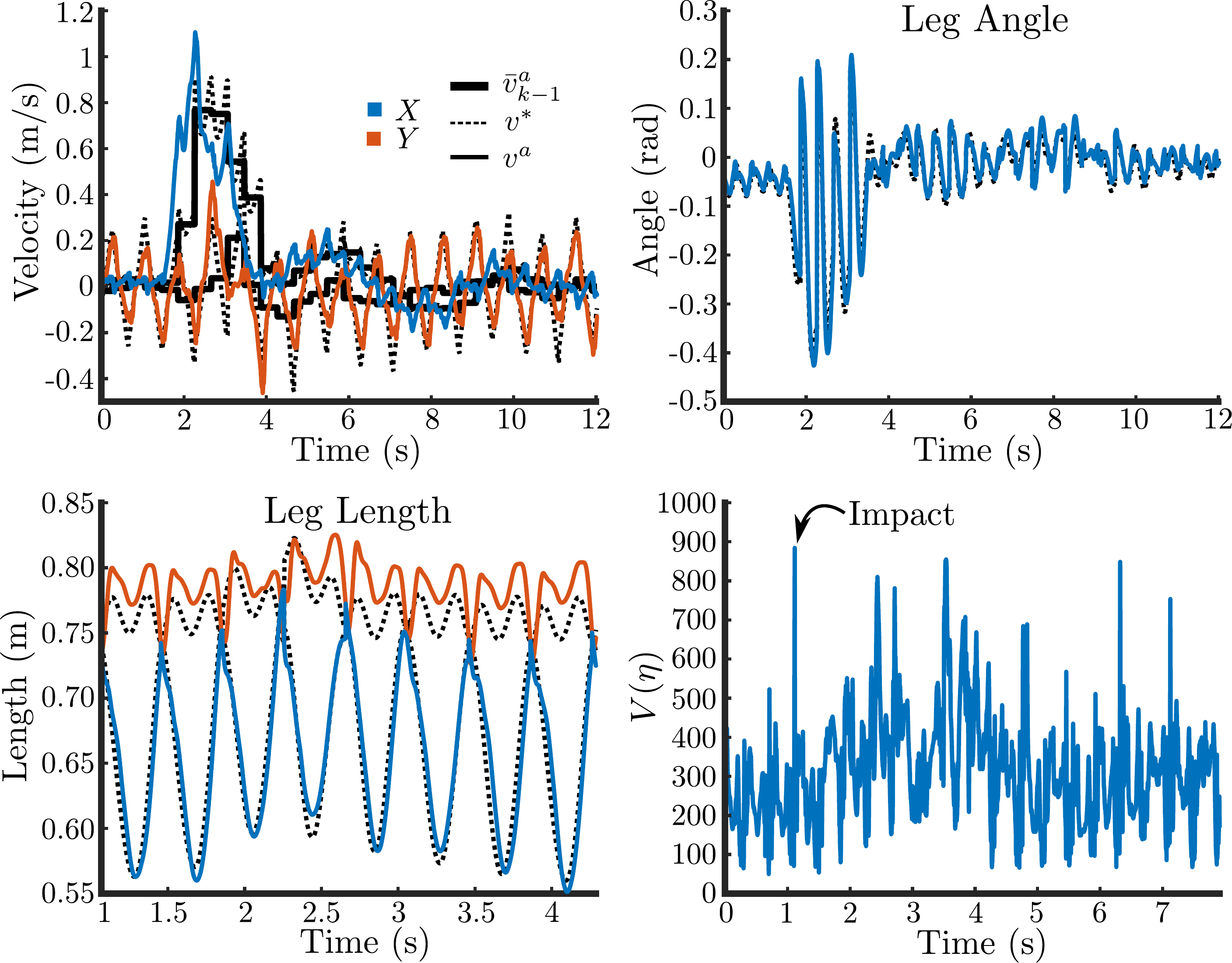}
	\caption{Perturbed walking on Cassie, with the robot walking over a series of roots outdoors and being subjected to a push from behind. Plotted on the bottom is the velocity, outputs, and CLF values during the push.}
	\label{fig:cassie_clf_disturbance_tiles}
\end{figure}
\begin{figure*}[t!]
	\centering
	\includegraphics[width=0.94\textwidth]{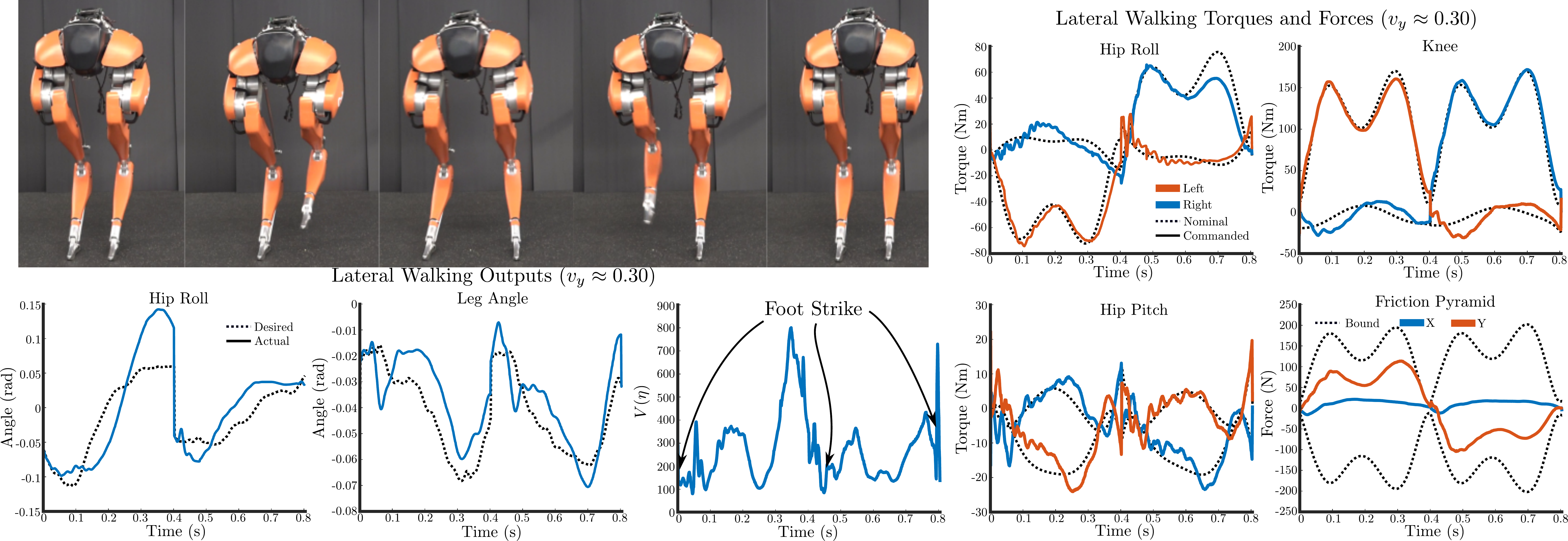}
	\caption{
	(Left) Output tracking for lateral walking on Cassie, with gait tiles and the CLF evolution over two steps also shown. The period-$2$ nature of the walking is apparent in the reference polynomials, while the Lyapunov function is shown to have a significant relaxation at the end of the first step.  (Right) Inputs selected by the controller which are applied to Cassie. Pictured are torques and contact forces, along with the stance spring forces and friction pyramid. }
	\label{fig:cassie_clf_sidewalk}
	\vspace{-3mm}
\end{figure*}
The results presented in this section demonstrate stable walking on Cassie over a variety of terrains, for walking in all directions, and when subjected to a push disturbance while stepping in place. 
Overall, these results illustrate the \textit{ability of the \eqref{eq:id-clf-qp} control framework to stabilize underactuated HZD locomotion while accounting for passive compliance}. 
Walking was first performed indoors for each of the behaviors to isolate segments of walking data to plot. The robot was then taken outdoors where it walked on sidewalks and over rough terrain. A video of these experiments is provided \cite{cassieclf}, with gait tiles shown in \figref{fig:cassie_clf_forward_tiles}. 
The velocity tracking for each of the motions is shown in \figref{fig:cassie_clf_velocity_tracking}, where the dashed line is the nominal value from the optimization and the solid line is the estimated value obtained from an EKF running onboard. 

The virtual constraint tracking for forward walking is depicted on the left in \figref{fig:cassie_clf_forwardwalk_outputs}, where reasonable tracking performance is shown. The CLF value is also provided, which approaches it's largest value near mid-step but converges back to a low value near the end of the step and prior to impact. In addition, the commanded torques and constraint forces for the same steps during a forward walking motion are given on the right in \figref{fig:cassie_clf_forwardwalk_outputs}. In each of the torque and force plots, we can see that the nominal value from optimization closely matches. The torques have some minor oscillations near impact when the velocities of the system jump, but are otherwise smooth. Finally, the friction pyramid is shown with the bound becoming very small near impact due to the low vertical contact force. 

The same output, CLF, force, and torque plots for lateral walking are provided in \figref{fig:cassie_clf_sidewalk} along with gait tiles of the corresponding motion. These plots closely resemble the results that were discussed for the forward walking case, with the primary difference being the period-$2$ orbital nature of the lateral walking. However, the lateral walking plots show that the friction pyramid is triggered for a portion of the first step in the data that was plotted. This is particularly interesting because we can see that when the friction pyramid constraint is triggered, the CLF value rises as it under-performs with tracking. 

The robot also performed well under disturbances, both in the form of terrain and pushes. Motion tiles for Cassie walking outdoors over a system of roots and while being pushed aggressively are shown in \figref{fig:cassie_clf_disturbance_tiles} along with the corresponding velocity, outputs, and CLF values. The push drives the robot forward to approximately $1$ m/s, after which the robot recovers to a near-zero speed in roughly $5$ steps. 

These results demonstrate the first successful experimental walking with CLFs on a 3D biped that the authors are aware of to date. The accuracy of the robot model and synthesized motion library allowed the control implementation to use a regularization term as described in \propref{prop:regularization_id-clf}. The use of this regularization, combined with the relaxed form of the \eqref{eq:id-clf-qp-plus} allowed for smooth torques and force references which would have been significantly more complex to implement on a \eqref{eq:clf-qp-delta} controller.

\section{Conclusion}
In conclusion, this paper has presented an approach to modeling, trajectory planning and parameterization, and real-time control for compliant walking on a Cassie biped at a variety of walking speeds. 
An HZD based controller was developed in \secref{sec:trajectory_hzd} for Cassie to synthesize closed-loop locomotion plans while leveraging the passive compliance in the robot model to generate trajectories that accurately reflect how the true compliant robotic system would evolve on hardware.  
The resulting optimization results form a compliant HZD motion library that leverages its full-body dynamics, including passive compliance, and also encodes nominal information with respect to accelerations and contact forces. 
A feedback control approach which couples convergence constraints from control Lyapunov functions with desirable formulations from task-based inverse dynamics control and quadratic programming approaches is then shown, along with a theoretical analysis demonstrating several useful properties of the approach for tuning and implementation. Further, the stability of the controller for HZD locomotion is proven.
This was extended to a relaxed version of the CLF controller, which removes a convergence inequality constraint in lieu of a conservative CLF cost within a quadratic program to achieve tracking. 
Finally, this end-to-end approach to locomotion was implemented on hardware. 
The results indicate that the control method, when combined with the HZD motion library and parameterized optimization results, led to smooth input torques and feasible 
GRF on hardware with reliable output tracking. 


%
%

\section*{Acknowledgment}
This research was supported under NSF Grant Numbers 1544332, 1724457, 1724464 and Disney Research LA. The authors would like to thank Claudia Kann for her help in the early discussions of the ID-CLF-QP approach, and Wen-Loong Ma and Noel Csomay-Shanklin for experimental assistance.

\ifCLASSOPTIONcaptionsoff
  \newpage
\fi


\bibliographystyle{plain}
\bibliography{IEEEabrv, references.bib}

%





\end{document}